\documentclass{article}
\usepackage[main, final]{neurips_2025}

\usepackage{amsmath}
\usepackage{amssymb}
\usepackage{mathtools}
\usepackage{amsthm}
\usepackage{algorithm}
\usepackage{algorithmicx}
\usepackage{algpseudocode}
\usepackage{wrapfig}
\usepackage[T1]{fontenc}    
\usepackage[utf8]{inputenc} 
\usepackage[T1]{fontenc}    
\usepackage{hyperref}       
\usepackage{xurl}

\usepackage{booktabs}       
\usepackage{amsfonts}       
\usepackage{nicefrac}       
\usepackage{microtype}      
\usepackage{xcolor}         
\usepackage[page, header]{appendix}
\usepackage{graphicx}
\usepackage{caption}
\usepackage{subcaption}
\usepackage{titletoc}
\usepackage[capitalize,noabbrev]{cleveref}
\usepackage{soul}
\newtheorem{theorem}{Theorem}[section]
\newtheorem{lemma}[theorem]{Lemma}

\newtheorem{definition}[theorem]{Definition}
\newtheorem{assumption}[theorem]{Assumption}
\newtheorem{remark}[theorem]{Remark}
\usepackage{multirow}

\usepackage[utf8]{inputenc} 
\usepackage[T1]{fontenc}    
\usepackage{hyperref}       
\usepackage{url}            
\usepackage{booktabs}       
\usepackage{amsfonts}       
\usepackage{nicefrac}       
\usepackage{microtype}      

\urlstyle{same}
\DeclareUrlCommand\url{\color{cyan}}

\usepackage{tikz}
\usetikzlibrary{fit,calc}

\newcommand{\boxitb}[2]{\myboxb{#1}{#2}{0.788}}
\newcommand\myboxb[3]{%
    \tikz[remember picture,overlay] \node (A) {};\ignorespaces%
    \tikz[remember picture,overlay]{\node[yshift=2.20pt,fill=#1,opacity=.25,fit={($(A)+(-2.65,0.15\baselineskip)$)($(A)+(#3\linewidth,-{#2}\baselineskip - 0.15\baselineskip)$)}] {};}\ignorespaces}

 \newcommand{\boxitf}[2]{\myboxf{#1}{#2}{0.9516}}
\newcommand\myboxf[3]{%
    \tikz[remember picture,overlay] \node (A) {};\ignorespaces%
    \tikz[remember picture,overlay]{\node[yshift=2.95pt,fill=#1,opacity=.25,fit={($(A)+(-0.45,0.15\baselineskip)$)($(A)+(#3\linewidth,-{#2}\baselineskip - 0.15\baselineskip)$)}] {};}\ignorespaces}

\newcommand{\boxitff}[2]{\myboxff{#1}{#2}{0.1497}}
\newcommand\myboxff[3]{%
    \tikz[remember picture,overlay] \node (A) {};\ignorespaces%
    \tikz[remember picture,overlay]{\node[yshift=2.5pt,fill=#1,opacity=.25,fit={($(A)+(-11.30,0.15\baselineskip)$)($(A)+(#3\linewidth,-{#2}\baselineskip - 0.15\baselineskip)$)}] {};}\ignorespaces}

\usepackage{algpseudocode}

\title{Real-DRL: Teach and Learn in Reality}

\author{%
  Yanbing Mao${\thanks{Equal contribution.}}$ \\
  Engineering Technology Division\\
  Wayne State University\\
  Detroit, MI 48202 \\
  \texttt{hm9062@wayne.edu} \\
   \And
   Yihao Cai$^{{*}}$ \\
   Department of Electrical and Computer Engineering\\
   Wayne State University\\
  Detroit, MI 48202 \\
   \texttt{yihao.cai@wayne.edu} \\
 \AND
 Lui Sha \\
Siebel School of Computing and Data Science \\
University of Illinois Urbana-Champaign\\
Urbana, IL 61801 \\ 
   \texttt{lrs@illinois.edu} 
}

\begin{document}

\maketitle

\begin{abstract}
This paper introduces the Real-DRL framework for safety-critical autonomous systems, enabling \ul{\textbf{runtime learning}} of a deep reinforcement learning (DRL) agent to develop safe and high-performance action policies in real plants (i.e., real physical systems to be controlled),  while prioritizing safety! The Real-DRL consists of three interactive components: a DRL-Student, a PHY-Teacher, and a Trigger. The DRL-Student is a DRL agent that innovates in the dual self-learning and teaching-to-learn paradigm and the real-time safety-informed batch sampling. On the other hand, PHY-Teacher is a physics-model-based design of action policies that focuses solely on safety-critical functions. PHY-Teacher is novel in its real-time patch for two key missions: i) fostering the teaching-to-learn paradigm for DRL-Student and ii) backing up the safety of real plants. The Trigger manages the interaction between the DRL-Student and the PHY-Teacher. Powered by the three interactive components, the Real-DRL can effectively address safety challenges that arise from the unknown unknowns and the Sim2Real gap. Additionally, Real-DRL notably features i) assured safety, ii) automatic hierarchy learning (i.e., safety-first learning and then high-performance learning), and iii) safety-informed batch sampling to address the learning experience imbalance caused by corner cases. Experiments with a real quadruped robot, a quadruped robot in NVIDIA Isaac Gym, and a cart-pole system, along with comparisons and ablation studies, demonstrate the Real-DRL's effectiveness and unique features.
\end{abstract}

\section{Introduction}
Deep reinforcement learning (DRL) has been incorporated into many autonomous systems and has shown significant advancements in making sequential and complex decisions in various fields, such as autonomous driving \cite{kendall2019learning,kiran2021deep} and robot locomotion \cite{ibarz2021train,levine2016end}. However, the AI incident database\footnote{AIID: \url{https://incidentdatabase.ai}} has revealed that machine learning (ML) techniques, including DRL, can achieve remarkable performance but without a safety guarantee \cite{brief2021ai}. Therefore, ensuring high-performance DRL with verifiable safety is more crucial than ever, aligning with the growing market demand for safe ML techniques.

\subsection{Safety Challenges} \label{ssop}
Two significant challenges emerge when developing high-performance DRL with verifiable safety.

\noindent \textbf{Challenge 1: Unknown Unknowns.} The dynamics of learning-enabled autonomous systems are governed by conjunctive known known (e.g., Newtonian physics), known unknowns (e.g., Gaussian noise without identified mean and variance), and unknown unknowns. One significant example of an unknown unknown arises from DNN's colossal parameter space, intractable activation, and random factors. Moreover, many autonomous systems, such as quadruped robots \cite{di2018dynamic} and autonomous vehicles \cite{rajamani2011vehicle}, have system-environments interaction dynamics. Unpredictable and complex environments, such as freezing rain \cite{unf2}, can introduce a new type of unknown unknowns. Safety assurance also requires the resilience to unknown unknowns since they are hard-to-simulate and hard-to-predict, and often lack sufficient historical data for scientific modeling and analysis.

\noindent \textbf{Challenge 2: Sim2Real Gap.} The prevalent DRL strategies involve training a policy within a simulator and deploying it to a physical platform. However, the difference between the simulated environment and the real world creates the Sim2Real gap. This gap causes a significant drop in performance when using pre-trained DRL in a novel real environment. Numerous approaches have been developed to address the Sim2Real gap \cite{Peng_2018, nagabandi2018learning, tan2018sim, yu2017preparing,cloudedge,imai2022vision, du2021autotuned,vuong2019pick, yang2022safe}. These methods aim to improve the realism of the simulator and can mitigate the Sim2Real gap to varying degrees. However, undisclosed gaps continue to hinder the safety assurance of real plants. 

\subsection{Related Work} \label{sare}
Significant efforts have been devoted to addressing safety concerns related to DRL, detailed below. 

\textbf{Safe DRL.}  Since the reward guides DRL's learning of action policies, one research focus of safe DRL is the safety-embedded reward to develop a high-performance action policy with verifiable safety. The control Lyapunov function (CLF) proposed in \cite{perkins2002lyapunov, berkenkamp2017safe,chang2021stabilizing, zhao2023stable} is a candidate. Meanwhile, the seminal work in \cite{westenbroek2022lyapunov} revealed that a CLF-like reward could enable DRL with verifiable stability. Enabling verifiable safety is achievable by extending CLF-like rewards with given safety conditions. However, the systematic guidance for constructing such CLF-like rewards remains open.  The residual action policy is another shift in safe DRL, which integrates data-driven action policy and physics-model-based action policy. The existing residual diagrams focus on stability guarantee \cite{rana2021bayesian,li2022equipping,cheng2019control,johannink2019residual}, with the exception being \cite{cheng2019end} on safety guarantee. However, the physics models considered are nonlinear and intractable, which thwarts delivering a verifiable safety guarantee, if not impossible.  The recent Phy-DRL (physics-regulated DRL) framework \cite{Phydrl1, Phydrl2} can satisfactorily address the open problems. It exhibits verifiable safety, but it is only theoretically possible due to the underlying assumption of the manageable Sim2Real gap and unknown unknowns. 

\textbf{Fault-Tolerant DRL}. It is another direction for enhancing safety assurance of DRL in real plants. Recent approaches include neural Simplex \cite{phan2020neural}, runtime assurance \cite{brat2023runtime,sifakis2023trustworthy,chen2022runtime}, and runtime learning machine \cite{runtimemao,cai2025runtime} for deterministic safety definition, and model predictive shielding \cite{bastani2021safe, banerjee2024dynamic} for probabilistic safety definition. They treat the DRL agent as a high-performance but black-box module that operates in parallel with a high-assurance module. The high-assurance module is a physics-model-based controller, assumed to have assured safety to tolerate the DRL faults. However, this assumption becomes problematic in situations where there are significant model mismatches, particularly due to unknown unknowns such as dynamic and unpredictable operating environments.

\begin{figure}[http]
\vspace{-0.00cm}
\begin{center}
\includegraphics[width=0.95\textwidth]{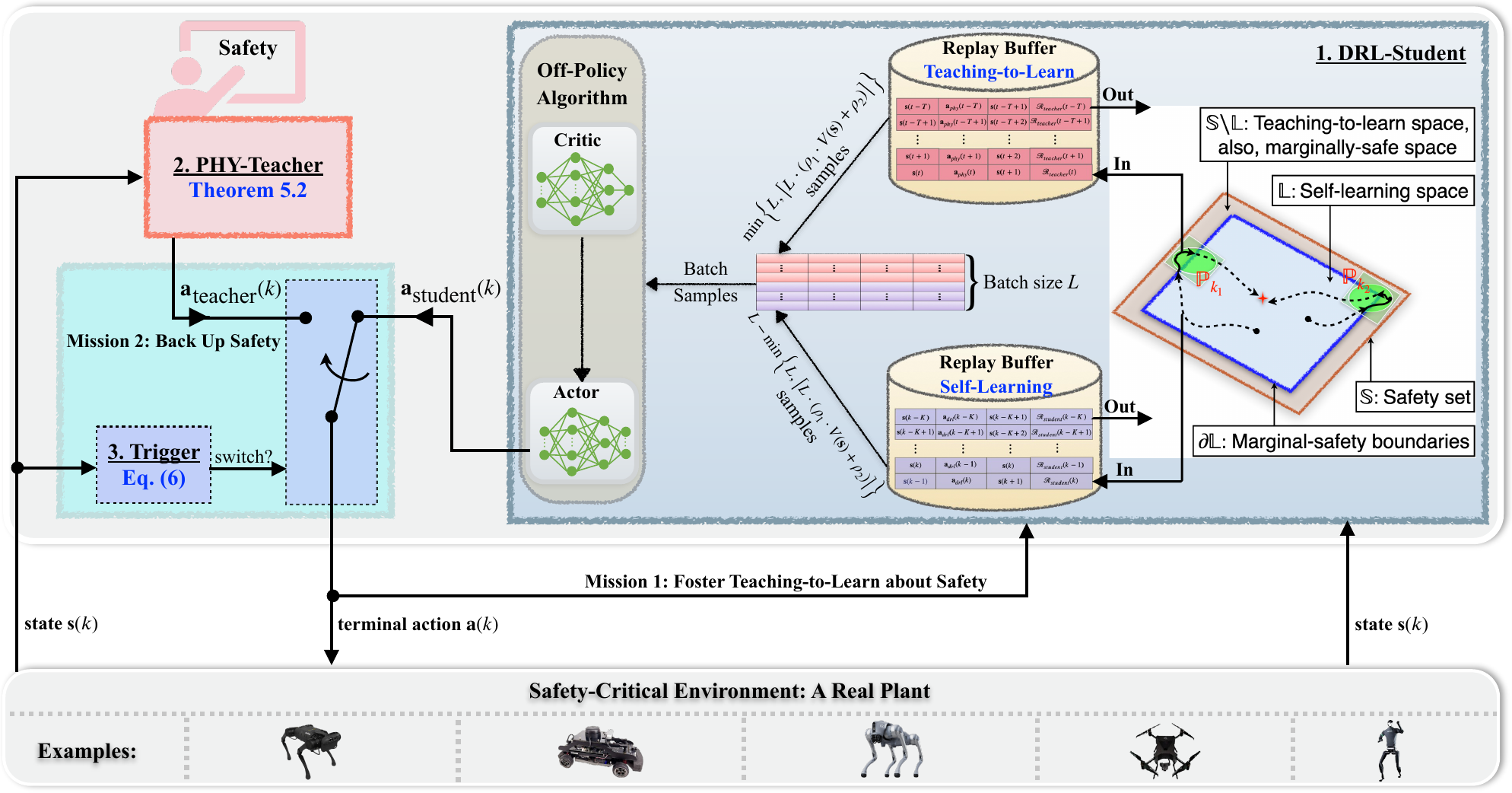}
  \end{center}
  \vspace{-0.10cm}
\caption{Real-DRL framework, which is also formally described in \cref{realdrl} of \cref{pesudocoderealdrl}. }
  \label{fig:enter-label}
\end{figure}

\subsection{Contribution: Real-DRL Framework: From Theory to Implementation}\label{cocondre}
To address Challenges 1 and 2, we propose the \textbf{Real-DRL} framework, which enables the runtime
learning of a DRL agent to develop safe and high-performance action policies for safety-critical autonomous systems in real physical environments, while prioritizing safety. As illustrated in \cref{fig:enter-label}, Real-DRL, building on Simplex principles \cite{sha2001using}, has three interactive components: 
\begin{itemize}
\vspace{-0.20cm}
\item \textbf{DRL-Student} is a DRL agent that can learn from scratch. It is innovative in the dual self-learning and teaching-to-learn paradigm and the safety-informed batch sampling to efficiently develop a safe and high-performance action policy in real plants.
\vspace{-0.05cm} \item \textbf{PHY-Teacher} is a physics-model-based design with assured safety, focusing on safety-critical functions only. It aims at fostering the teaching-to-learn mechanism for DRL-Student and backing up the safety of real plants, in the face of Challenges 1 and 2.
\vspace{-0.05cm} \item \textbf{Trigger} is responsible for monitoring the real-time safety status of the real plant, facilitating interactions between DRL-Student and PHY-Teacher. 
\vspace{-0.15cm}
\end{itemize}
Powered by the three interactive components, the Real-DRL notably features i) assured safety in the face of Challenges 1 and 2, ii) automatic hierarchy learning (i.e., safety-first learning and then high-performance learning), and iii) safety-informed batch sampling to address the experience imbalance caused by corner cases. 

We present the preliminaries in \cref{prepre} and elaborate on the Real-DRL design in \cref{student,teacher,triggercom}. The experimental results and implementation details of Real-DRL are provided in \cref{finalnote,exppjus}.

\section{Preliminaries} \label{prepre}
\subsection{Notation} \label{pre1}
In this paper, we use $\mathbf{P} \succ 0$ to represent matrix $\mathbf{P}$ is positive definite. $\mathbb{R}^{n}$ is the set of $n$-dimensional real vectors, while $\mathbb{N}$ is the set of natural numbers. The superscript `$\top$' indicates matrix transposition. We define $\mathbf{0}_{m}$ and $\mathbf{1}_{m}$ as the $m$-dimensional vectors with entries of all zeros and ones, respectively,  and $\mathbf{I}_{m}$ as the $m$-dimensional identity matrix. The $\mathbf{c} > \mathbf{0}_{n}$ means all entries of $\mathbf{c} \in \mathbb{R}^{n}$ are strictly positive. Lastly, the $\textbf{diag}\{\mathbf{c}\}$ is diagonal matrix whose diagonal entries are vector $\mathbf{c}$'s elements.

\subsection{Definitions} \label{pre3}
The dynamics of safety-critical autonomous systems can be described by the following equation:
\begin{align}
\mathbf{s}(t+1) = \mathbf{f}(\mathbf{s}(t), \mathbf{a}(t)), ~~t \in \mathbb{N}  \label{realsys}
\end{align}
where $\mathbf{s}(t) \in \mathbb{R}^{n}$ represents the real-time state of the system sampled by sensors at time $t$, while $\mathbf{a}(t) \in \mathbb{R}^{m}$ signifies the real-time action command at time $t$. The equilibrium point of the system is $\mathbf{s}^{*} = \mathbf{0}_{n}$. For a safety-critical system \eqref{realsys}, there exist the sets that system states and actions must always adhere to:
\begin{align}
\text{Safety set:}&~~{\mathbb{S}} \triangleq \left\{ {\left. \mathbf{s} \in {\mathbb{R}^n} \right|-\mathbf{c} < {\mathbf{C}} \cdot \mathbf{s}  < \mathbf{c}},~~\text{with}~\mathbf{c} > \mathbf{0}_p  \right\}, \label{aset2}\\
\text{Admissible action set:} &~~{\mathbb{A}} \triangleq \left\{ {\left. {\mathbf{a} \in {\mathbb{R}^m}} ~\right|~-\mathbf{d} < {\mathbf{D}} \cdot \mathbf{a} < \mathbf{d},~~\text{with}~\mathbf{d} > \mathbf{0}_q}\right\}, \label{org}  
\end{align}
where $\mathbf{C}$ and $\mathbf{c}$ are provided in advance to describe $p \in \mathbb{N}$ safety conditions, while $\mathbf{D}$ and $\mathbf{d}$ are used to describe $q \in \mathbb{N}$ conditions on admissible action set. The inequalities presented in \cref{aset2} are sufficiently generic to describe many safety conditions. Examples include yaw control for quadruped robots to prevent collisions \cite{fu2022coupling}, and lane keeping and slip regulation in autonomous vehicles to follow traffic regulation and avoid slipping and skidding \cite{rajamani2011vehicle,mao2023sL1}.

In our Real-DRL, the DRL-Student will engage in  dual self-learning and teaching-to-learn paradigm to develop a safe and high-performance action policy. To better describe this aspect, we introduce: 
\begin{align} 
\text{Self-learning space:}&~~\mathbb{L} \triangleq \left\{ {\left. \mathbf{s} \in {\mathbb{R}^n} \right|-\eta \cdot \mathbf{c} < {\mathbf{C}} \cdot \mathbf{s}  < \eta \cdot \mathbf{c}},~\text{with}~0 < \eta < 1  \right\}. \label{aset3}
\end{align}

\begin{remark}[\textbf{Buffer Zone}]
When system states arrive at a safety boundary of safety set $\mathbb{S}$, the action policy of PHY-Teacher usually cannot immediately drive them back to the safety set $\mathbb{S}$ without delay. These delays arise from multiple factors, such as computation time, communication latency, sampling periods, and actuator response times. This consideration motivates the self-learning space $\mathbb{L}$ \eqref{aset3}. Specifically, the condition $0 < \eta < 1$ in conjunction with \cref{aset2} and \cref{aset3} indicates that $\mathbb{L} \subset \mathbb{S}$, which  creates a non-empty buffer zone, i.e.,  $\mathbb{S} \setminus \mathbb{L}$.
The Trigger will reference boundary $\partial\mathbb{L}$ (not $\partial\mathbb{S}$) to trigger PHY-Teacher for assuring safety, providing adequate margins for response time and decision latency.
\end{remark} 
Building on the foundation \eqref{aset3}, we introduce the concept of the safe action policy of DRL-Student:
\begin{definition}[\textbf{Safe Action Policy of DRL-Student}]
Consider the self-learning space $\mathbb{L}$ \eqref{aset3}, the admissible action set $\mathbb{A}$ \eqref{org}, and the real plant \eqref{realsys}. The action policy of the DRL-Student, denoted as $\pi_{\text{student}}(\cdot): \mathbb{R}^{n} \rightarrow \mathbb{R}^{m}$, is considered safe if, under its control, the system states satisfy the condition: Given $\mathbf{s}(1) \in \mathbb{L}$, the $\mathbf{s}(t) \in \mathbb{L}$ and the $\mathbf{a}_{\text{student}}(t) = \pi_{\text{student}}(\mathbf{s}(t)) \in \mathbb{A}$ hold for all time $t \in \mathbb{N}$.
\label{hpsafe}
\end{definition}

In our Real-DRL framework, PHY-Teacher's action policy is designed to have assured safety only. We introduce the concept of ``assured safety" below, which will guide the design of PHY-Teacher. 
\begin{definition} [\textbf{Safe Action Policy of PHY-Teacher}]
Consider the safety set $\mathbb{S}$ \eqref{aset2}, the action set $\mathbb{A}$ \eqref{org}, the self-learning space $\mathbb{L}$ \eqref{aset3}, the real plant \eqref{realsys}, and denote:
\begin{align} 
\text{Activation Period of PHY-Teacher:} ~~\mathbb{T}_{k} \triangleq \{k+1, ~k+2, ~\ldots, ~k+\tau_{k}\}, \label{tp}
\end{align}
where $\tau_{k} \in \mathbb{N}$. The action policy of PHY-Teacher, denoted as $\pi_{\text{teacher}}(\cdot): \mathbb{R}^{n} \rightarrow \mathbb{R}^{m}$, is said to have assured safety, if $\mathbf{s}(k) \in \mathbb{S}  \setminus  \mathbb{L}$, then i) 
the $\mathbf{s}(t) \in \mathbb{S}$ holds for all time $t \in \mathbb{T}_{k}$, ii) the $\mathbf{a}_{\text{teacher}}(t) = \pi_{\text{teacher}}(\mathbf{s}(t)) \in \mathbb{A}$ holds for all time $t \in \mathbb{T}_{k}$, and iii)
the $\mathbf{s}(k+\tau_{k}) \in \mathbb{L}$ holds. \label{csafe}
\end{definition}

By \cref{csafe}, the PHY-Teacher is deemed trustworthy for guiding the DRL-Student in safety learning and backing up the safety of real plant, only if its action policy simultaneously satisfies the corresponding conditions: i) it must ensure the real plant's real-time states consistently adhere to the safety conditions within $\mathbb{S}$; ii) it must keep its real-time actions within an admissible action set $\mathbb{A}$; and iii) it must ensure the real plant's states return to the self-learning space $\mathbb{L}$ as activation period ends. 

With the definitions established, we next detail the design of Real-DRL’s three interactive components in \cref{triggercom,teacher,student}, respectively.

\section{Real-DRL: Trigger Component}\label{triggercom}
Real-DRL's Trigger facilitates interactions between the DRL-Student and PHY-Teacher by monitoring the triggering condition outlined below.
\begin{align}
\text{Triggering condition of PHY-Teacher:}~~~ \mathbf{s}(k-1) \in  \mathbb{L}~\text{and}~ \mathbf{s}(k) \in  \partial\mathbb{L},
\label{trigger}
\end{align}
where $\partial\mathbb{L}$ denotes the boundaries of the self-learning space $\mathbb{L}$ \eqref{aset3}. The condition \eqref{trigger}, in conjunction with the $\mathbb{T}_k$ defined in \eqref{tp}, can describe the conditional activation periods of PHY-Teacher and DRL-Student as follows: 
\begin{align}
\begin{cases}
		\text{PHY-Teacher is active for time $t \in \mathbb{T}_{k}$}, &\text{if condition}~\eqref{trigger}~\text{holds at time $k$},\\ 
        \text{DRL-Student is active for time $t \notin \mathbb{T}_{k}$},      &\text{otherwise.}
	\end{cases} \label{actionperiod}
\end{align}
Furthermore, the logic for managing actions applied to the real plant is as follows:
\begin{align}
\mathbf{a}(t) \leftarrow \begin{cases}
		\mathbf{a}_{\text{teacher}}(t), &\text{if condition}~\eqref{trigger}~\text{holds at time $k$}~\text{and}~t \in \mathbb{T}_{k},\\ 
        \mathbf{a}_{\text{student}}(t),      &\text{otherwise},
	\end{cases} \label{terminalaction}
\end{align}
where $\mathbf{a}(t)$ represents the terminal real-time action applied to the real plant, as illustrated in \cref{fig:enter-label}.

The interactions between the DRL-Student and PHY-Teacher, as facilitated by the Trigger, can be summarized from \cref{trigger,actionperiod,terminalaction} as follows: When the states of the plant controlled by the DRL-Student approach a marginal-safety boundary (i.e., they are leaving the safe self-learning space $\mathbb{L}$), the Trigger prompts the PHY-Teacher to instruct the DRL-Student on safety learning and back up the safety of real plant. Once system states return to the (safe) self-learning space $\mathbb{L}$, the DRL-Student regains the control of real plant. Next, we detail the design of DRL-Student and PHY-Teacher.

\section{Real-DRL: DRL-Student Component}\label{student}
As shown in \cref{fig:enter-label}, DRL-Student adopts an \textit{actor-critic} architecture \cite{lillicrap2015continuous, haarnoja2018soft} with dual replay buffers. DRL-Student operates within a dual self-learning and teaching-to-learn paradigm to develop safe and high-performance action policies in real plants. Specifically, it learns safety from the PHY-Teacher's experience of backing up safety acquired in the teaching-to-learn space, denoted as $\mathbb{S} \setminus \mathbb{L}$. At the same time, it learns to achieve high task-performance from its own (purely data-driven) experience in the self-learning space, represented as $\mathbb{L}$. Meanwhile, DRL-Student employs our proposed safety-informed batch sampling to effectively 1) address the experience imbalance (caused by corner cases) throughout learning, and 2) promote the dual self-learning and teaching-to-learn paradigm. Next, we present the novel design of DRL-Student.

\subsection{Self-Learning Mechanism} \label{drlsl1110}
The self-learning objective is to optimize an action policy $\mathbf{a}_{\text{student}}(k) = \pi_{\text{student}}(\mathbf{s}(k))$ which maximizes the expected return from the initial state distribution:
\begin{align}
{\mathcal{Q}^{\pi_{\text{student}}}}( {\mathbf{s}(k), \mathbf{a}_{\text{student}}(k)})  = {\mathbf{E}_{\mathbf{s}(k) \sim \mathbb{L}}}\left[ {\sum\limits_{t = k}^\infty {{\gamma^{t-k}} \cdot \mathcal{R}\left(\mathbf{s}(t), \mathbf{a}_{\text{student}}(t), \mathbf{s}(t\!+\!1) \right)}} \right]\!, \label{bellman}
\end{align}
where $\mathbb{L}$ is the self-learning space defined in \cref{aset3}, $\mathcal{R}(\cdot, \cdot, \cdot)$ is the reward function that maps the state-action-next-state triples to real-value rewards, and $\gamma \in [0,1]$ is the discount factor. The expected return (\ref{bellman}) and action policy $\pi_{\text{student}}(\cdot)$ are parameterized by the critic and actor networks, respectively. The reward function is designed to enhance task-oriented performance through runtime learning. One such example is minimizing travel time and power consumption in the robot navigation. 

Referring to \cref{fig:enter-label}, we can summarize that during the activation period of DRL-Student, its interactions with real plants generate the purely data-driven experience data as
\begin{align}
\text{Self-learning experience:} \left[~ \mathbf{s}(t) ~\mid~ \mathbf{a}_{\text{student}}(t) ~\mid~ \mathbf{s}(t\!+\!1) ~\mid~ \mathcal{R}(\mathbf{s}(t), \mathbf{a}_{\text{student}}(t),\mathbf{s}(t\!+\!1))~\right],\label{sldata}
\end{align}
which is stored in the self-learning replay buffer, and then batch sampled to update the critic and actor networks using the off-policy algorithm, such as deep deterministic policy gradient (DDPG) \cite{lillicrap2015continuous}, twin delayed DDPG \cite{fujimoto2018addressing} or soft actor-critic \cite{haarnoja2018soft}. We thus conclude that the self-learning mechanism is the learning from data-driven experience through the trial-and-error in self-learning space $\mathbb{L}$ \eqref{aset3}. This mechanism is also formally described by \cref{sl2,sl3,sl3a,sl4,sl5} in \cref{realdrl} of \cref{pesudocoderealdrl}.

\subsection{Teaching-to-Learn Mechanism} \label{drlsl1111}
The designed PHY-Teacher has successful (physics-model-based) experience — not grounded in data-driven trial-and-error — in backing up the safety of real plants. As shown in \cref{fig:enter-label}, this experience is acquired in the teaching-to-learn (or marginally-safe) space $\mathbb{S} \setminus \mathbb{L}$, where the DRL-Student is deactivated due to safety violations that occur as a result of its actions' failure in assuring safety. If PHY-Teacher shares its successful experiences in backing up  the safety of real plants, then DRL-Student can learn about safety from these experiences. This process is known as the teaching-to-learn mechanism. Referring to \cref{fig:enter-label}, the teaching-to-learn mechanism can be summarized as that during the activation period of PHY-Teacher, it gains its experience data in backing up safety as 
  \begin{align} 
\!\!\text{Teaching-to-learn experience:} \left[~ \mathbf{s}(t) \!~~\mid~~\! \mathbf{a}_{\text{teacher}}(t) \!~~\mid~~\! \mathbf{s}(t\!+\!1) \!~~\mid~~\!\mathcal{R}(\mathbf{s}(t), \mathbf{a}_{\text{teacher}}(t),\mathbf{s}(t\!+\!1))~\right], \label{tldata1}
\end{align}
where $\mathbf{a}_{\text{teacher}}(t)$ is computed according to PHY-Teacher's action policy \eqref{hacteacherpolicy} and its design is detailed in \cref{thm10007p}. The experience data \eqref{tldata1} is stored in the teaching-to-learn replay buffer, and is then batch sampled to update the critic and actor networks using the off-policy algorithm. This teaching-to-learn mechanism is also formally described by \cref{tl8,tl9,tl10,tl11,tl12} in \cref{realdrl} of \cref{pesudocoderealdrl}.

\subsection{Safety-Informed Batch Sampling}
This section proposes a batch sampling approach to facilitate the dual self-learning and teaching-to-learn paradigm. The benefits of the proposed approach go beyond achieving this paradigm. Before proceeding, we recall a real problem that the incidents of learning-enabled autonomous systems, such as self-driving cars, often occur in corner cases \cite{Phydrl1cos, Waymo00, bogdoll2021description}. As illustrated in \cref{fig:enter-label}, the marginally-safe (or teaching-to-learn) space can represent the space of corner cases, where a slight disturbance or fault can take a system out of control. Such marginal samples are safety-critical and are not often and hard visited normally, creating an experience imbalance for learning. Intuitively, focusing the learning on such corner-case experience will enable a more robust action policy. However, the current DRL frameworks usually adopt random sampling \cite{rana2021bayesian,li2022equipping,cheng2019control,johannink2019residual,cheng2019end,sasso2023posterior}, which cannot guarantee sufficient sampling of such samples, and thus cannot overcome the experience imbalance problem. To address the problem, we propose the safety-informed batch sampling for DRL-Student's learning, which also facilitates the dual self-learning and teaching-to-learn paradigm. To explain the sampling mechanism, we first introduce:
\begin{align}
\text{Safety-status indicator:}~~~V(\mathbf{s}) \triangleq \mathbf{s}^{\top} \cdot \mathbf{P} \cdot \mathbf{s}, \label{ssind} 
\end{align}
where $\mathbf{P}$ is computed according to the following: 
\begin{align}
\!\!\mathbf{P} = \arg\min_{\widetilde{\mathbf{P}} \succ 0 }\{ {\log \det({\widetilde{\mathbf{P}}})}\}, ~\text{subject to}~\mathbf{I}_{p} - (\mathbf{C} \cdot \textbf{diag}^{-1}\{\mathbf{c}\}) \cdot \widetilde{\mathbf{P}}^{-1} \cdot (\mathbf{C} \cdot \textbf{diag}^{-1}\{\mathbf{c}\})^\top \succ 0, \label{ssind2} 
\end{align}
with $\mathbf{C}$ and $\mathbf{c}$ given in \cref{aset2} for defining the safety set $\mathbb{S}$. We can derive the following property related to safety-status indicator, whose proof is given in \cref{thmdrlpf}. 
\begin{theorem} Consider the safety set $\mathbb{S}$ \eqref{aset2} and the function $V(\mathbf{s})$ \eqref{ssind} with its matrix $\mathbf{P}$ computed in \cref{ssind2}. The ellipsoid set $\left\{ {\left. \mathbf{s} \in {\mathbb{R}^n} ~\right|} \right. V(\mathbf{s}) <  1\} \subseteq \mathbb{S}$ and it has the maximum volume. 
\label{thmdrl}
\end{theorem}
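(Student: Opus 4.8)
The plan is to prove the two assertions of the theorem separately: the set inclusion $\{\mathbf s : V(\mathbf s) < 1\} \subseteq \mathbb S$, and the volume-maximality of this ellipsoid. It is convenient first to reparameterize the program \eqref{ssind2} via $\mathbf Q \triangleq \widetilde{\mathbf P}^{-1} \succ 0$: the objective becomes $-\log\det\mathbf Q$, which is convex, and the constraint $\mathbf I_p - \bar{\mathbf C}\,\mathbf Q\,\bar{\mathbf C}^\top \succ 0$ is linear in $\mathbf Q$, where I write $\bar{\mathbf C}$ for the matrix whose $i$-th row is $\mathbf C_i/c_i$ (the matrix appearing in the constraint of \eqref{ssind2}, with $\mathbf C_i$ the $i$-th row of $\mathbf C$). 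Hence \eqref{ssind2} is a convex program; it is strictly feasible (take $\mathbf Q = \varepsilon\mathbf I_n$ with $\varepsilon > 0$ small) and, since $\mathbb S$ is bounded — equivalently $\mathbf C$ has full column rank, so $\bar{\mathbf C}\mathbf Q\bar{\mathbf C}^\top \prec \mathbf I_p$ keeps $\mathbf Q$ bounded while $-\log\det\mathbf Q \to +\infty$ as $\mathbf Q$ approaches singularity — the minimizer $\mathbf P$ exists and is well-defined.

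For the inclusion, I would extract the $i$-th diagonal entry of the positive definite matrix $\mathbf I_p - \bar{\mathbf C}\,\mathbf P^{-1}\bar{\mathbf C}^\top$, which gives $1 - \mathbf C_i\mathbf P^{-1}\mathbf C_i^\top/c_i^2 > 0$, i.e. $\mathbf C_i\mathbf P^{-1}\mathbf C_i^\top < c_i^2$ for every $i \in \{1,\dots,p\}$. Then, for any $\mathbf s$ with $V(\mathbf s) = \mathbf s^\top\mathbf P\mathbf s < 1$, writing $\mathbf C_i\mathbf s = (\mathbf C_i\mathbf P^{-1/2})(\mathbf P^{1/2}\mathbf s)$ and applying the Cauchy--Schwarz inequality yields $|\mathbf C_i\mathbf s|^2 \le (\mathbf C_i\mathbf P^{-1}\mathbf C_i^\top)(\mathbf s^\top\mathbf P\mathbf s) < c_i^2$, hence $-c_i < \mathbf C_i\mathbf s < c_i$; since this holds for all $i$, we get $-\mathbf c < \mathbf C\mathbf s < \mathbf c$, i.e. $\mathbf s \in \mathbb S$. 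This proves $\{\mathbf s : V(\mathbf s) < 1\} \subseteq \mathbb S$. The same single-facet computation shows that any ellipsoid $\{\mathbf s : \mathbf s^\top\widetilde{\mathbf P}\mathbf s < 1\}$ with $\widetilde{\mathbf P}$ feasible for \eqref{ssind2} is contained in $\mathbb S$, so the entire feasible family consists of inscribed ellipsoids.

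For volume-maximality, the linear change of coordinates $\mathbf x = \widetilde{\mathbf P}^{1/2}\mathbf s$ maps $\{\mathbf s : \mathbf s^\top\widetilde{\mathbf P}\mathbf s < 1\}$ onto the open Euclidean unit ball with constant Jacobian $(\det\widetilde{\mathbf P})^{1/2}$, so the ellipsoid has volume $\kappa_n(\det\widetilde{\mathbf P})^{-1/2} = \kappa_n\exp\!\big(-\tfrac12\log\det\widetilde{\mathbf P}\big)$, where $\kappa_n$ is the volume of the unit ball in $\mathbb R^n$. As $t \mapsto \kappa_n e^{-t/2}$ is strictly decreasing, minimizing $\log\det\widetilde{\mathbf P}$ over the feasible set is exactly maximizing the corresponding ellipsoid's volume over that set; since $\mathbf P$ is this minimizer and, by the inclusion step, every feasible ellipsoid lies in $\mathbb S$, the set $\{\mathbf s : V(\mathbf s) < 1\}$ is the maximum-volume ellipsoid of this family, completing the argument.

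The step I expect to be the main obstacle — or at least the only one carrying real content — is the reduction of ``ellipsoid $\subseteq$ polytope'' to the per-facet quadratic bounds $\mathbf C_i\mathbf P^{-1}\mathbf C_i^\top \le c_i^2$, together with the observation that these bounds are precisely the diagonal entries of the LMI in \eqref{ssind2} (equivalently, evaluating the support function of the ellipsoid in each direction $\mathbf C_i$); the determinant--volume identity and the monotonicity of $\log$ are routine. A secondary subtlety worth a line is that the strict LMI is sufficient but not necessary for containment, so ``maximum volume'' here should be understood as maximum over the ellipsoid family cut out by the feasible set of \eqref{ssind2} — a standard convex inner description of the inscribed ellipsoids — rather than literally the John ellipsoid of $\mathbb S$.
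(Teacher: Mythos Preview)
Your proposal is correct and follows essentially the same two-step approach as the paper: establish the inclusion $\{\mathbf s: V(\mathbf s)<1\}\subseteq\mathbb S$ from the LMI constraint, then invoke the volume--determinant relationship to conclude maximality. The only noteworthy difference is that you prove the inclusion directly via Cauchy--Schwarz (essentially inlining what the paper cites as \cref{safelemmaorg}), and you add welcome rigor on well-posedness of the program and on the caveat that ``maximum volume'' is over the feasible family of \eqref{ssind2} rather than over all inscribed ellipsoids---points the paper leaves implicit.
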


The proposed mechanism of safety-informed batch sampling is based on \cref{thmdrl} and the safety-status indicator in \cref{ssind}. As shown in
\cref{fig:enter-label}, the sampling mechanism is to generate the batch samples to update critic and actor networks, which, thus,  can be summarized as:
\begin{align}
\underbrace{L}_{\text{Batch size}} \!=\! \underbrace{\min\left\{L, \left\lceil L \cdot (\rho_1 \cdot V(\mathbf{s}(t)) + \rho_2) \right\rceil \right\}}_{\text{Number of samples from teaching-to-learn reply buffer}} \!+ ~\underbrace{L - \min\left\{L, \left\lceil L \cdot (\rho_1 \cdot V(\mathbf{s}(t)) + \rho_2) \right\rceil \right\}}_{\text{Number of samples from self-learning reply buffer}},\label{bsma}
\end{align}
where $\rho_1 > 0$ and $\rho_2 \ge 0$ are given hyperparameters.

\begin{wrapfigure}{r}{0.51\textwidth}
\vspace{-0.61cm}
  \begin{center}
\includegraphics[width=0.51\textwidth]{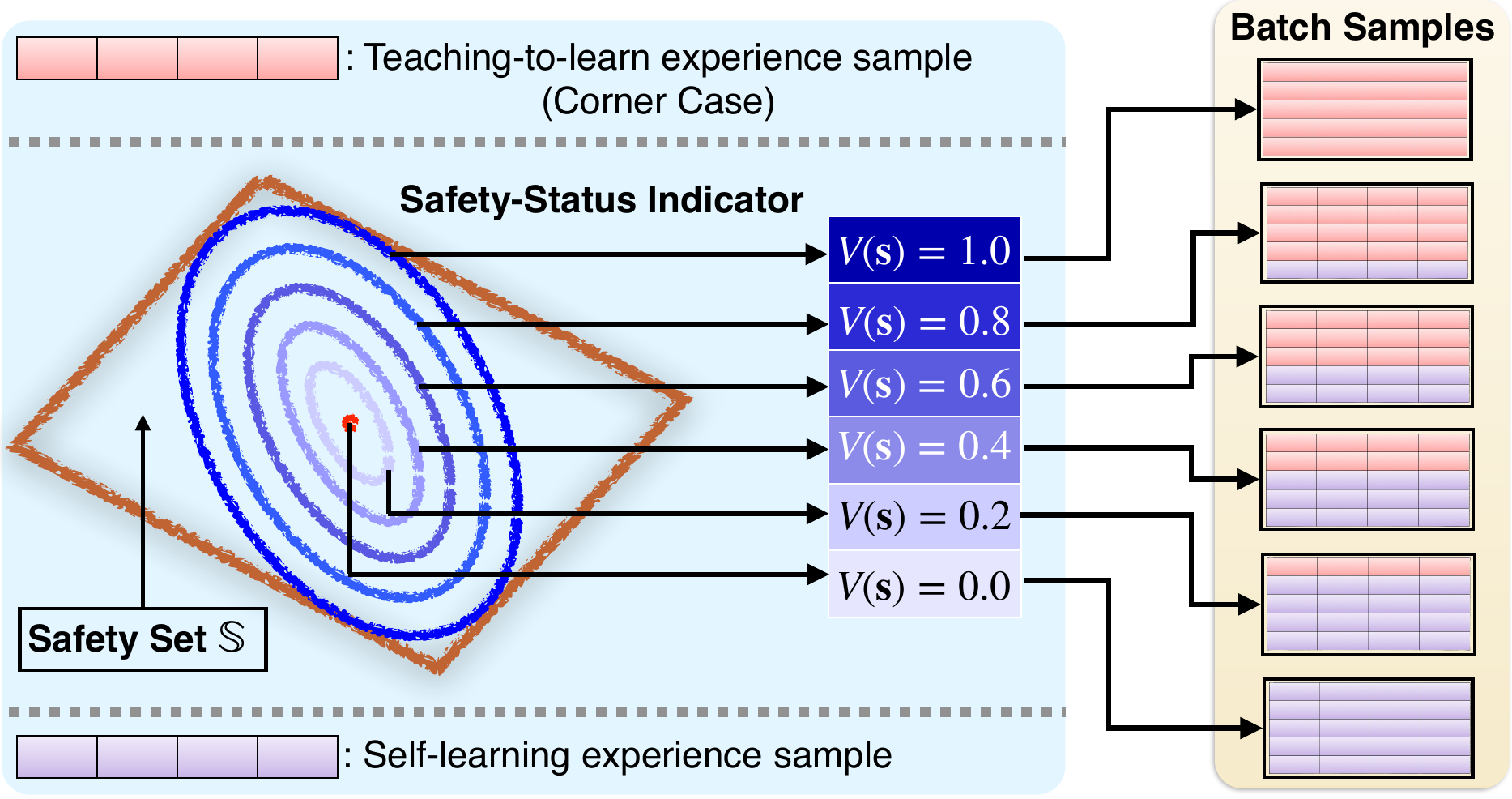}
  \end{center}
  \vspace{-0.1cm}
  \caption{An illustration example of safety-informed batch sampling, given batch size $L = 5$, $\rho_1 = 1$, and $\rho_2 = 0$.}
  \vspace{-0.3cm}
  \label{illbs}
\end{wrapfigure}
An illustration example of the proposed batch sampling is provided in \cref{illbs}. Along with \cref{thmdrl} and \cref{bsma}, it reveals two key insights of DRL-Student's learning: 
\begin{itemize}
\vspace{-0.25cm}
\item The batch comprises experience samples from the teaching-to-learn and self-learning replay buffers, with their contribution ratios depending on real-time safety status due to $V(\mathbf{s}(t))$. Consequently, the DRL-Student participates in the dual self-learning and teaching-to-learn paradigm, discussed in \cref{drlsl1110,drlsl1111}. 
\vspace{-0.05cm}
\item The contribution ratio of teaching-to-learn (also, corner-case) experience samples is increasing with respect to the real-time safety status $V(\mathbf{s}(t))$, thereby addressing the experience imbalance. For example, by \cref{thmdrl} and illustrated in \cref{illbs}, an extreme value of $V(\mathbf{s}(t)) = 1$ suggests that the real-time states are approaching safety boundaries. Consequently, according to \cref{bsma}, all batch samples are from the marginally-safe space. This enables the DRL-Student to concentrate on learning from corner-case experience samples (stored in teaching-to-learn replay buffer) about safety, when the real plant is only marginally safe. 
\end{itemize}

\section{Real-DRL: PHY-Teacher Component}\label{teacher}
PHY-Teacher is a physics-model-based design to have assured safety by \cref{csafe}. One aim of PHY-Teacher is to tolerate real-time unknown unknowns (which result Sim2Real gap) to assure safety. We note many autonomous systems, such as quadruped robots \cite{di2018dynamic} and autonomous vehicles \cite{rajamani2011vehicle}, have system-environments interaction dynamics. So, the real-time unknown unknowns (e.g., unpredictable weather conditions and complex wild terrains) can have a significant influence on system dynamics. To tolerate such unknowns, we develop the real-time patch as for PHY-Teacher. Its design is detailed below.

The proposed real-time patch first defines the state space in which PHY-Teacher's action policy has assured safety according to \cref{csafe}. 
\begin{align}
\text{Real-time patch:}&~{\mathbb{P}_{k}} \triangleq \left\{ {\left. \mathbf{s} \in {\mathbb{R}^n} \right|-\theta \cdot \mathbf{c} < {\mathbf{C}} \cdot (\mathbf{s} - \mathbf{s}_{k}^*)  < \theta \cdot \mathbf{c}},~\text{with}~0 < \theta < 1  \right\}, \label{aset4}
\end{align}
where $\mathbf{s}_{k}^*$ is the patch center, and also PHY-Teacher's real-time control goal, which is defined below. 
\begin{align}
\mathbf{s}_{k}^* &\triangleq \chi \cdot \mathbf{s}(k), ~\text{where}~\mathbf{s}(k) ~\text{satisfies the triggering condition \eqref{trigger} and}~ 0 < \chi < 1. \label{center}
\end{align} 
With the real-time control goal at hand, we introduce the following dynamics model of tracking errors, which is transformed from the dynamics model described in \cref{realsys}. 
\begin{align}\label{realsyserror}
\mathbf{e}(t+1) = \mathbf{A}(\mathbf{s}(k)) \cdot \mathbf{e}(t) + {\mathbf{B}}(\mathbf{s}(k)) \cdot \mathbf{a}_{\text{teacher}}(t) + \mathbf{h}(\mathbf{e}(t)),  ~~\text{for}~t \in \mathbb{T}_{k},
\end{align} 
where $\mathbb{T}_{k}$ (as defined in \cref{tp}) indicates the PHY-Teacher's activation period, $\mathbf{e}(t) \triangleq \mathbf{s}(t) - \mathbf{s}_{k}^*$, and $\mathbf{h}(\mathbf{e}(t))$ represents an unknown model mismatch, arising from unknown model uncertainties, parameter uncertainties, unmodeled dynamics, and external disturbances that are not captured by the nominal model knowledge $\{{\mathbf{A}}(\mathbf{s}(k)), {\mathbf{B}}(\mathbf{s}(k)\}$. The $\{{\mathbf{A}}(\mathbf{s}(k)), {\mathbf{B}}(\mathbf{s}(k))\}$ will be utilized for designing PHY-Teacher's action policy: 
\begin{align}
\mathbf{a}_{\text{teacher}}(t) = \mathbf{F}_{k} \cdot \mathbf{e}(t) = \mathbf{F}_{k} \cdot (\mathbf{s}(t) - \mathbf{s}_{k}^*),  ~\text{for}~t \in \mathbb{T}_{k}.   \label{hacteacherpolicy}
\end{align}
We can summarize the working mechanism of PHY-Teacher from \cref{aset4,center,realsyserror,hacteacherpolicy}. When the PHY-Teacher is triggered by the condition \eqref{trigger} at time $k$, it uses the most recent sensor data $\mathbf{s}(k)$ to update the physics-model knowledge $\{{\mathbf{A}}(\mathbf{s}(k)), {\mathbf{B}}(\mathbf{s}(k))\}$. This update is then utilized to compute the real-time patch and the associated action policy. Meanwhile, the patch center \eqref{center} lies within DRL-Student's self-learning space $\mathbb{L}$ (since of $0 < \chi < 1$), and also represents the real-time control goal. This arrangement defines the teaching task for the PHY-Teacher: to instruct the DRL-Student in learning a safe action policy that can return the system to space $\mathbb{L}$. Before presenting the overall design, we outline a practical assumption regarding the model mismatch. 
\begin{assumption}
The model mismatch $\mathbf{h}(\mathbf{e})$ in \cref{realsyserror} is bounded in the patch ${\mathbb{P}_{k}}$ \eqref{aset4}. In other words, there exists $\kappa > 0$ such that $\bf{h}^\top(\mathbf{e}) \cdot {\mathbf{P}}_{{k}} \cdot {\bf{h}}(\mathbf{e}) \le \kappa$ holds for ${\mathbf{P}}_{{k}} \succ 0$ and any $\mathbf{e} \in \mathbb{P}_{k}$. \nonumber
\label{assm}
\end{assumption}
We present the design of PHY-Teacher in the theorem below; its proof can be found in \cref{pathppff}.
\begin{theorem}[Real-Time Patch] Recall the sets defined in \cref{aset2,org,aset3}, and consider the PHY-Teacher's action policy in \cref{hacteacherpolicy}, whose ${\mathbf{F}}_{k}$ is computed according to 
\begin{align}
{\mathbf{F}}_{k} = \mathbf{R}_{k} \cdot {\mathbf{Q}}_{k}^{-1} \hspace{1cm} \text{and} \hspace{1cm} {\mathbf{P}}_{k} = {\mathbf{Q}}_{k}^{-1}, \label{acc0}
\end{align}
with ${\mathbf{R}}_{k}$ and ${\mathbf{Q}}_{k}$ satisfying the following linear matrix inequalities (LMIs): 
\begin{align}
&\!\!\!\mathbf{I}_{p} - \overline{\mathbf{C}} \cdot \mathbf{Q}_{k} \cdot \overline{\mathbf{C}}^{\top} \succ 0, ~~\text{and}~~\mathbf{I}_{q} - \overline{\mathbf{D}} \cdot \mathbf{T}_{k} \cdot \overline{\mathbf{D}}^{\top} \succ 0, ~~\text{and}~~\mathbf{Q}_{k} - n \cdot \textbf{diag}^{2}(\mathbf{s}(k)) \succ 0, ~~\text{and}\label{thop2pp} \\
&\!\!\!\left[\!\!{\begin{array}{*{20}{c}}
\alpha  \!\cdot\! \mathbf{Q}_{k} \!\!&\!\! \mathbf{Q}_{k} \!\cdot\! \mathbf{A}^\top(\mathbf{s}(k)) \!+\! \mathbf{R}_{k}^\top \!\cdot\! \mathbf{B}^\top(\mathbf{s}(k))\\
\mathbf{A}(\mathbf{s}(k)) \!\cdot\! \mathbf{Q}_{k} \!+\! \mathbf{B}(\mathbf{s}(k)) \!\cdot\! \mathbf{R}_{k} \!\!&\!\!  \frac{\mathbf{Q}_{k}}{1+\phi}  
\end{array}}\!\!\right] \!\succ\! 0,~~~\text{and}~~~\left[\!\! {\begin{array}{*{20}{c}}
{\mathbf{Q}}_{k} \!&\!  {\mathbf{R}}_{k}^\top\\
{\mathbf{R}}_{k} \!&\! \mathbf{T}_{k}
\end{array}}\!\!\right] \!\succ\! 0,  \label{thop1} 
\end{align}
where $\overline{\mathbf{C}} \triangleq \mathbf{C} \cdot \textbf{diag}^{-1}\{\theta \cdot \mathbf{c}\}$,  $\overline{\mathbf{D}} = \mathbf{D} \cdot \textbf{diag}^{-1}\{\mathbf{d}\}$, $0 < \alpha < 1$, and $\phi > 0$. Meanwhile, the given parameters $\eta$ in \cref{aset3}, $\theta$ in \cref{aset4}, $\chi$ in \cref{center}, $\alpha$ and $\phi$ in \cref{thop1}, and $\kappa$ in \cref{assm} satisfy 
\begin{align}  
\eta < \theta +\chi \cdot \eta < 1 ~~\text{and}~~  \frac{(\phi + 1) \cdot \kappa}{(1 - \alpha) \cdot \phi} < \frac{(1-\chi)^{2} \cdot \eta^{2}}{\theta^{2}}.   \label{repf1} 
\end{align}
Then, according to \cref{csafe}, the PHY-Teacher's action policy \eqref{hacteacherpolicy} has assured safety in the real-time patch $\mathbb{P}_{k}$. 
\label{thm10007p}
\end{theorem}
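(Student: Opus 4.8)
The plan is a Lyapunov-based forward-invariance argument on the quadratic certificate $V_{k}(\mathbf{e}) \triangleq \mathbf{e}^{\top}\mathbf{P}_{k}\mathbf{e}$ with $\mathbf{P}_{k} = \mathbf{Q}_{k}^{-1}$, together with its unit sublevel set $\mathcal{E}_{k} \triangleq \{\mathbf{e}\in\mathbb{R}^{n} : V_{k}(\mathbf{e}) \le 1\}$. First I would establish the two ``geometric'' facts that make $\mathcal{E}_{k}$ a valid certificate of safety and admissibility. From the first LMI in \cref{thop2pp}, every diagonal entry of $\mathbf{I}_{p} - \overline{\mathbf{C}}\,\mathbf{Q}_{k}\,\overline{\mathbf{C}}^{\top}$ is positive, i.e.\ $\overline{\mathbf{C}}_{i}\mathbf{Q}_{k}\overline{\mathbf{C}}_{i}^{\top} < 1$ for each row $i$; writing $\overline{\mathbf{C}}_{i}\mathbf{e} = (\mathbf{Q}_{k}^{1/2}\overline{\mathbf{C}}_{i}^{\top})^{\top}(\mathbf{Q}_{k}^{-1/2}\mathbf{e})$ and applying Cauchy--Schwarz gives $(\overline{\mathbf{C}}_{i}\mathbf{e})^{2} \le (\overline{\mathbf{C}}_{i}\mathbf{Q}_{k}\overline{\mathbf{C}}_{i}^{\top})\,V_{k}(\mathbf{e}) < 1$ on $\mathcal{E}_{k}$, so $\mathcal{E}_{k}$, read in the error coordinate $\mathbf{e} = \mathbf{s} - \mathbf{s}_{k}^{*}$, is contained in the patch $\mathbb{P}_{k}$ of \cref{aset4}. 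An identical estimate, after taking a Schur complement of the last LMI in \cref{thop1} to obtain $\mathbf{T}_{k} \succ \mathbf{R}_{k}\mathbf{Q}_{k}^{-1}\mathbf{R}_{k}^{\top} = \mathbf{F}_{k}\mathbf{Q}_{k}\mathbf{F}_{k}^{\top}$ and then invoking the second LMI in \cref{thop2pp}, shows $\mathbf{e}\in\mathcal{E}_{k} \Rightarrow \mathbf{F}_{k}\mathbf{e}\in\mathbb{A}$; this settles item ii) of \cref{csafe} as soon as we know $\mathbf{e}(t)\in\mathcal{E}_{k}$.

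The core step is the one-step contraction of $V_{k}$ along the closed-loop error dynamics. Substituting \cref{hacteacherpolicy} into \cref{realsyserror} gives $\mathbf{e}(t+1) = \mathbf{A}_{\mathrm{cl}}\,\mathbf{e}(t) + \mathbf{h}(\mathbf{e}(t))$ with $\mathbf{A}_{\mathrm{cl}} \triangleq \mathbf{A}(\mathbf{s}(k)) + \mathbf{B}(\mathbf{s}(k))\mathbf{F}_{k}$. Splitting the cross term with Young's inequality (parameter $\phi$) yields $V_{k}(\mathbf{e}(t+1)) \le (1+\phi)\,\mathbf{e}(t)^{\top}\mathbf{A}_{\mathrm{cl}}^{\top}\mathbf{P}_{k}\mathbf{A}_{\mathrm{cl}}\mathbf{e}(t) + \tfrac{\phi+1}{\phi}\,\mathbf{h}^{\top}(\mathbf{e}(t))\mathbf{P}_{k}\mathbf{h}(\mathbf{e}(t))$. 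I would then show that the first block LMI in \cref{thop1}, after a congruence transform by $\mathbf{P}_{k}=\mathbf{Q}_{k}^{-1}$ and a Schur complement (using $\mathbf{A}(\mathbf{s}(k))\mathbf{Q}_{k} + \mathbf{B}(\mathbf{s}(k))\mathbf{R}_{k} = \mathbf{A}_{\mathrm{cl}}\mathbf{Q}_{k}$), is precisely $(1+\phi)\,\mathbf{A}_{\mathrm{cl}}^{\top}\mathbf{P}_{k}\mathbf{A}_{\mathrm{cl}} \preceq \alpha\,\mathbf{P}_{k}$. Combined with \cref{assm}, which bounds $\mathbf{h}^{\top}\mathbf{P}_{k}\mathbf{h} \le \kappa$ on $\mathbb{P}_{k}$, this delivers the scalar recursion $V_{k}(\mathbf{e}(t+1)) \le \alpha\,V_{k}(\mathbf{e}(t)) + \beta$ with $\beta \triangleq \tfrac{(\phi+1)\kappa}{\phi}$, valid whenever $\mathbf{e}(t)\in\mathbb{P}_{k}$.

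Finally I would anchor the recursion and close the loop between ``$\mathbf{e}(t)\in\mathbb{P}_{k}$'' (the hypothesis of \cref{assm}) and ``$V_{k}(\mathbf{e}(t))\le1$'' (which by the first step implies $\mathbf{e}(t)\in\mathbb{P}_{k}$). By \cref{center}, $\mathbf{e}(k) = (1-\chi)\mathbf{s}(k)$, and the third LMI in \cref{thop2pp} gives $\mathbf{P}_{k} \prec \tfrac{1}{n}\,\textbf{diag}^{-2}\{\mathbf{s}(k)\}$, hence $V_{k}(\mathbf{e}(k)) < (1-\chi)^{2} < 1$, so $\mathbf{e}(k)\in\mathcal{E}_{k}\subseteq\mathbb{P}_{k}$. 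Now \cref{repf1} supplies the arithmetic: $\eta < \theta+\chi\eta$ gives $\mu \triangleq \tfrac{(1-\chi)^{2}\eta^{2}}{\theta^{2}} < 1$, and the second inequality reads $\tfrac{\beta}{1-\alpha} < \mu$, so $\beta < (1-\alpha)\mu < 1-\alpha$ and thus $\alpha+\beta < 1$. An induction on $t$ then keeps $V_{k}(\mathbf{e}(t)) \le \alpha\cdot 1 + \beta < 1$, i.e.\ $\mathbf{e}(t)\in\mathcal{E}_{k}$ throughout the activation window, which gives item ii) outright and item i) because $|\mathbf{C}_{i}\mathbf{s}(t)| \le |\mathbf{C}_{i}\mathbf{e}(t)| + \chi|\mathbf{C}_{i}\mathbf{s}(k)| < \theta c_{i} + \chi\eta c_{i} = (\theta+\chi\eta)c_{i} < c_{i}$, where $|\mathbf{C}_{i}\mathbf{s}(k)| \le \eta c_{i}$ comes from the triggering condition \cref{trigger}. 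The same recursion forces $V_{k}(\mathbf{e}(t)) \to \tfrac{\beta}{1-\alpha} < \mu$ geometrically, so after finitely many steps $V_{k}(\mathbf{e}(t)) < \mu$; at such a $t$, $|\mathbf{C}_{i}\mathbf{e}(t)| = \theta c_{i}\,|\overline{\mathbf{C}}_{i}\mathbf{e}(t)| < \theta c_{i}\sqrt{\mu} = (1-\chi)\eta c_{i}$, whence $|\mathbf{C}_{i}\mathbf{s}(t)| < (1-\chi)\eta c_{i} + \chi\eta c_{i} = \eta c_{i}$, i.e.\ $\mathbf{s}(t)\in\mathbb{L}$; taking $\tau_{k}$ to be the first such step makes $\mathbb{T}_{k}$ satisfy item iii) as well.

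I expect the main obstacle to be twofold and intertwined. First, each block LMI in \cref{thop1} must be translated, by the right sequence of congruence transforms and Schur complements, into the Lyapunov-/admissibility-type inequality it secretly encodes; these manipulations are routine but easy to misalign (keeping the factors $1+\phi$ and $\alpha$ on the correct side, getting the congruence direction right). Second, and more conceptually, \cref{assm} is assumed only on $\mathbb{P}_{k}$, yet the invariance that keeps $\mathbf{e}(t)$ inside $\mathbb{P}_{k}$ is itself what one is trying to prove; the resolution is the single induction above, where at each step ``$\mathbf{e}(t)\in\mathcal{E}_{k}\subseteq\mathbb{P}_{k}$'' licenses the contraction that yields ``$\mathbf{e}(t+1)\in\mathcal{E}_{k}$''. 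A minor bookkeeping point is aligning the index convention of the activation period $\mathbb{T}_{k}$ in \cref{tp} with the step from which the recursion is anchored.
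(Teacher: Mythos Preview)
Your proposal is correct and follows essentially the same route as the paper's proof: the same Lyapunov certificate $V_{k}(\mathbf{e})=\mathbf{e}^{\top}\mathbf{P}_{k}\mathbf{e}$, the same Schur-complement/congruence reduction of the block LMI in \cref{thop1} to $(1+\phi)\mathbf{A}_{\mathrm{cl}}^{\top}\mathbf{P}_{k}\mathbf{A}_{\mathrm{cl}}\preceq\alpha\mathbf{P}_{k}$, the same Young-inequality split yielding $V_{k}(\mathbf{e}(t+1))\le\alpha V_{k}(\mathbf{e}(t))+\tfrac{(\phi+1)\kappa}{\phi}$, the same anchoring via $\mathbf{e}(k)=(1-\chi)\mathbf{s}(k)$ and the third LMI of \cref{thop2pp}, and the same use of \cref{repf1} both for invariance ($\alpha+\beta<1$) and for the finite-time return to $\mathbb{L}$. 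The only cosmetic gap is that your inversion $\mathbf{P}_{k}\prec\tfrac{1}{n}\,\textbf{diag}^{-2}\{\mathbf{s}(k)\}$ tacitly assumes $\mathbf{s}(k)$ has no zero entries; the paper handles this with a limiting argument $\overline{\mathbf{s}}(k,\delta)\to\mathbf{s}(k)$, which you should mention.
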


\begin{remark}[\textbf{Policy Computation}] \label{limit3}
By using the CVX \cite{grant2009cvx} or LMI \cite{gahinet1994lmi} toolbox, the matrices $\mathbf{Q}_{k}$ and $\mathbf{R}_{k}$ can be computed from \cref{thop1,thop2pp} for the matrix $\mathbf{F}_{k}$. The pseudocode for computing $\mathbf{F}_{k}$ using CVX and LMI, with a comment, is presented in \cref{pesudoteacher}. 
\end{remark}

\begin{remark}[\textbf{Adaptive Mechanism for Assuring Validity of \cref{assm}}] \label{adapt}
PHY-Teacher is designed to have verifiable expertise in safety protocols, as formally stated in \cref{thm10007p}, but this relies on \cref{assm}. In real-world applications, particularly in hard-to-predict and hard-to-simulate environments (e.g., natural disasters and snow squalls), we cannot always guarantee that \cref{assm} will hold. To address this concern, we introduce an adaptive mechanism, presented in \cref{comcomcomcocmpe}, that utilizes real-time state to estimate real-time model mismatch, thereby enabling monitoring of \cref{assm}'s validity. When the assumption is violated, the mechanism computes compensatory actions in real-time to mitigate the effects of model mismatch and restore the validity of \cref{assm} timely.
\end{remark}

\section{Implementation Best Practices: Minimizing Runtime Overhead}\label{finalnote}
The PHY-Teacher and DRL-Student are designed to operate in parallel. The DRL-Student utilizes GPU acceleration and can be deployed on embedded hardware such as the NVIDIA Jetson series, while the PHY-Teacher efficiently handles real-time safety-critical tasks on the CPU. To minimize runtime overhead, the LMI solver — responsible for automatic and rapid computation of the PHY-Teacher's real-time action policies — should be implemented in C to significantly improve both memory efficiency and computational speed. To facilitate this implementation, we provide ECVXCONE (embedded CVX for cone programming), an online solver for embedded conic optimization capable of solving LMIs in real-time using C. The code is available at GitHub: \url{https://github.com/Charlescai123/ecvxcone}.

\cref{evasummart} in \cref{evauaua} presents an experimental evaluation of ECVXCONE's computational resource usage across four platforms with varying CPU frequencies, which demonstrates that ECVXCONE makes the Real-DRL well-suited for edge-AI devices with limited computational resources.

\section{Experiment} \label{exppjus}
The section presents experiments on a real quadruped robot, a quadruped robot in the NVIDIA Isaac Gym, and a cart-pole system, along with comprehensive comparisons and ablation studies. The complete code and details can be found on GitHub: \url{https://github.com/Charlescai123/Real-DRL}.

\subsection{Real Quadruped Robot: Indoor Environment}
The detailed design of this experiment is presented in \cref{A1A1}. During pre-training in the simulator, we set $r_{v_x}$ = 0.6 m/s and $r_{h}$ = 0.24 m. To better demonstrate Real-DRL, the real robot's velocity command is $r_{v_x}$ = 0.35 m/s, which is very different from the one in the simulator. For the runtime learning in the real robot, one episode is defined as "\textit{running robot for 15 seconds}."

\begin{wrapfigure}{r}{0.5\textwidth}
\vspace{-0.4cm}
  \begin{center}
\includegraphics[width=0.5\textwidth]{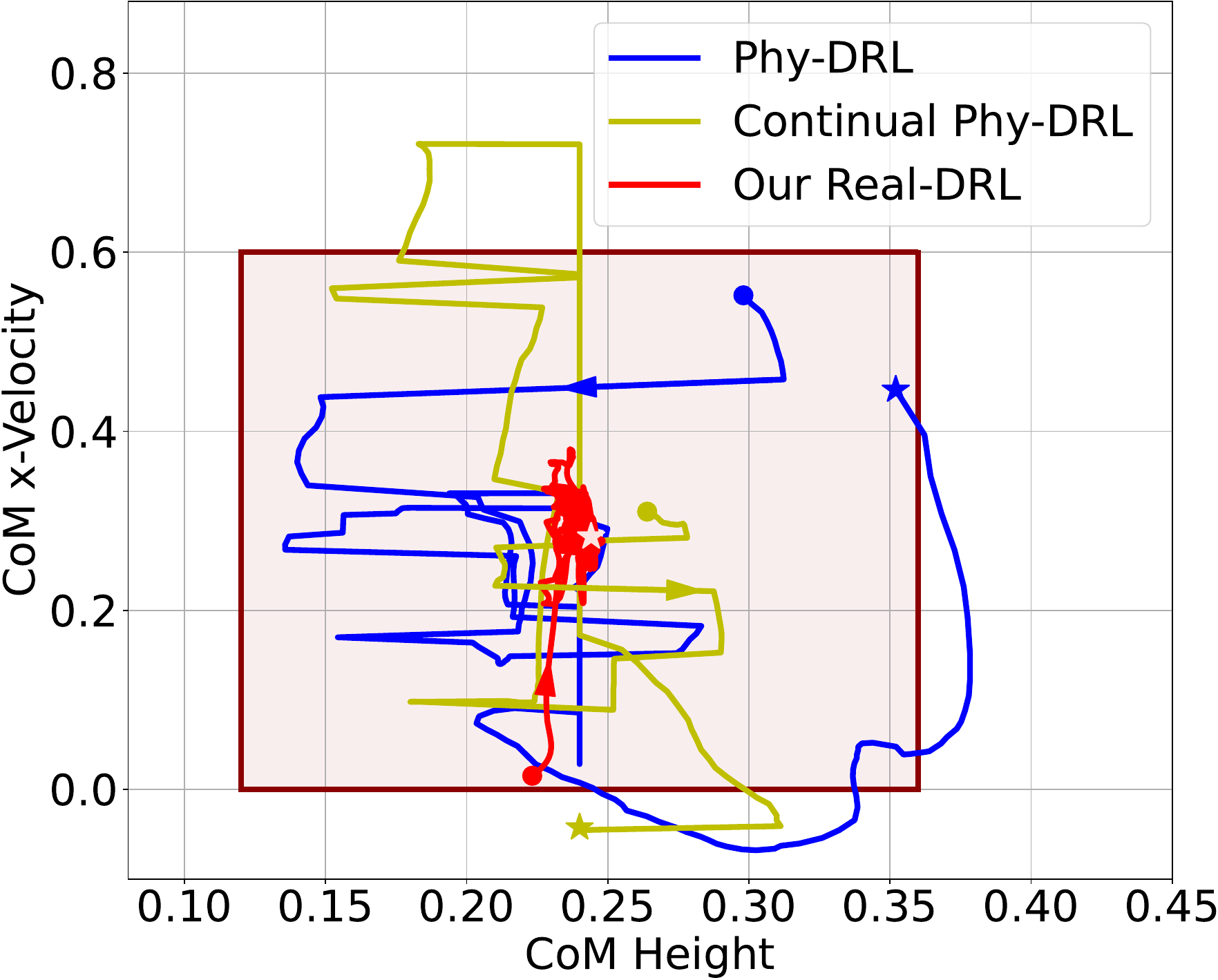}
  \end{center}
  \vspace{-0.3cm}
  \caption{Phase plots.}
  \vspace{-0.3cm}
  \label{quaapcvopt}
\end{wrapfigure}
We compare our Real-DRL with the state-of-the-art safe DRL, called Phy-DRL, which claims theoretically and experimentally fast training towards safety guarantee \cite{Phydrl1, Phydrl2}. Meanwhile, Phy-DRL integrates the concurrent delay randomization \cite{imai2022vision} and domain randomization \cite{sadeghi2017cad2rl} (randomizing friction force) for addressing the Sim2Real gap into its training. Here, we have three models for comparison: 1) `\textbf{Continual Phy-DRL}' representing a well-trained Phy-DRL in the simulator, which performed continual learning in the real robot to fine-tune the action policy;
2) `\textbf{Phy-DRL}' representing the well-trained Phy-DRL policy directly deployed to the real robot, and 3) our \textbf{Real-DRL}. The robot's CoM height and CoM-x velocity are shown in \cref{quaapcvopt}, and a comparison video is available at 
\url{https://www.youtube.com/watch?v=tQSPcnKEnYU}. From the comparison video and \cref{quaapcvopt}, we concluded that the Phy-DRL well-trained in simulator cannot guarantee the safety of the real robot due to the Sim2Real gap and unknown unknowns that the delay randomization and force randomization failed to capture. In contrast, our Real-DRL can always guarantee the safety of the real robot. Additionally, we recall that PHY-Teacher fosters DRL-Student's teaching-to-learn about safety. To verify this, we deactivate the PHY-Teacher in episodes 1 and 20, and compare system behavior. The demonstration video is available at \url{https://www.youtube.com/shorts/UIFwlMrDgjA}, which illustrates that DRL-Student quickly learned from PHY-Teacher to be safe,  within \underline{20 episodes, i.e., 300 seconds}. 

Finally, we showcase the Real-DRL's ability to tolerate various unknown unknowns. In addition to inherent ones in noisy samplings and complex DRL agents, we add five unknown unknowns that have \underline{never occurred in DRL-Student's historical} \underline{learning}. They are 1) \textbf{Beta}: Disturbances injected into DRL-Student's actions, generated by a randomized Beta distribution (see \cref{unkunkwhy} for an explanation of its representation of unknown unknowns); 2) \textbf{PD}: Random and sudden payload (around 4 lbs) drops on the robot's back; 3)  \textbf{Kick}: Random kick by a human; 4)  \textbf{DoS}: A real denial-of-service fault of the platform, which can be caused by task scheduling latency, communication delay, communication block, etc., but is unknown to us; and 5)  \textbf{SP}: A sudden side push. We consider three combinations of them: i) `\textbf{Beta + PD},' ii) `\textbf{Beta + DoS + Kick},' and iii) `\textbf{Beta + SP}.' The demonstration video is available at 
\url{https://www.youtube.com/shorts/Tl2jEUJMe_Y}, which demonstrates  that our Real-DRL successfully assures the safety of the real robot by tolerating such complex combined unknowns. 

\subsection{Quadruped Robot in NVIDIA Isaac Gym: Wild Environment (see \cref{wildenvironment})} \label{ccrobnt}
The operating environment plays a crucial role in introducing real-time unknown unknowns and Sim2Real gap. So, this benchmark focuses on comprehensive comparisons in complex environments. We utilized NVIDIA Isaac Gym to create a wild environment, assuming a real one. This environment 
\begin{wrapfigure}{r}{0.51\textwidth}
\vspace{-0.40cm}
  \begin{center}
\includegraphics[width=0.51\textwidth]{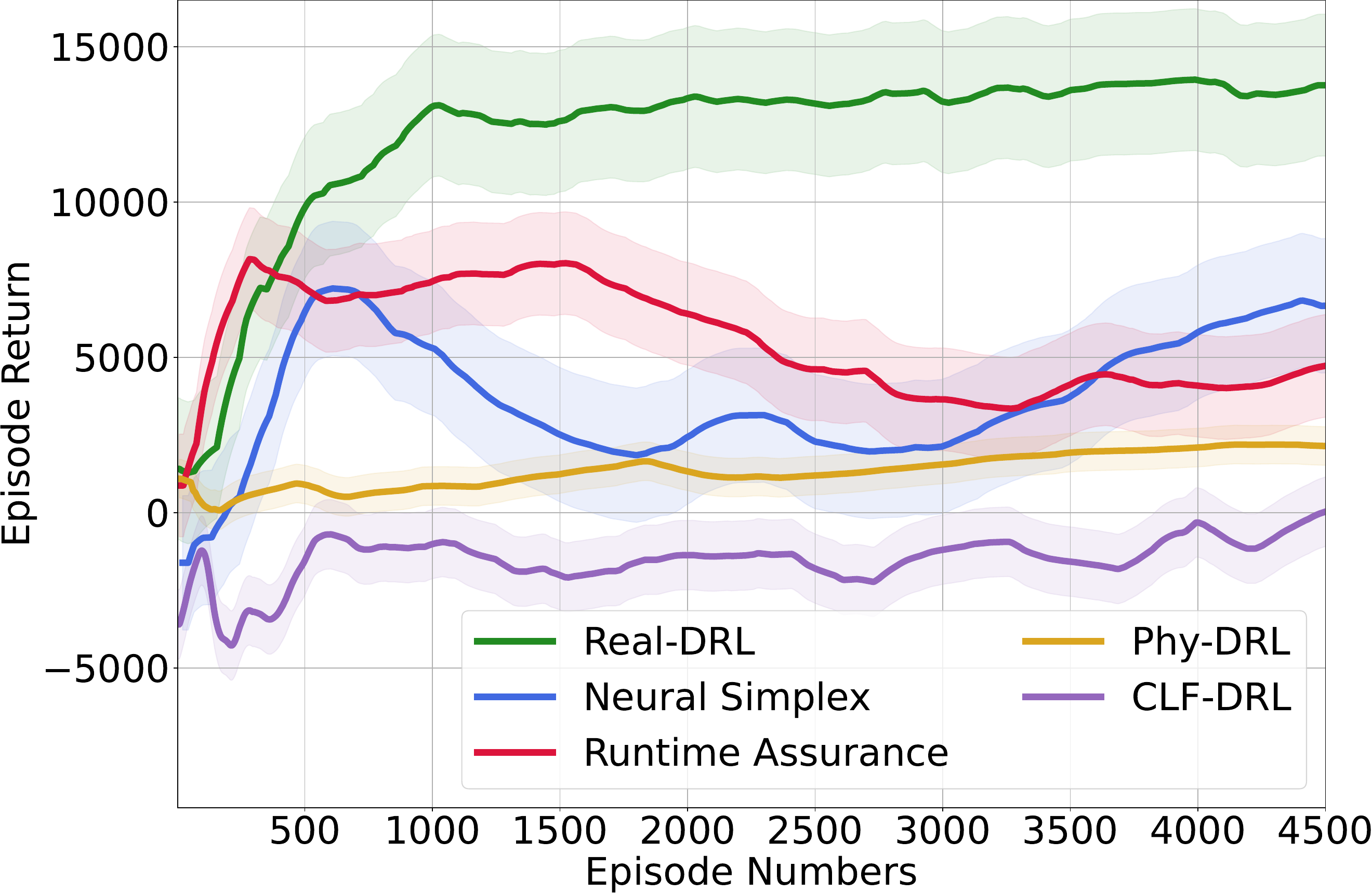}
  \end{center}
  \vspace{-0.3cm}
  \caption{Trajectories of episode return.}
  \vspace{-0.6cm}
  \label{sd099ghee}
\end{wrapfigure}
transitions from flat terrain to unstructured and uneven ground, further complicated by unexpected ice that results in low friction. We here can conclude that the robot's operating environment are non-stationary and unforeseen. Besides, our Real-DRL allows for learning from scratch, which is performed here. Meanwhile, we integrate all comparison models with a navigation module, which enable the robot to autonomously traverse the uneven and unstructured terrains, and dynamic obstacles in the wild. The architecture of Real-DRL with integration of navigation is shown in \cref{learningframework}. The experiment design appears in \cref{go2issac}.

\begin{table*}[http]
\caption{Safety metrics are highlighted in \textcolor{red}{red}. If the robot falls, other task-related metrics are labeled as N/A. If collision is not avoided, travel time and energy consumption are represented as $\infty$.}
\centering
\renewcommand{\arraystretch}{0.98}  
    \resizebox{1.0\textwidth}{!}{ 
    \begin{tabular}{c|ccccc|cc}
        \toprule
        \multirow{3}{*}{\textbf{Model ID}} &
        \multicolumn{5}{c|}{\textbf{Navigation Performance}} &
        \multicolumn{2}{c}{\textbf{Energy Efficiency}} \\
        \cmidrule(lr){2-6} \cmidrule(lr){7-8} & \textbf{Success} & \textcolor{red}{\textbf{Is Fall}} & \textcolor{red}{\textbf{Collision}} & \textbf{Num (wp)} & \textbf{Travel Time (s)} & \textbf{Avg Power (W)} & \textbf{Total Energy (J)} \\
        \midrule
        CLF-DRL  & \text{No}  & \textcolor{red}{\text{Yes}} &  \textcolor{red}{\textbf{No}} & \text{0}  & \text{N/A} & \text{N/A} & \text{N/A} \\ 
        \midrule
        Phy-DRL  & \text{No}  & \textcolor{red}{\textbf{No}} & \textcolor{red}{\text{Yes}}  & \text{1}  & $\infty$ & \text{507.9441} & $\infty$ \\
        \midrule
        Runtime Assurance  & \text{No}  & \textcolor{red}{\text{Yes}} & \textcolor{red}{\textbf{No}} & \text{2}  & \text{N/A} & \text{N/A} & \text{N/A} \\
        \midrule
        Neural Simplex & \text{No}  & \textcolor{red}{\textbf{No}} &  \textcolor{red}{\text{Yes}} & \text{2} & $\infty$  & \text{487.9316} & $\infty$\\
        \midrule
         PHY-Teacher  & \textbf{Yes}  & \textcolor{red}{\textbf{No}} & \textcolor{red}{\textbf{No}} & \textbf{4}  & \text{55.5327} & \text{482.8468} & \text{26817.68} \\
        \midrule
        Our Real-DRL  & \textbf{Yes}  & \textcolor{red}{\textbf{No}} & \textcolor{red}{\textbf{No}} & \textbf{4}  & \textbf{45.3383} & \textbf{479.4638} & \textbf{21742.42} \\
        \bottomrule
    \end{tabular}
    }
    \label{table:exp_result}
\end{table*}

The experiment compares our Real-DRL with state-of-the-art frameworks: i) safe DRLs: CLF-DRL \cite{westenbroek2022lyapunov} and Phy-DRL \cite{Phydrl1, Phydrl2}; ii) fault-tolerant DRLs: neural Simplex \cite{phan2020neural} and runtime assurance \cite{brat2023runtime, sifakis2023trustworthy, chen2022runtime}; iii) our sole PHY-Teacher. The testing models are chosen after 4500 episodes. Their performance comparisons are summarized in \cref{table:exp_result}. \cref{sd099ghee} presents the episode returns, and the corresponding comparison video is available at \url{https://www.youtube.com/shorts/QqYSHjhq8vg}. Additionally, a comparison video of \textbf{Real-DRL w/o v.s. w/ PHY-Teacher} during early stage of runtime learning is available at \url{https://www.youtube.com/shorts/xcsZwrwfEOM}. These comprehensive comparisons showcase our Real-DRL's notable features: its high-performance learning and assured safety.  

\subsection{Cart-Pole System: Ablation Studies}
Our Real-DRL has three learning innovations: i)  a teaching-to-learn mechanism focused on safety, ii) safety-informed batch sampling to overcome the experience imbalance, and iii) an automatic hierarchy learning, i.e., safety-first learning and then high-performance learning. In this benchmark, 
we primarily conduct ablation studies to demonstrate the impact of the three novel designs. The system's state consists of pendulum angle $\theta$, 
cart position $x$, velocities $\omega = \dot \theta$, and $v = \dot x$. The detailed experiment design can be found in \cref{cartpoledesign}.

\begin{wrapfigure}{r}{0.485\textwidth}
\vspace{-0.0cm}
  \begin{center}
\includegraphics[width=0.485\textwidth]{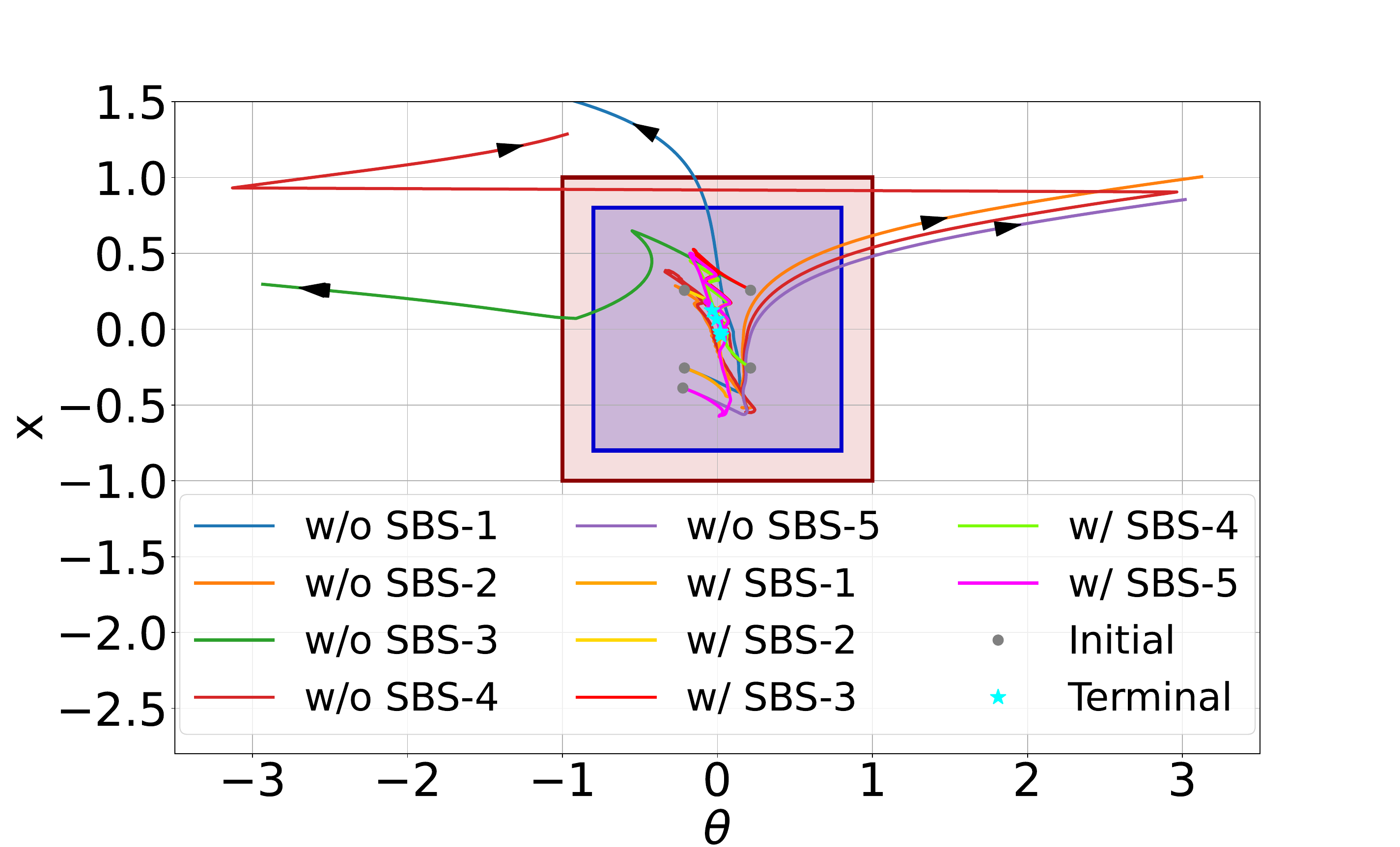}
  \end{center}
  \vspace{-0.3cm}
  \caption{Phase plots, given five random initial states within self-learning space.}
  \vspace{-0.4cm}
  \label{plp}
\end{wrapfigure}
$\blacksquare$ \textbf{Safety-Informed Batch Sampling:} To demonstrate the effect of this batch sampling scheme on addressing experience imbalance, we set up an imbalanced learning environment, as shown in \cref{imcorner}. We evaluate two DRL-Students in this environment: one learned in Real-DRL with our proposed batch sampling scheme (denoted as \textbf{`w/ SBS'}), and the other one learned in Real-DRL using a single replay buffer without this scheme (denoted as \textbf{`w/o SBS'}). The phase plots in \cref{plp} and the rewards in \cref{returntra} illustrate that, after reaching convergence, the DRL-Student learned in Real-DRL with our batch sampling scheme has significantly enhanced safety assurance in corner-case environment, compared to the one learned from Real-DRL without this scheme.

$\blacksquare$ \textbf{Teaching-to-Learn Mechanism:} To evaluate the effect of this learning mechanism on policy learning, we compare two 
models: \textbf{Real-DRL w/ and w/o the teaching-to-learn}. The 
episode-average rewards (defined in \cref{episode-average-reward}) 
across five random initial states are shown in \cref{ablationexp} (a), demonstrating that the teaching-to-learn mechanism significantly improves sample efficiency, enabling faster convergence, more stable learning, and higher reward performance.

$\blacksquare$ \textbf{Automatic Hierarchy Learning:} Finally, we showcase the automatic hierarchy learning. To do so, we disable PHY-Teacher in episodes 5 and 50 and observe system behavior under the control of sole DRL-Student. Intuitively, DRL-Student shall automatically become independent of PHY-Teacher and self-learn for a high-performance action policy. This can be seen from PHY-Teacher's activation ratio (defined in \cref{activation-ratio})  in \cref{ablationexp} (b), where after episode 50, PHY-Teacher is rarely activated. Meanwhile, \cref{ratioha} of \cref{cartpoledesign} illustrates the high mission performance of action policy in episode 50: much more clustering and closer proximity to the mission goal.

\begin{figure} [http]
    \centering
    \subfloat[Trajectories of episode-average reward (95\% CI).]{\includegraphics[width=0.48\textwidth]{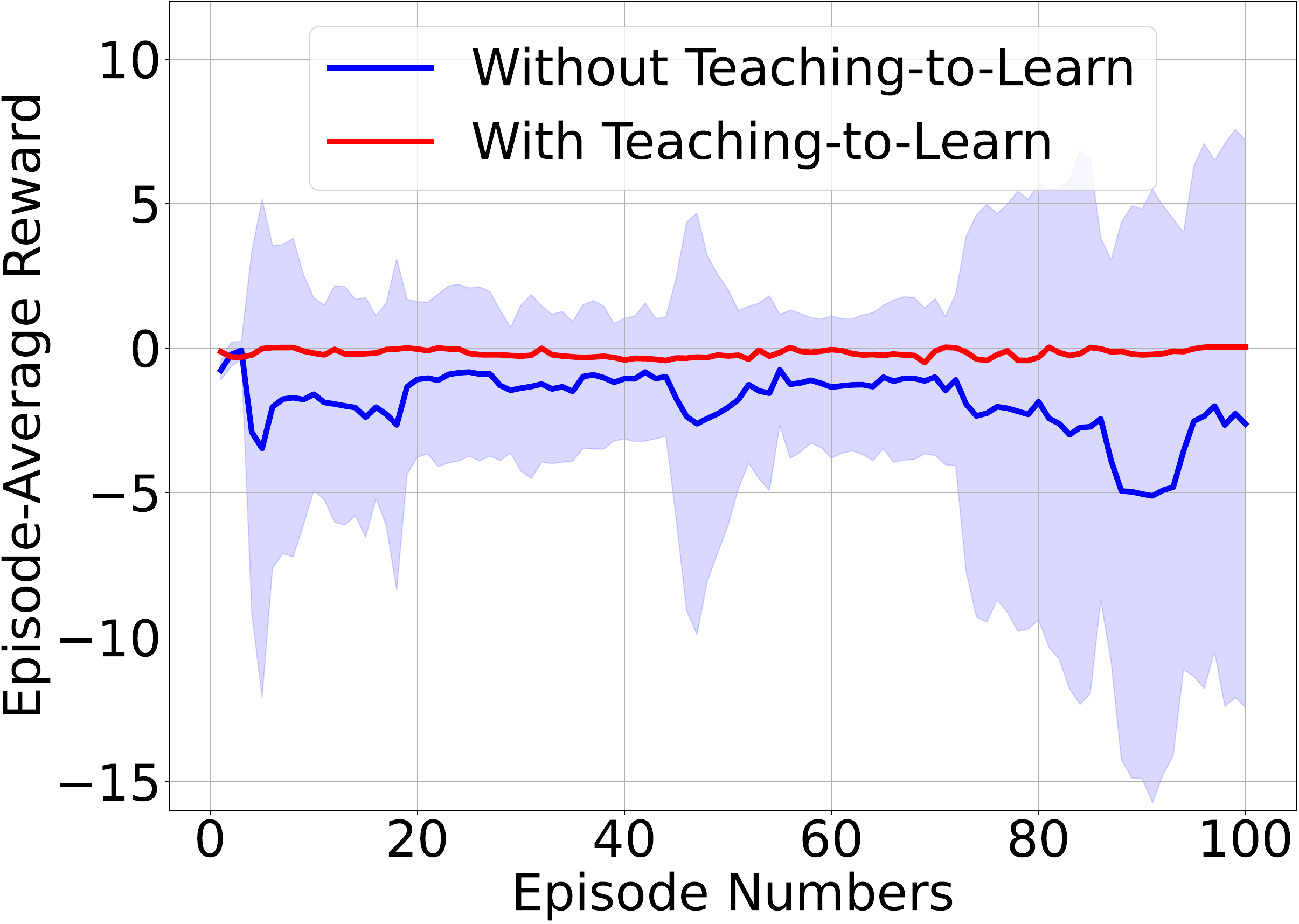}} 
    \centering
    \hspace{0.35cm}
    \subfloat[PHY-Teacher’s activation ratio over 100 episodes.]{\includegraphics[width=0.47\textwidth]{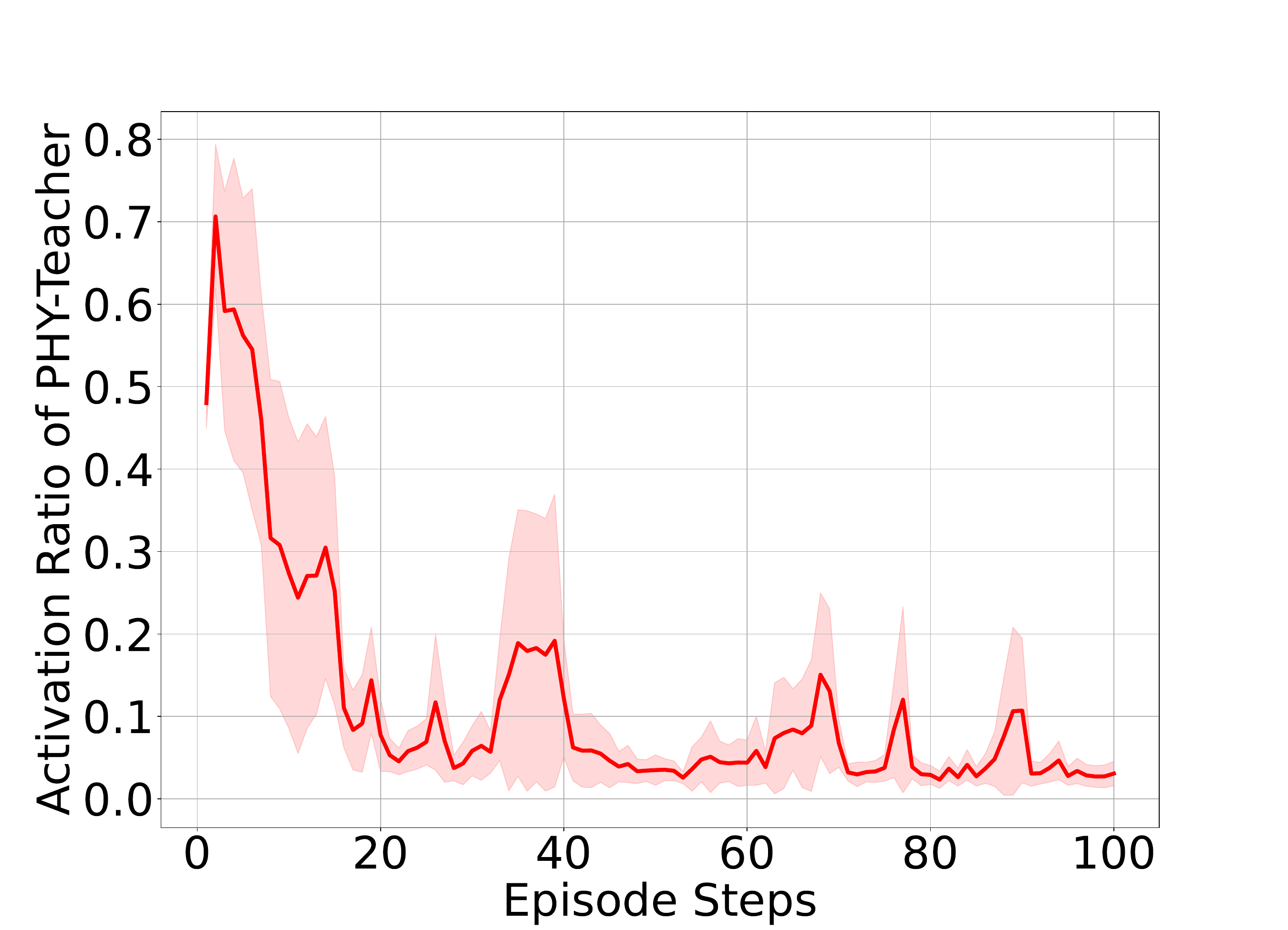}}
    \vspace{-0.15cm}
\caption{Ablation studies. (a): Teaching-to-learn mechanism, (b): Automatic hierarchy learning.}
\vspace{-0.2cm}
\label{ablationexp}
\end{figure}

\section{Conclusion and Discussion} \label{limit12}
This paper introduces Real-DRL, a framework specifically designed for safety-critical autonomous systems. The Real-DRL consists of three main components: a DRL-Student, a PHY-Teacher, and a Trigger. The DRL-Student engages in a dual self-learning and teaching-to-learn paradigm. The PHY-Teacher serves two primary functions: fostering the teaching-to-learn mechanism for DRL-Student and backing up safety.  Real-DRL notably features i) assured safety in the face of unknown unknowns and Sim2Real gap, ii) teaching-to-learn for safety and self-learning for high performance, and iii) safety-informed batch sampling to address the experience imbalance. Experiments conducted on three benchmarks have successfully shown the effectiveness and unique features of Real-DRL.

In current Real-DRL, the Trigger continuously monitors the system safety status to manage DRL-Student and PHY-Teacher. To alleviate this monitoring burden, Trigger shall have an early warning function of future safety status. Utilizing reachability techniques \cite{bak2014real} could provide a viable solution.

\newpage
\section*{Acknowledgments}
This research was partially funded by the National Science Foundation through Grants CPS-2311084 and CPS-2311085, as well as by the NVIDIA Academic Grant Program. 

We thank Hongpeng Cao at the Technical University of Munich, Germany, for his assistance with the indoor experiment on the real quadruped robot.

\bibliography{reference}{}
\bibliographystyle{plain}

\newpage
\appendix

\appendixpage
\startcontents[sections]
\printcontents[sections]{l}{1}{\setcounter{tocdepth}{2}}

\newpage
\section{Auxiliary Lemmas} \label{Aux}
In this section, we present auxiliary lemmas that will be used in the proofs of the main results of this article.

\vspace{1.0cm}

\begin{lemma} [Schur Complement \cite{zhang2006schur}]
For any symmetric matrix $\mathbf{M} = \left[{\begin{array}{*{20}{c}}
\mathbf{A}&\mathbf{B}\\
{{\mathbf{B}^\top}}&\mathbf{C}
\end{array}} \right]$, the $\mathbf{M} \succ 0$ holds if and only if the $\mathbf{C} \succ 0$ and $\mathbf{A} - \mathbf{B} \cdot \mathbf{C}^{-1} \cdot \mathbf{B}^\top \succ 0$ hold. \label{auxlem2}
\end{lemma}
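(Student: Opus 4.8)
The plan is to prove both directions of the equivalence at once by exhibiting an explicit congruence that block-diagonalizes $\mathbf{M}$, exploiting the fact that congruence by an invertible matrix preserves positive definiteness (Sylvester's law of inertia). The one point that needs care is that the Schur complement $\mathbf{A}-\mathbf{B}\mathbf{C}^{-1}\mathbf{B}^\top$ is only meaningful once $\mathbf{C}$ is known to be invertible, so the steps must be sequenced so that the conclusion ``$\mathbf{C}\succ0$'' is in hand before $\mathbf{C}^{-1}$ is ever written down.

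Concretely, I would first record the elementary fact that a principal submatrix of a positive definite matrix is positive definite: restricting the quadratic form $\mathbf{z}^{\top}\mathbf{M}\mathbf{z}$ to vectors of the form $\mathbf{z}=(\mathbf{0},\mathbf{y})$ gives $\mathbf{M}\succ0\Rightarrow\mathbf{C}\succ0$, while the reverse implication assumes $\mathbf{C}\succ0$ outright. Hence in both directions we may take $\mathbf{C}$ invertible and set $\mathbf{S}\triangleq\mathbf{A}-\mathbf{B}\mathbf{C}^{-1}\mathbf{B}^{\top}$. Next I would introduce the block unit-triangular, hence invertible, matrix $\mathbf{L}\triangleq\begin{bmatrix}\mathbf{I}&-\mathbf{B}\mathbf{C}^{-1}\\\mathbf{0}&\mathbf{I}\end{bmatrix}$ and verify by a direct block multiplication that $\mathbf{L}\,\mathbf{M}\,\mathbf{L}^{\top}=\begin{bmatrix}\mathbf{S}&\mathbf{0}\\\mathbf{0}&\mathbf{C}\end{bmatrix}$ (using $\mathbf{M}=\mathbf{M}^{\top}$).

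With this identity, the lemma follows immediately: $\mathbf{M}\succ0$ iff $\mathbf{L}\mathbf{M}\mathbf{L}^{\top}\succ0$ (congruence by the invertible $\mathbf{L}$, equivalently the substitution $\mathbf{z}\mapsto\mathbf{L}^{\top}\mathbf{z}$ in the quadratic form), and a block-diagonal symmetric matrix is positive definite iff each diagonal block is; thus $\mathbf{M}\succ0\iff\mathbf{S}\succ0$ and $\mathbf{C}\succ0$, which together with the preliminary step is exactly the assertion. As an alternative I would mention the completion-of-squares route: for $\mathbf{C}\succ0$, $\min_{\mathbf{y}}\big(\mathbf{x}^{\top}\mathbf{A}\mathbf{x}+2\mathbf{x}^{\top}\mathbf{B}\mathbf{y}+\mathbf{y}^{\top}\mathbf{C}\mathbf{y}\big)=\mathbf{x}^{\top}\mathbf{S}\mathbf{x}$, attained at $\mathbf{y}=-\mathbf{C}^{-1}\mathbf{B}^{\top}\mathbf{x}$, and the equivalence then falls out of the same case split on whether the $\mathbf{x}$-block vanishes. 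There is no genuine obstacle here beyond the bookkeeping already flagged — deriving $\mathbf{C}\succ0$ before invoking $\mathbf{C}^{-1}$, and recalling that congruence preserves definiteness — and the block computation $\mathbf{L}\mathbf{M}\mathbf{L}^{\top}=\mathrm{diag}(\mathbf{S},\mathbf{C})$ is routine.
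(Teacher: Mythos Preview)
Your proof is correct and follows the standard congruence-diagonalization argument. Note, however, that the paper does not actually supply its own proof of this statement: the lemma is listed as an auxiliary result with a citation to \cite{zhang2006schur} and is used as a black box in the subsequent arguments, so there is nothing in the paper to compare against beyond confirming that your argument establishes precisely the form of the Schur complement criterion the authors invoke.
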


\vspace{1.0cm}

\begin{lemma} \cite{seto1999engineering}
We define two state sets: ${\mathbb{S}} \triangleq \left\{ {\left. \mathbf{s} \in {\mathbb{R}^n} \right| \overline{\mathbf{C}} \cdot \mathbf{s}  < \mathbf{1}_{m}} \right\}$ and ${\Omega} \triangleq \left\{ {\left. \mathbf{s} \in {\mathbb{R}^n} ~\right|} \right.{{\mathbf{s}}^\top} \cdot {\mathbf{Q}}^{-1} \cdot\mathbf{s} <  1,~{\mathbf{Q}} \succ 0 ~\}$. We have the ${\Omega} \subseteq \mathbb{S}$, if the $\mathbf{I}_{m} - \overline{\mathbf{C}} \cdot \mathbf{Q} \cdot \overline{\mathbf{C}}^{\top} \succ 0$ holds.\label{safelemmaorg}
\end{lemma}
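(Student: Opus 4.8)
The plan is to prove the containment $\Omega \subseteq \mathbb{S}$ pointwise and row-by-row. Fix an arbitrary $\mathbf{s} \in \Omega$, so that $\mathbf{s}^\top \cdot \mathbf{Q}^{-1} \cdot \mathbf{s} < 1$, and show that every component of $\overline{\mathbf{C}} \cdot \mathbf{s}$ is strictly below $1$, which is exactly the membership condition for $\mathbb{S}$. Write $\overline{\mathbf{c}}_i^\top$ for the $i$-th row of $\overline{\mathbf{C}}$ (for $i = 1, \ldots, m$), so that the $i$-th component of $\overline{\mathbf{C}} \cdot \mathbf{s}$ is the scalar $\overline{\mathbf{c}}_i^\top \cdot \mathbf{s}$, and the goal reduces to establishing $\overline{\mathbf{c}}_i^\top \cdot \mathbf{s} < 1$ for each $i$.

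First I would extract the needed per-row bound from the hypothesis. Since $\mathbf{I}_{m} - \overline{\mathbf{C}} \cdot \mathbf{Q} \cdot \overline{\mathbf{C}}^{\top} \succ 0$, every diagonal entry of this positive-definite matrix is strictly positive; the $i$-th diagonal entry equals $1 - \overline{\mathbf{c}}_i^\top \cdot \mathbf{Q} \cdot \overline{\mathbf{c}}_i$, so $\overline{\mathbf{c}}_i^\top \cdot \mathbf{Q} \cdot \overline{\mathbf{c}}_i < 1$ for every $i$. Equivalently, one tests the quadratic form $\mathbf{I}_m - \overline{\mathbf{C}} \cdot \mathbf{Q} \cdot \overline{\mathbf{C}}^\top \succ 0$ against the standard basis vector $\mathbf{e}_i$.

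Next I would close the gap between these two quadratic bounds with the Cauchy--Schwarz inequality applied in the factorization $\mathbf{Q} = \mathbf{Q}^{1/2} \cdot \mathbf{Q}^{1/2}$. Applying Cauchy--Schwarz to the vectors $\mathbf{Q}^{1/2} \cdot \overline{\mathbf{c}}_i$ and $\mathbf{Q}^{-1/2} \cdot \mathbf{s}$ gives
\[
(\overline{\mathbf{c}}_i^\top \cdot \mathbf{s})^2 = \bigl(\overline{\mathbf{c}}_i^\top \cdot \mathbf{Q}^{1/2} \cdot \mathbf{Q}^{-1/2}\cdot\mathbf{s}\bigr)^2 \le \bigl(\overline{\mathbf{c}}_i^\top \cdot \mathbf{Q} \cdot \overline{\mathbf{c}}_i\bigr) \cdot \bigl(\mathbf{s}^\top \cdot \mathbf{Q}^{-1}\cdot\mathbf{s}\bigr).
\]
Both factors on the right lie in $[0,1)$ --- the first by the diagonal bound above (nonnegativity from $\mathbf{Q} \succ 0$), the second by $\mathbf{s} \in \Omega$ --- so their product is strictly less than $1$, whence $(\overline{\mathbf{c}}_i^\top \cdot \mathbf{s})^2 < 1$ and therefore $\overline{\mathbf{c}}_i^\top \cdot \mathbf{s} < 1$. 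As $i$ was arbitrary, $\overline{\mathbf{C}} \cdot \mathbf{s} < \mathbf{1}_m$, i.e. $\mathbf{s} \in \mathbb{S}$, which completes the inclusion.

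There is no genuinely hard step here; the only points requiring care are (i) reading off the strict diagonal bounds $\overline{\mathbf{c}}_i^\top \cdot \mathbf{Q} \cdot \overline{\mathbf{c}}_i < 1$ from positive-definiteness of $\mathbf{I}_m - \overline{\mathbf{C}} \cdot \mathbf{Q} \cdot \overline{\mathbf{C}}^\top$, and (ii) checking that strictness is preserved through the product of the two bounds, so that the open set $\Omega$ maps into the open set $\mathbb{S}$. Should one prefer an LMI-only route, the per-row bound $\overline{\mathbf{c}}_i^\top \cdot \mathbf{Q} \cdot \overline{\mathbf{c}}_i < 1$ can instead be obtained by applying the Schur complement of \cref{auxlem2} to the block $\left[\begin{smallmatrix} 1 & \overline{\mathbf{c}}_i^\top \cdot \mathbf{Q} \\ \mathbf{Q}\cdot\overline{\mathbf{c}}_i & \mathbf{Q} \end{smallmatrix}\right]$, but the Cauchy--Schwarz argument is the most direct.
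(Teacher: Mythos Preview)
Your argument is correct: extracting the per-row bound $\overline{\mathbf{c}}_i^\top \cdot \mathbf{Q} \cdot \overline{\mathbf{c}}_i < 1$ from the diagonal of $\mathbf{I}_m - \overline{\mathbf{C}}\cdot\mathbf{Q}\cdot\overline{\mathbf{C}}^\top \succ 0$ and then closing with Cauchy--Schwarz in the $\mathbf{Q}^{1/2}$ inner product is the standard, clean way to obtain this ellipsoid-in-polytope containment. Note, however, that the paper does not supply its own proof of this lemma; it is listed among the auxiliary lemmas in \cref{Aux} and attributed to \cite{seto1999engineering} without further argument, so there is no in-paper proof to compare against. Your write-up would serve as a complete, self-contained justification that the paper omits.
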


\vspace{1.0cm}

\begin{lemma} \cite{Phydrl1}
We define two action sets: $\mathbb{A} \triangleq \left\{ {\left. {\mathbf{a} \in {\mathbb{R}^m}} \right| \overline{\mathbf{D}} \cdot \mathbf{a} < \mathbf{1}_{p}}\right\}$ and $\Xi \triangleq \left\{ {\left. {\mathbf{a} \in {\mathbb{R}^m}} \right|{\mathbf{a}^\top} \cdot {\mathbf{T}}^{-1} \cdot \mathbf{a} < 1,~{\mathbf{T}} \succ 0} \right\}$. We have the ${\Xi} \subseteq \mathbb{A}$, if the $\mathbf{I}_{p} - \overline{\mathbf{D}} \cdot \mathbf{T} \cdot \overline{\mathbf{D}}^{\top} \succ 0$ holds. \label{consact}
\end{lemma}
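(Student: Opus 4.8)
The plan is to prove the inclusion $\Xi \subseteq \mathbb{A}$ by reducing the vector-valued membership condition that defines $\mathbb{A}$ to a family of scalar inequalities, one per row of $\overline{\mathbf{D}}$, and then bounding each such scalar via a weighted Cauchy--Schwarz estimate extracted from the LMI hypothesis. Let $\overline{\mathbf{d}}_i^\top$ denote the $i$-th row of $\overline{\mathbf{D}}$ for $i \in \{1, \ldots, p\}$, so that membership $\mathbf{a} \in \mathbb{A}$ is equivalent to the $p$ scalar conditions $\overline{\mathbf{d}}_i^\top \cdot \mathbf{a} < 1$. It therefore suffices to show that every $\mathbf{a} \in \Xi$ satisfies $\overline{\mathbf{d}}_i^\top \cdot \mathbf{a} < 1$ for each $i$.

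First I would extract the diagonal information encoded in the hypothesis. Since $\mathbf{I}_p - \overline{\mathbf{D}} \cdot \mathbf{T} \cdot \overline{\mathbf{D}}^\top \succ 0$, testing this quadratic form against the $i$-th standard basis vector $\mathbf{e}_i \in \mathbb{R}^p$ yields $\mathbf{e}_i^\top \cdot (\mathbf{I}_p - \overline{\mathbf{D}} \cdot \mathbf{T} \cdot \overline{\mathbf{D}}^\top) \cdot \mathbf{e}_i > 0$, which is precisely $1 - \overline{\mathbf{d}}_i^\top \cdot \mathbf{T} \cdot \overline{\mathbf{d}}_i > 0$, i.e. $\overline{\mathbf{d}}_i^\top \cdot \mathbf{T} \cdot \overline{\mathbf{d}}_i < 1$ for every row index $i$. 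Next I would invoke a generalized Cauchy--Schwarz step: because $\mathbf{T} \succ 0$ it admits a symmetric positive-definite square root $\mathbf{T}^{1/2}$, and writing $\overline{\mathbf{d}}_i^\top \cdot \mathbf{a} = (\mathbf{T}^{1/2} \cdot \overline{\mathbf{d}}_i)^\top \cdot (\mathbf{T}^{-1/2} \cdot \mathbf{a})$ and applying the ordinary Cauchy--Schwarz inequality gives
\begin{align}
(\overline{\mathbf{d}}_i^\top \cdot \mathbf{a})^2 \le (\overline{\mathbf{d}}_i^\top \cdot \mathbf{T} \cdot \overline{\mathbf{d}}_i) \cdot (\mathbf{a}^\top \cdot \mathbf{T}^{-1} \cdot \mathbf{a}). \nonumber
\end{align}

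For any $\mathbf{a} \in \Xi$ the second factor satisfies $\mathbf{a}^\top \cdot \mathbf{T}^{-1} \cdot \mathbf{a} < 1$ by definition of $\Xi$, while the first factor satisfies $\overline{\mathbf{d}}_i^\top \cdot \mathbf{T} \cdot \overline{\mathbf{d}}_i < 1$ by the previous step. Since both factors are nonnegative (by $\mathbf{T} \succ 0$) and each is strictly below $1$, their product is strictly below $1$; hence $(\overline{\mathbf{d}}_i^\top \cdot \mathbf{a})^2 < 1$, and in particular $\overline{\mathbf{d}}_i^\top \cdot \mathbf{a} < 1$. As this holds for every $i \in \{1, \ldots, p\}$, we obtain $\overline{\mathbf{D}} \cdot \mathbf{a} < \mathbf{1}_p$, i.e. $\mathbf{a} \in \mathbb{A}$, establishing $\Xi \subseteq \mathbb{A}$.

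This argument is essentially routine and is in fact the exact analogue of \cref{safelemmaorg} under the substitution $\mathbf{s} \mapsto \mathbf{a}$, $\overline{\mathbf{C}} \mapsto \overline{\mathbf{D}}$, $\mathbf{Q} \mapsto \mathbf{T}$ (with the constraint count $m$ replaced by $p$), so an alternative one-line proof is to cite \cref{safelemmaorg} directly. The only point warranting care is the strictness of the inequalities: both the diagonal bound and the ellipsoid membership are strict, and for nonnegative $a$ and $b$ with $a < 1$ and $b < 1$ one has $ab \le a < 1$, so no boundary case can spoil the strict containment. I expect no genuine obstacle here.
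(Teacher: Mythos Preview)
Your proof is correct. Note that the paper does not actually prove this lemma: it is stated in the auxiliary-lemmas appendix with a citation to \cite{Phydrl1} and no argument, so there is no ``paper's own proof'' to compare against. Your Cauchy--Schwarz argument via the square root $\mathbf{T}^{1/2}$ is the standard route, and your closing observation that the result is the direct analogue of \cref{safelemmaorg} under the substitution $(\mathbf{s},\overline{\mathbf{C}},\mathbf{Q}) \mapsto (\mathbf{a},\overline{\mathbf{D}},\mathbf{T})$ is exactly right---indeed, the paper lists the two lemmas side by side precisely because they are the same containment principle applied once to states and once to actions.
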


\vspace{1.0cm}

\begin{lemma}\label{wellk}
For two vectors $\mathbf{x} \in \mathbb{R}^{n}$, $\mathbf{y} \in \mathbb{R}^{n}$, and a matrix $\mathbf{P} \succ 0$, we have 
\begin{align}
2 \cdot {\mathbf{x}^\top} \cdot \mathbf{P} \cdot \mathbf{y} \le \phi  \cdot {\mathbf{x}^\top} \cdot \mathbf{P} \cdot \mathbf{x} + \frac{1}{\phi} \cdot {\mathbf{y}^\top} \cdot \mathbf{P} \cdot  \mathbf{y}, ~~with~~\phi > 0. \nonumber
\end{align}
\end{lemma}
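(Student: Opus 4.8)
The plan is to recognize this as the generalized Young's inequality for the inner product induced by the positive definite matrix $\mathbf{P}$, and to derive it from the nonnegativity of a single judiciously chosen quadratic form. Since $\mathbf{P} \succ 0$, every quadratic form $\mathbf{z}^\top \cdot \mathbf{P} \cdot \mathbf{z}$ is nonnegative; the entire result will follow by selecting $\mathbf{z}$ so that the expansion of $\mathbf{z}^\top \cdot \mathbf{P} \cdot \mathbf{z}$ reproduces exactly the three quadratic terms appearing in the claim.

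Concretely, first I would set $\mathbf{z} \triangleq \sqrt{\phi} \cdot \mathbf{x} - \frac{1}{\sqrt{\phi}} \cdot \mathbf{y}$, which is well-defined because $\phi > 0$ guarantees that $\sqrt{\phi}$ is real and nonzero. Then I would invoke positive definiteness to assert $\mathbf{z}^\top \cdot \mathbf{P} \cdot \mathbf{z} \ge 0$. Next I would expand this quadratic form: the scaled self-terms produce $\phi \cdot \mathbf{x}^\top \cdot \mathbf{P} \cdot \mathbf{x}$ and $\frac{1}{\phi} \cdot \mathbf{y}^\top \cdot \mathbf{P} \cdot \mathbf{y}$, while the two cross terms are $-\mathbf{x}^\top \cdot \mathbf{P} \cdot \mathbf{y}$ and $-\mathbf{y}^\top \cdot \mathbf{P} \cdot \mathbf{x}$. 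Using the symmetry $\mathbf{P} = \mathbf{P}^\top$, these two scalar cross terms coincide and combine into $-2 \cdot \mathbf{x}^\top \cdot \mathbf{P} \cdot \mathbf{y}$. Collecting terms gives $\phi \cdot \mathbf{x}^\top \cdot \mathbf{P} \cdot \mathbf{x} - 2 \cdot \mathbf{x}^\top \cdot \mathbf{P} \cdot \mathbf{y} + \frac{1}{\phi} \cdot \mathbf{y}^\top \cdot \mathbf{P} \cdot \mathbf{y} \ge 0$, and rearranging to isolate $2 \cdot \mathbf{x}^\top \cdot \mathbf{P} \cdot \mathbf{y}$ on the left yields exactly the stated bound.

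There is essentially no substantive obstacle in this argument; the only point requiring care is the handling of the two cross terms, where one must explicitly use that $\mathbf{P}$ is symmetric (a property implicit in the convention $\mathbf{P} \succ 0$ of this paper) so that $\mathbf{y}^\top \cdot \mathbf{P} \cdot \mathbf{x} = \mathbf{x}^\top \cdot \mathbf{P} \cdot \mathbf{y}$, allowing them to be merged into the single factor of $2$. As an aside, equality holds precisely when $\mathbf{z} = \mathbf{0}_{n}$, i.e. when $\phi \cdot \mathbf{x} = \mathbf{y}$; although the lemma as stated requires only the inequality, noting this confirms that the bound is tight and cannot be improved in general.
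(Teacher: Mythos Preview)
Your proof is correct and follows exactly the same approach as the paper: expand the nonnegative quadratic form $(\sqrt{\phi}\,\mathbf{x} - \tfrac{1}{\sqrt{\phi}}\,\mathbf{y})^\top \mathbf{P}\,(\sqrt{\phi}\,\mathbf{x} - \tfrac{1}{\sqrt{\phi}}\,\mathbf{y}) \ge 0$ and rearrange. Your version is in fact slightly more explicit in pointing out the use of the symmetry of $\mathbf{P}$ to combine the cross terms, which the paper leaves implicit.
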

\begin{proof}
The proof straightforwardly follows inequality of:
\begin{align}
({\sqrt{{\phi}} \cdot \mathbf{x} \!-\! \frac{1}{{\sqrt{{\phi}}}} \cdot \mathbf{y}})^\top \cdot \mathbf{P} \cdot  ({\sqrt{{\phi}}   \cdot \mathbf{x} \!-\! \frac{1}{{\sqrt{{\phi}}}} \cdot \mathbf{y}})  = {\phi}  \cdot {\mathbf{x}^\top} \!\cdot \mathbf{P} \cdot \mathbf{x} \!+\! \frac{1}{{\phi} } \cdot {\mathbf{y}^\top} \cdot \mathbf{P} \cdot \mathbf{y} \!-\! 2 \cdot {\mathbf{x}^\top} \cdot \mathbf{P} \cdot  \mathbf{y} \!\ge\! 0, \nonumber
\end{align}
which is due to $\mathbf{P} \succ 0$ and $\phi > 0$. 
\end{proof}

\vspace{1.0cm}

\begin{lemma}
Given any $\lambda > 0$ and matrix $\mathbf{X} \in \mathbf{R}^{m \times n}$, the matrix $\mathbf{X} \cdot \mathbf{X}^{\top} + \lambda  \cdot \mathbf{I}_{m}$ is invertible. \label{aplemma1}
\end{lemma}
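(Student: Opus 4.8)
The plan is to show that the symmetric matrix $\mathbf{X} \cdot \mathbf{X}^{\top} + \lambda \cdot \mathbf{I}_{m}$ has trivial null space, which is equivalent to invertibility for a square matrix. First I would observe that $\mathbf{X} \cdot \mathbf{X}^{\top}$ is symmetric and positive semidefinite, since for any $\mathbf{v} \in \mathbb{R}^{m}$ we have $\mathbf{v}^{\top} \cdot \mathbf{X} \cdot \mathbf{X}^{\top} \cdot \mathbf{v} = \|\mathbf{X}^{\top} \cdot \mathbf{v}\|^{2} \ge 0$.

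Next I would take an arbitrary $\mathbf{v}$ in the null space, i.e., $(\mathbf{X} \cdot \mathbf{X}^{\top} + \lambda \cdot \mathbf{I}_{m}) \cdot \mathbf{v} = \mathbf{0}_{m}$, and left-multiply by $\mathbf{v}^{\top}$ to obtain $\|\mathbf{X}^{\top} \cdot \mathbf{v}\|^{2} + \lambda \cdot \|\mathbf{v}\|^{2} = 0$. Since both terms are nonnegative and $\lambda > 0$, this forces $\|\mathbf{v}\|^{2} = 0$, hence $\mathbf{v} = \mathbf{0}_{m}$. Therefore the null space is trivial and $\mathbf{X} \cdot \mathbf{X}^{\top} + \lambda \cdot \mathbf{I}_{m}$ is invertible. (Equivalently, one could note that the eigenvalues of $\mathbf{X} \cdot \mathbf{X}^{\top}$ are all nonnegative, so those of $\mathbf{X} \cdot \mathbf{X}^{\top} + \lambda \cdot \mathbf{I}_{m}$ are all at least $\lambda > 0$, making it positive definite and thus nonsingular.)

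There is essentially no obstacle here; the only thing to be careful about is keeping the argument dimension-correct — $\mathbf{X} \cdot \mathbf{X}^{\top}$ is the $m \times m$ Gram-type matrix (not $\mathbf{X}^{\top} \cdot \mathbf{X}$), so the identity added is $\mathbf{I}_{m}$ and the test vectors $\mathbf{v}$ live in $\mathbb{R}^{m}$. This lemma is then exactly the tool needed to guarantee that the relevant matrix inverses appearing in the adaptive compensation mechanism (\cref{adapt}) are well defined.
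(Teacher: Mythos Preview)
Your proof is correct and matches the paper's own argument: both compute $\mathbf{v}^{\top}(\mathbf{X}\mathbf{X}^{\top}+\lambda\mathbf{I}_{m})\mathbf{v}=\|\mathbf{X}^{\top}\mathbf{v}\|^{2}+\lambda\|\mathbf{v}\|^{2}$ and conclude positive definiteness (hence invertibility). The only cosmetic difference is that the paper states positive definiteness directly for every nonzero $\mathbf{v}$, whereas you frame it as showing the null space is trivial; the underlying computation is identical.
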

\begin{proof}
The proof is straightforward by letting $\mathbf{y} = \mathbf{X}^{\top} \cdot \mathbf{x}$. For any non-zero vector $\mathbf{x}$, we have $\mathbf{x}^{\top} \cdot (\mathbf{X} \cdot \mathbf{X}^{\top} + \lambda  \cdot \mathbf{I}_m) \cdot \mathbf{x} = \left\| \mathbf{y} \right\|_2^2 + \lambda  \cdot \left\| \mathbf{x} \right\|_2^2 > 0$, where $\left\| \cdot \right\|_2$ denotes the $L_2$-norm of a vector. So, $\mathbf{X} \cdot \mathbf{X}^{\top} + \lambda  \cdot \mathbf{I}_m$ is a positive definite matrix, thus always invertible. 
\end{proof}

\newpage
\section{Pseudocode: Real-DRL Framework} \label{pesudocoderealdrl}
\begin{algorithm}
{\caption{Real-DRL Algorithm} \label{realdrl}
  \small
  \begin{algorithmic}[1]  
  \State Real plant operates from the safe self-learning space $\mathbb{L}$ initially, i.e., $\mathbf{s}(t) \in \mathbb{L}$ at initial time $t = 1$; \label{algini}
  \State DRL-Student's real-time actor-network outputs the action $\mathbf{a}_{\text{student}}(t)$ given the input $\mathbf{s}(t)$; \label{selfactionc1}
  \boxitff{yellow}{2.1}
  \State DRL-Student applies $\mathbf{a}_{\text{student}}(t)$ to real plan to have the state $\mathbf{s}(t+1)$;
  \State Update time index: $k \gets t+1$; 
  \If{$\mathbf{s}(t+1) \notin \mathbb{L}$} \label{tl1}
  \boxitb{green}{18}
    \State PHY-Teacher uses the real-time state $\mathbf{s}(k)$ to update physics-model knowledge $({\mathbf{A}}(\mathbf{s}(k)), {\mathbf{B}}(\mathbf{s}(k))$; \label{tl2}
    \State PHY-Teacher uses the real-time state $\mathbf{s}(k)$ to update control goal $\mathbf{s}_{k}^*$ in \cref{center}; \label{tl3}
    \State PHY-Teacher computes ${\mathbf{R}}_{k}$ and ${\mathbf{Q}}_{k}$ from inequalities in \cref{thop1,thop2pp}; \label{tl4}
    \State PHY-Teacher computes action policy, denoted by ${\mathbf{F}}_{k}$, according to \cref{acc0}; \label{tl5}
    \State PHY-Teacher computes action $\mathbf{a}_{\text{teacher}}(t+1)$ according to \cref{hacteacherpolicy}, given real-time state $\mathbf{s}(t+1)$; \label{tl6}
    \State PHY-Teacher applies $\mathbf{a}_{\text{teacher}}(t+1)$ to real plan to back up safety and have $\mathbf{s}(t+2)$; \label{tl7}
    \State PHY-Teacher computes $\mathcal{R}(\mathbf{s}(t+1), \mathbf{a}_{\text{teacher}}(t+1),\mathbf{s}(t+2))$; \label{tl8}
   \State PHY-Teacher compiles the teaching-to-learn experience data as described in \cref{tldata1}; \label{tl9}
   \State PHY-Teacher stores the teaching-to-learn experience data \eqref{tldata1} into teaching-to-learning replay buffer; \label{tl10}
   \State Generate batch samples according to the safety-informed batch sampling mechanism \eqref{bsma}; \label{tl11}
\State Apply off-policy algorithms to update critic and actor networks, using the generated batch samples; \label{tl12}
\If{$\mathbf{s}(t+2) \notin \mathbb{L}$} \label{tl13}
\State Update time index: $t+1 \leftarrow t + 2$; \label{tl14}
\State Go to \cref{tl6}; \label{tl15}
\Else \label{tl16}
\State Update time index: $t \leftarrow t + 2$; \label{tl17}
\State Go to \cref{selfactionc1}; \label{tl18}
\EndIf \label{tln} \label{tl19}
\Else \label{sl1}
\boxitf{yellow}{8.0}
\State DRL-Student computes the reward $\mathcal{R}(\mathbf{s}(t), \mathbf{a}_{\text{student}}(t),\mathbf{s}(t+1))$; \label{sl2}
\State DRL-Student compiles the self-learning experience data as described in \cref{sldata}; \label{sl3}
\State DRL-Student stores the self-learning experience data \eqref{sldata} into self-learning replay buffer; \label{sl3a}
\State Generate batch samples according to the safety-informed batch sampling mechanism \eqref{bsma}; \label{sl4}
\State Apply off-policy algorithm to update critic and actor networks, using the generated batch samples; \label{sl5}
\State Update time index: $t \leftarrow t + 1$;
\State Go to \cref{selfactionc1}; \label{sl6}
\EndIf  \label{sl7}
  \end{algorithmic}}
\end{algorithm}

The working mechanism of our proposed Real-DRL is explained in \cref{realdrl}. Below, we provide additional clarifications. The actions of the activated DRL-Student and PHY-Teacher are highlighted in the \textcolor{yellow}{yellow} and \textcolor{green}{green} blocks, respectively. As indicated in \cref{algini} and \cref{tl1}, when the triggering condition described by \eqref{trigger} is met, the PHY-Teacher is activated to back up the safety of the real plant (as detailed in \cref{tl2,tl3,tl4,tl5,tl6,tl7}) and to foster the teaching-to-learn paradigm (as described in \cref{tl8,tl9,tl10,tl11,tl12}). Furthermore, \cref{tl13,tl14,tl15,tl16,tl17,tl18,tl19} outlines the process in which, once the PHY-Teacher has guided the real plant back to the self-learning space, control is returned to the DRL-Student. At that point, the DRL-Student is activated while the PHY-Teacher is deactivated.

The \cref{tl11,tl12} in the \textcolor{yellow}{yellow} block (i.e., the activation period of the PHY-Teacher) and \cref{sl4,sl5} in the \textcolor{green}{green} block (i.e., the activation period of the DRL-Student) indicate that during the activation period of either the DRL-Student or the PHY-Teacher, the critic and actor networks continue to update through off-policy algorithms. The generated batch samples are based on safety-informed batch sampling, which includes samples drawn from both the teach-to-learn and self-learning replay buffers. Their contribution ratios depend on the real-time safety status, represented as $V(\mathbf{s}(k))$. Therefore, we can conclude that Real-DRL operates within a dual paradigm of self-learning and teaching-to-learn during its runtime operation.

\newpage
\section{Proof of \cref{thmdrl}} \label{thmdrlpf}
The overall proof consists of the following two steps.

\subsection*{Step 1: Conclude: Set $\left\{ {\left. \mathbf{s} \in {\mathbb{R}^n} ~\right|} \right. V(\mathbf{s}) <  1\} \subseteq \mathbb{S}$.}
The safety set defined in \cref{aset2} can be transformed into an equivalent representation as follows: 
\begin{align}
{\mathbb{S}} = \left\{ \left. \mathbf{s} \in {\mathbb{R}^n} \right|{(\mathbf{C} \cdot \textbf{diag}^{-1}\{\mathbf{c}\}) \cdot \mathbf{s} \le \mathbf{1}_{p}}  \right\}.\label{aux1aux1}
\end{align}
Meanwhile, letting $\mathbf{Q} = \mathbf{P}^{-1}$, the conditions (i.e., $\mathbf{I}_{p} - (\mathbf{C} \cdot \textbf{diag}^{-1}\{\mathbf{c}\}) \cdot \widetilde{\mathbf{P}}^{-1} \cdot (\mathbf{C} \cdot \textbf{diag}^{-1}\{\mathbf{c}\})^\top \succ 0$ and $\widetilde{\mathbf{P}} \succ 0$) included in \cref{ssind2} imply that
\begin{align}
\mathbf{I}_{p} - (\mathbf{C} \cdot \textbf{diag}^{-1}\{\mathbf{c}\}) \cdot {\mathbf{P}}^{-1} \cdot (\mathbf{C} \cdot \textbf{diag}^{-1}\{\mathbf{c}\})^\top \succ 0 ~~\text{and}~~ {\mathbf{Q}} \succ 0. \label{aux1aux2}
\end{align}
Then, by applying \cref{safelemmaorg} from \cref{Aux} and taking into account the $V(\mathbf{s})$ defined in \cref{ssind}, we conclude from \cref{aux1aux1,aux1aux2} that:
\begin{align}
\left\{ {\left. \mathbf{s} \in {\mathbb{R}^n} ~\right|} \right.{{\mathbf{s}}^\top} \cdot {\mathbf{Q}}^{-1} \cdot\mathbf{s} \le  1 \} = \left\{ {\left. \mathbf{s} \in {\mathbb{R}^n} ~\right|} \right.{{\mathbf{s}}^\top} \cdot {\mathbf{P}} \cdot\mathbf{s} \le  1 \} = \left\{ {\left. \mathbf{s} \in {\mathbb{R}^n} ~\right|} \right. V(\mathbf{s}) \le  1\} \subseteq \mathbb{S} \label{aux1aux3}
\end{align}

\subsection*{Step 2: Proof of maximum volume.}
We recall that the volume of an ellipsoid, defined by the set $\left\{ {\left. \mathbf{s} \in {\mathbb{R}^n} ~\right|} \right.{{\mathbf{s}}^\top} \cdot {\mathbf{P}} \cdot\mathbf{s} \le  1 \} = \left\{ {\left. \mathbf{s} \in {\mathbb{R}^n} ~\right|} \right. V(\mathbf{s}) \le  1\}$, is proportional to $\sqrt{\det(\mathbf{P}^{-1})}$ \cite{seto1999engineering,van2009minimum}. Therefore, maximizing the volume is equivalent to minimizing the determinant: $\det(\mathbf{P}^{-1})$. This leads to the optimal problem outlined in \cref{ssind2}. As a result, the ellipsoid set $\left\{ {\left. \mathbf{s} \in {\mathbb{R}^n} ~\right|} \right. V(\mathbf{s}) \le  1\}$ achieves the maximum volume. Thus, the proof is completed.

\newpage
\section{Proof of \cref{thm10007p}} \label{pathppff}
According to \cref{csafe}, to prove the PHY-Teacher -- designed by \cref{thm10007p} -- has assured safety, we shall prove the PHY-Teacher's real-time actions and the resulted system states simultaneously satisfy three requirements. They are when $\mathbf{s}(k) \in \partial\mathbb{L}$, then i) the $\mathbf{s}(t) \in \mathbb{S}$ holds for any time $t \in \mathbb{T}_{k}$, ii) $\mathbf{a}_{\text{teacher}}(t) = \pi_{\text{teacher}}(\mathbf{s}(t)) \in \mathbb{A}$ holds for any time $t \in \mathbb{T}_{k}$, and iii) there exists a $\tau_{k} \in \mathbb{N}$, such that $\mathbf{s}(k+\tau_{k}) \in \mathbb{L}$. The overall proof will separated into seven steps to conclude the three conditions i)--iii) hold in \textbf{Steps 4, 5, and 7}, respectively. These steps are detailed below.

\subsection*{\underline{Step 1: Derivations from the first inequality in \cref{thop1}}.}
Recalling the Schur Complement in \cref{auxlem2} of \cref{Aux}, and considering that ${\mathbf{Q}}_{k} \succ 0$, we conclude that the first inequality in \cref{thop1} is equivalent to
\begin{align} 
\alpha \cdot {\mathbf{Q}}_{k} -  (1\!+\!\phi) \cdot ({\mathbf{Q}}_{k} \!\cdot\! \mathbf{A}^\top(\mathbf{s}(k)) + {\mathbf{R}}_{k}^\top \!\cdot\! \mathbf{B}^\top(\mathbf{s}(k))) \cdot {\mathbf{Q}}_{k}^{-1}  \cdot (\mathbf{A}(\mathbf{s}(k)) \!\cdot\! {\mathbf{Q}}_{k} + \mathbf{B}(\mathbf{s}(k)) \!\cdot\! {\mathbf{R}}_{k})  \succ 0, \nonumber
\end{align}
multiplying both the left-hand side and the right-hand side of which by ${\mathbf{Q}}_{k}^{-1}$ yields: 
\begin{align}
&\alpha \cdot {\mathbf{Q}}_{k}^{-1}  -  (1\!+\!\phi) \cdot (\mathbf{A}^\top(\mathbf{s}(k)) + {\mathbf{Q}}_{k}^{-1} \cdot {\mathbf{R}}_{k}^\top \cdot \mathbf{B}^\top(\mathbf{s}(k))) \cdot {\mathbf{Q}}_{k}^{-1} \cdot (\mathbf{A}(\mathbf{s}(k)) + \mathbf{B}(\mathbf{s}(k)) \cdot {\mathbf{R}}_{k} \cdot \mathbf{Q}_{k}^{-1})  \nonumber\\
&\succ 0, \nonumber
\end{align}
substituting the definitions in \cref{acc0} into which, we obtain:
\begin{align}
\alpha  \cdot {\mathbf{P}}_{k} -  (1+\phi) \cdot(\mathbf{A}^\top(\mathbf{s}(k)) +  {\mathbf{F}}_{k}^\top \cdot \mathbf{B}^\top(\mathbf{s}(k))) \cdot {\mathbf{P}}_{k} 
 \cdot  (\mathbf{A}(\mathbf{s}(k)) + \mathbf{B}(\mathbf{s}(k)) \cdot {\mathbf{F}}_{k})  \succ 0, 
\end{align}
which is equivalent to 
\begin{align}
\alpha \cdot {\mathbf{P}}_{k} - (1+\phi) \cdot {{{\overline{\mathbf{A}}}^\top}(\mathbf{s}(k)) \cdot {\mathbf{P}}_{k} \cdot \overline{\mathbf{A}}(\mathbf{s}(k))}  \succ 0, \label{lipassm5}
\end{align}
where we define: 
\begin{align}
\overline{\mathbf{A}}({\mathbf{s}}(k)) \buildrel \Delta \over = \mathbf{A}({\mathbf{s}}(k)) + \mathbf{B}({\mathbf{s}}(k)) \cdot \mathbf{F}_{k}.  \label{lipassm}
\end{align}

\vspace{0.1cm}
\subsection*{\underline{Step 2: Derivations from the first inequality in \cref{thop2pp}}.}
We define the set: 
\begin{align}
\Omega_{k} \triangleq \left\{ {\left. \mathbf{s} ~\right|} \right.{(\mathbf{s} - \mathbf{s}_{k}^*)^\top} \cdot {\mathbf{Q}}_{k}^{-1} \cdot (\mathbf{s} - \mathbf{s}_{k}^*) <  1,~~{\mathbf{Q}_{k}} \succ 0 ~\}. \label{pfenvelope}
\end{align}
Considering the definition $\mathbf{e}(t) \triangleq \mathbf{s}(t) - \mathbf{s}_{k}^*$ and the transformation $\overline{\mathbf{C}} \triangleq \mathbf{C} \cdot \text{diag}^{-1}\{\theta \cdot \mathbf{c}\}$, the patch defined in \cref{aset4} can be equivalently expressed as ${\mathbb{P}_{k}} = \left\{ \left. \mathbf{s} \in {\mathbb{R}^n} \right|{\overline{\mathbf{C}} \cdot (\mathbf{s} - \mathbf{s}_{k}^*) < \mathbf{1}}  \right\}$. Meanwhile, by considering the first inequality in \cref{thop2pp} and applying \cref{safelemmaorg} of \cref{Aux}, we conclude:
\begin{align}
\Omega_{k} \subseteq \mathbb{P}_{k}~\text{because}~ \mathbf{I}_{p} - \overline{\mathbf{C}} \cdot \mathbf{Q}_{k} \cdot \overline{\mathbf{C}}^{\top} \succ 0. \label{pfccoo}
\end{align}

We note that the patch in \cref{aset4} can also be rewritten as follows: 
\begin{align}
{\mathbb{P}_{k}} = \left\{ {\left. \mathbf{s} \in {\mathbb{R}^n} \right|-\theta \cdot \mathbf{c} + {\mathbf{C}} \cdot \mathbf{s}_{k}^* < {\mathbf{C}} \cdot \mathbf{s}  < \theta \cdot \mathbf{c} + {\mathbf{C}} \cdot \mathbf{s}_{k}^*}  \right\}. \label{aset4pf}
\end{align}
From \cref{aset3,trigger,center}, we obtain that ${\mathbf{C}} \cdot \mathbf{s}_{k}^* = \chi \cdot {\mathbf{C}} \cdot \mathbf{s}(k) = \chi \cdot \eta \cdot \mathbf{c}$ or $- \chi \cdot \eta \cdot \mathbf{c}$. Substituting this into \cref{aset4pf} yields:
\begin{align}
&{\mathbb{P}_{k}} = \left\{ {\left. \mathbf{s} \in {\mathbb{R}^n} \right|-(\theta - \chi \cdot \eta) \cdot \mathbf{c} < {\mathbf{C}} \cdot \mathbf{s}  < (\theta + \chi \cdot \eta) \cdot \mathbf{c}}  \right\} \nonumber\\
& \hspace{4.8cm} or \left\{ {\left. \mathbf{s} \in {\mathbb{R}^n} \right|-(\theta + \chi \cdot \eta) \cdot \mathbf{c} < {\mathbf{C}} \cdot \mathbf{s}  < (\theta - \chi \cdot \eta) \cdot \mathbf{c}}  \right\}. \label{aset4pf1}
\end{align}

The first item in \cref{repf1} shows that $\theta +\chi \cdot \eta < 1$, which is equivalent to $-\theta -\chi \cdot \eta > -1$. Since both $\theta > 0$ and $\chi \cdot \eta > 0$, we can also deduce that $\theta -\chi \cdot \eta < 1$ and $-\theta + \chi \cdot \eta > -1$. Therefore, we conclude from \cref{aset3,aset4pf1} that $\mathbb{P}_{k} \subseteq \mathbb{S}$. This, in combination with \cref{pfccoo}, leads to the following result:
\begin{align}
\Omega_{k} \subseteq \mathbb{P}_{k} \subseteq \mathbb{S}. \label{pfccoo2}
\end{align}

\vspace{0.0cm}
\subsection*{\underline{Step 3: Derivations from the third inequality in \cref{thop2pp}}.}
The vector $\mathbf{s}(k)$ can have zero entries, for which we define:
\begin{align}
[\overline{\mathbf{s}}(k, \delta)]_{i} \triangleq \begin{cases}
		[\overline{\mathbf{s}}(k)]_{i} + \delta, &\text{if}~[\overline{\mathbf{s}}(k)]_{i} = 0\\ 
        [{\mathbf{s}}(k)]_{i},      &\text{otherwise}  
	\end{cases}, ~~~~~i=1,2, \ldots, n, \nonumber
\end{align}
where $\delta \neq 0$ and $[\overline{\mathbf{s}}(k)]_{i}$ represents the $i$-th entry of the vector $\overline{\mathbf{s}}(k)$. Thus, the vector $\overline{\mathbf{s}}(k)$ contains no zero entries. We therefore conclude:
\begin{align}
\lim_{\delta \rightarrow 0} \overline{\mathbf{s}}(k,\delta) = \mathbf{s}(k), \label{step3}
\end{align}
in light of which and the third inequality in \cref{thop2pp}, we can conclude the following:
\begin{align}
\lim_{\delta \rightarrow 0} \{\mathbf{Q}_{k} - n \cdot \textbf{diag}^{2}(\overline{\mathbf{s}}(k,\delta))\} = \mathbf{Q}_{k} - n \cdot \textbf{diag}^{2}({\mathbf{s}}(k))  \succ 0. \label{pfccoo2pft1}
\end{align}
The $\lim_{\delta \rightarrow 0} \{\mathbf{Q}_{k} - n \cdot \textbf{diag}^{2}(\overline{\mathbf{s}}(k,\delta))\}  \succ 0$ is equivalent to $\mathbf{Q}_{k} - \lim_{\delta \rightarrow 0} \{n \cdot \textbf{diag}^{2}(\overline{\mathbf{s}}(k,\delta))\} \succ 0$. Given that $\overline{\mathbf{s}}(k,\delta)$ does not have zero entries and the $\mathbf{Q}_{k} \succ 0$, we can further conclude that $\lim_{\delta \rightarrow 0} \{n^{-1} \cdot \textbf{diag}^{-2}(\overline{\mathbf{s}}(k,\delta))\} - \mathbf{Q}^{-1}_{k} \succ 0$, which means 
\begin{align}
n^{-1} \cdot \mathbf{I}_{n} - \lim_{\delta \rightarrow 0} \{\textbf{diag}(\overline{\mathbf{s}}(k,\delta)) \cdot \mathbf{Q}^{-1}_{k} \cdot \textbf{diag}(\overline{\mathbf{s}}(k,\delta))\} \succ 0, \nonumber
\end{align}
from which, the $\textbf{diag}(\overline{\mathbf{s}}(k)) \cdot \mathbf{1}_{n} = \overline{\mathbf{s}}(k)$, and the $\mathbf{Q}^{-1}_{k} = \mathbf{P}_{k}$, we have: 
\begin{align}
\lim_{\delta \rightarrow 0} \{\mathbf{1}^{\top}_{n} \cdot \textbf{diag}(\overline{\mathbf{s}}(k,\delta)) \cdot \mathbf{Q}^{-1}_{k} \cdot \textbf{diag}(\overline{\mathbf{s}}(k,\delta)) \cdot \mathbf{1}_{n} = \overline{\mathbf{s}}^{\top}(k,\delta) \cdot \mathbf{P}_{k} \cdot \overline{\mathbf{s}}(k,\delta)\} < n^{-1} \cdot \mathbf{1}^{\top}_{n} \cdot \mathbf{1}_{n} = 1. \nonumber
\end{align}
In conjunction with \cref{step3}, this leads to the following conclusion:
\begin{align}
{\mathbf{s}}^{\top}(k) \cdot \mathbf{P}_{k} \cdot {\mathbf{s}}(k) = \lim_{\delta \rightarrow 0} \{\overline{\mathbf{s}}^{\top}(k,\delta) \cdot \mathbf{P}_{k} \cdot \overline{\mathbf{s}}(k,\delta)\} < 1. \label{step3fn}
\end{align}

\vspace{0.1cm}
\subsection*{\underline{Step 4: Conclude: ${\mathbf{s}}(t) \in \mathbb{S}, \forall t \in \mathbb{T}_{k}$}.}
Considering the dynamics outlined in \cref{realsyserror}, the action policy described in \cref{hacteacherpolicy}, and the definition provided in \cref{lipassm}, we obtain:
\begin{align}
&{\mathbf{e}^\top}\left( {t + 1} \right) \cdot {\mathbf{P}}_{k} \cdot \mathbf{e}\left( {t + 1} \right) - \alpha  \cdot {\mathbf{e}^\top}\left( t \right) \cdot {\mathbf{P}}_{k} \cdot \mathbf{e}\left( t \right) \nonumber\\
& = {\mathbf{e}^\top}\left( t \right) \cdot \left( {{\overline{\mathbf{A}}^\top(\mathbf{s}(k))} \cdot {\mathbf{P}}_{k} \cdot \overline{\mathbf{A}}({\mathbf{s}}(k)) - \alpha  \cdot {\mathbf{P}}}_{k} \right) \cdot \mathbf{e}\left( t \right) + {\mathbf{h}^\top}\left( {\mathbf{e}\left( t \right)} \right) \cdot {\mathbf{P}}_{k} \cdot \mathbf{h}\left( {\mathbf{e}\left( t \right)} \right) \nonumber\\
& \hspace{5.8cm} + 2 \cdot {\mathbf{e}^\top}\left( t \right) \cdot  {\overline{\mathbf{A}}^{\top}(\mathbf{s}(k)) \cdot {\mathbf{P}}}_{k}  \cdot \mathbf{h}\left( {\mathbf{e}\left( t \right)} \right),  ~~~t \in \mathbb{T}_{k}.\label{frstderiv}
\end{align}
In light of \cref{wellk} of \cref{Aux}, we have:
\begin{align}
&2 \cdot {\mathbf{e}^\top}\left( t \right) \cdot  {\overline{\mathbf{A}}^{\top}(\mathbf{s}(k)) \cdot {\mathbf{P}}}_{k}  \cdot \mathbf{h}\left( {\mathbf{e}\left( t \right)} \right) 
\le \phi \cdot {\mathbf{e}^\top}\left( t \right) \cdot {{\overline{\mathbf{A}}}^\top} (\mathbf{s}(k)) \cdot {\mathbf{P}}_{k}  \cdot \overline{\mathbf{A}}(\mathbf{s}(k)) \cdot \mathbf{e}\left( t \right)  \nonumber\\
&\hspace{8.3cm} + \frac{1}{\phi} \cdot  {\mathbf{h}^\top}\left( {\mathbf{e}\left( t \right)} \right) \cdot {\mathbf{P}}_{k}  \cdot \mathbf{h}\left( {\mathbf{e}\left( t \right)} \right). \label{lipassm1}
\end{align}
We note that \cref{assm} means: 
\begin{align}
{\mathbf{h}^\top}( {\mathbf{e}\left( t \right)}) \cdot {\mathbf{P}}_{k}  \cdot \mathbf{h}( {\mathbf{e}\left( t \right)}) \le \kappa,  ~~~~\forall~ \mathbf{e}\left( t \right) \in {\mathbb{P}_{k}}. \label{lipassm2}
\end{align}
Substituting inequalities in \cref{lipassm1,lipassm2} into \cref{frstderiv} yields: 
\begin{align}
&{\mathbf{e}^\top}\left( {t + 1} \right) \cdot {\mathbf{P}}_{k} \cdot \mathbf{e}\left( {t + 1} \right) - \alpha  \cdot {\mathbf{e}^\top}\left( t \right) \cdot {\mathbf{P}}_{k} \cdot \mathbf{e}\left( t \right)   
\nonumber\\
&\le {\mathbf{e}^\top}\left( t \right) \cdot \left( (1+\phi) \cdot {{\overline{\mathbf{A}}}^\top}(\mathbf{s}(k)) \cdot {\mathbf{P}}_{k} \cdot \overline{\mathbf{A}}(\mathbf{s}(k)) - \alpha \cdot {\mathbf{P}}_{k} \right) \cdot \mathbf{e}\left( t \right) + \frac{(\phi + 1) \cdot \kappa}{\phi},  ~~~t \in \mathbb{T}_{k}.\label{lipassm3}
\end{align}
We now are able to conclude from \cref{lipassm5,lipassm3} that 
\begin{align}
{\mathbf{e}^\top}\left( {t + 1} \right) \cdot {\mathbf{P}}_{k} \cdot \mathbf{e}\left( {t + 1} \right) 
&< \alpha \cdot {\mathbf{e}^\top}\left( t \right) \cdot  {\mathbf{P}}_{k}  \cdot \mathbf{e}\left( t \right) + \frac{(\phi + 1) \cdot \kappa}{\phi},  ~~~t \in \mathbb{T}_{k}.\label{frstderivpfc1}
\end{align}

The left-hand inequality in the first item of \cref{repf1} is equivalent to $(1-\chi) \cdot \eta < \theta$. Given that $0 < \chi < 1$, this leads to the conclusion that $\frac{(1-\chi)^{2} \cdot \eta^{2}}{\theta^{2}} < 1$. Considering this, we can derive from the second item in \cref{repf1} that 
$\frac{(\phi + 1) \cdot \kappa}{(1 - \alpha) \cdot \phi} < 1$, which can be rewritten as $\alpha +  \frac{(\phi + 1) \cdot \kappa}{\phi} < 1$. From this inequality and \cref{step3fn}, we obtain:
\begin{align}
{\mathbf{e}^\top}\left( {t + 1} \right) \cdot {\mathbf{P}}_{k} \cdot \mathbf{e}\left( {t + 1} \right) <1, ~\text{for any}~ {\mathbf{e}^\top}\left( {t} \right) \cdot {\mathbf{P}}_{k} \cdot \mathbf{e}\left( {t} \right) <1. \label{aapf}
\end{align}
Additionally, recalling \cref{center}, we have:
$\mathbf{e}(k) = \mathbf{s}(k) - \mathbf{s}_{k}^* = \mathbf{s}(k) - \chi \cdot \mathbf{s}(k) = (1-\chi) \cdot \mathbf{s}(k)$, where $0 < \chi < 1$
This expression, along with \cref{step3fn}, leads to the result: ${\mathbf{e}^\top}\left( {k} \right) \cdot {\mathbf{P}}_{k} \cdot \mathbf{e}\left( {k} \right) <1$
Combining this with \cref{frstderivpfc1} and considering the process of iteration, we can draw the following conclusion:
\begin{align}
{\mathbf{e}^\top}\left( {t} \right) \cdot {\mathbf{P}}_{k} \cdot \mathbf{e}\left( {t} \right) \le 1,  ~~~\forall t \in \mathbb{T}_{k}.\label{rpfrstderivpfc1}
\end{align}
Furthermore, referring to $\mathbf{e}(t) \triangleq \mathbf{s}(t) - \mathbf{s}_{k}^*$ and \cref{pfenvelope} with ${\mathbf{Q}}_{k}^{-1} = {\mathbf{P}}_{k}$, the result in \cref{rpfrstderivpfc1} means that $\mathbf{s}\left( {t} \right) \in \Omega_{k},  \forall t \in \mathbb{T}_{k}$. Finally, considering \cref{pfccoo2}, we reach the final conclusion:
\begin{align}
\mathbf{s}\left( {t} \right) \in \Omega_{k} \subseteq \mathbb{P}_{k} \subseteq \mathbb{S}, ~\forall t \in \mathbb{T}_{k}. \label{pfccoo2og}
\end{align}

\vspace{0.1cm}
\subsection*{\underline{Step 5: Conclude: $\mathbf{a}_{\text{teacher}}(t) \in \mathbb{A}, \forall t \in \mathbb{T}_{k}$}.}
According to Lemma \ref{auxlem2} in \cref{Aux}, the second inequality in \eqref{thop1} implies the following:
\begin{align}
{\mathbf{Q}}_{k} - {\mathbf{R}}_{k}^\top \cdot \mathbf{T}_{k}^{-1} \cdot {\mathbf{R}}_{k} \succ 0. \label{addpf1}
\end{align}
By substituting ${\mathbf{F}}_{k} \cdot {\mathbf{Q}}_{k} = {\mathbf{R}}_{k}$ into \cref{addpf1}, we obtain:
\begin{align}
{\mathbf{Q}}_{k} - ({\mathbf{F}}_{k} \cdot {\mathbf{Q}}_{k})^\top \cdot \mathbf{T}_{k}^{-1} \cdot ({\mathbf{F}}_{k} \cdot {\mathbf{Q}}_{k}) \succ 0. \label{addpf1a1}
\end{align}
multiplying both left-hand and right-hand sides of which by ${\mathbf{Q}}_{k}^{-1}$ yields: 
\begin{align}
{\mathbf{Q}}_{k}^{-1} - {\mathbf{F}}_{k}^\top \cdot \mathbf{T}_{k}^{-1} \cdot {\mathbf{F}}_{k} \succ 0, \nonumber
\end{align}
from which and \cref{hacteacherpolicy} we thus have 
\begin{align}
&\mathbf{e}^{\top}(t) \cdot {\mathbf{Q}}_{k}^{-1} \cdot \mathbf{e}(t) - \mathbf{e}^{\top}(t) \cdot {\mathbf{F}}_{k}^\top \cdot \mathbf{T}_{k}^{-1} \cdot {\mathbf{F}}_{k} \cdot \mathbf{e}(t) =  \mathbf{e}^{\top}(t) \cdot \mathbf{Q}_{k}^{-1} \cdot \mathbf{e}(t) \nonumber\\
&\hspace{8.0cm}- \mathbf{a}_{\text{teacher}}^{\top}(t) \cdot \mathbf{T}_{k}^{-1} \cdot \mathbf{a}_{\text{teacher}}(t) > 0. \label{onine}
\end{align}

We observe that the patch set in \cref{pfenvelope} can be expressed in an equivalent form as follows:
\begin{align}
\Omega_{k} \triangleq \left\{ {\left. \mathbf{e} ~\right|} \right.{\mathbf{e}^\top} \cdot {\mathbf{Q}}_{k}^{-1} \cdot \mathbf{e} <  1,~\text{where}~{\mathbf{Q}_{k}} \succ 0~\text{and}~\mathbf{e} \triangleq \mathbf{s} - \mathbf{s}_{k}^* ~\}, \label{pfenvelopeo1}
\end{align}
from which, we conclude that $\mathbf{e}\left( {t} \right) \in \Omega_{k}$ means $\mathbf{e}^{\top}(t) \cdot \mathbf{Q}_{k}^{-1} \cdot \mathbf{e}(t) < 1$. So, we can obtain from \cref{onine} that 
\begin{align}
\mathbf{a}_{\text{teacher}}^{\top}(t) \cdot \mathbf{T}_{k}^{-1} \cdot \mathbf{a}_{\text{teacher}}(t) < \mathbf{e}^{\top}(t) \cdot \mathbf{Q}_{k}^{-1} \cdot \mathbf{e}(t)  < 1, ~~\text{for}~\mathbf{e}\left( {t} \right) \in \Omega_{k}. \label{oninepf1o1}
\end{align}
Additionally, based on \cref{pfenvelopeo1}, the conclusion in \cref{pfccoo2og} indicates that $\mathbf{e}\left( {t} \right) \in \Omega_{k}, \forall t \in \mathbb{T}_{k}$. This, in light of \cref{oninepf1o1}, results in
\begin{align}
\mathbf{a}_{\text{teacher}}^{\top}(t) \cdot \mathbf{T}_{k}^{-1} \cdot \mathbf{a}_{\text{teacher}}(t) < 1,~~\forall t \in \mathbb{T}_{k}. \label{oninepf1}
\end{align}

We define a set: $\Xi_{k} \triangleq \left\{ {\left. {\mathbf{a} \in {\mathbb{R}^m}} \right|{\mathbf{a}^\top} \cdot {\mathbf{T}}_{k}^{-1} \cdot \mathbf{a} < 1,~{\mathbf{T}_{k}} \succ 0} \right\}$. In light of \cref{oninepf1}, we obtain:
\begin{align}
\mathbf{a}_{\text{teacher}}(t) \in \Xi_{k}, ~~\forall t \in \mathbb{T}_{k}.   \label{oninepf1pf1}
\end{align}
Noting $\overline{\mathbf{D}} = \mathbf{D} \cdot \textbf{diag}^{-1}\{\mathbf{d}\}$, the definition of action set $\mathbb{A}$ in \cref{org} can be equivalently expressed as $\mathbb{A} \triangleq \left\{ {\left. {\mathbf{a} \in {\mathbb{R}^m}} \right| \overline{\mathbf{D}} \cdot \mathbf{a} \le \mathbf{1}}\right\}$. Besides, recalling the second inequality (i.e., $\mathbf{I}_{q} - \overline{\mathbf{D}} \cdot \mathbf{T}_{k} \cdot \overline{\mathbf{D}}^{\top} \succ 0$) from \cref{thop2pp} and applying \cref{consact} with consideration of \cref{oninepf1pf1}, we can conclude that:
\begin{align}
\mathbf{a}_{\text{teacher}}(t) \in \Xi_{k} \subseteq \mathbb{A}, ~~\forall t \in \mathbb{T}_{k}.   \label{oninepf1pf3}
\end{align}

\vspace{0.1cm}

\subsection*{\underline{Step 6: Further derivations from the first inequality in \cref{thop2pp}}.}
Referring to \cref{aset4}, we define an auxiliary real-time set as follows:
\begin{align}
{\mathbb{G}_{k}} \triangleq \left\{ {\left. \mathbf{s} \in {\mathbb{R}^n} \right|-\gamma \cdot \mathbf{c} \le {\mathbf{C}} \cdot (\mathbf{s} - \mathbf{s}_{k}^*)  \le \gamma \cdot \mathbf{c}}, ~\text{with}~\gamma = (1-\chi) \cdot \eta < \theta\right\}, \label{rpad0}
\end{align}
where $\chi$ is given in \cref{center}, $\eta$ is given in \cref{aset3}, and $\eta$ is given in \cref{aset4}. The ${\mathbb{G}_{k}}$ in \cref{rpad0} can be equivalently expressed as 
\begin{align}
{\mathbb{G}_{k}} = \left\{ {\left. \mathbf{s} \in {\mathbb{R}^n} \right| {\mathbf{C}} \cdot  \mathbf{s}_{k}^* -\gamma \cdot \mathbf{c} \le {\mathbf{C}} \cdot \mathbf{s}  \le {\mathbf{C}} \cdot  \mathbf{s}_{k}^* + \gamma \cdot \mathbf{c}} ,~\text{with}~\gamma = (1-\chi) \cdot \eta < \theta  \right\}. \label{rpad1}
\end{align}

We recall from \cref{trigger} that $\mathbf{s}(k)$ denotes a system state that approaches the boundary of safe self-learning space $\mathbb{L}$ defined in \cref{aset3}. Hereto, by considering $\mathbf{s}_{k}^* = \chi \cdot \mathbf{s}(k)$ and \cref{aset4}, we have $-\chi \cdot \eta \cdot \mathbf{c} \le {\mathbf{C}} \cdot \mathbf{s}_{k}^*  \le \chi \cdot \eta \cdot \mathbf{c}$, substituting which into \eqref{rpad1} yields: 
\begin{align}
{\mathbb{G}_{k}} = \left\{ {\left. \mathbf{s} \in {\mathbb{R}^n} \right| (\chi \cdot \eta - \gamma) \cdot \mathbf{c} < {\mathbf{C}} \cdot \mathbf{s}  < (\chi \cdot \eta + \gamma) \cdot \mathbf{c}} ,~\text{with}~\gamma = (1-\chi) \cdot \eta < \theta  \right\}. \label{rpad2}
\end{align}
We note that the relationship $\gamma = (1-\chi) \cdot \eta$ is equivalent to the equation $\chi \cdot \eta + \gamma = \eta$. This implies that $\chi \cdot \eta - \gamma > -\eta$, considering that $\chi \cdot \eta > 0$. In summary, we establish both $\chi \cdot \eta + \gamma = \eta$ and $\chi \cdot \eta - \gamma > -\eta$.  Considering them, we can derive result from \cref{aset3,rpad2} as follows:
\begin{align}
{\mathbb{G}_{k}} \subset \mathbb{L}. \label{rpad3}
\end{align}

Considering that $\mathbf{P}_{k} = \mathbf{Q}_{k}^{-1}$ and the patch defined in \cref{aset4}, we can derive from \cref{pfccoo,pfenvelope} that: 
\begin{align}
&\hspace{-0.2cm}\left\{ {\left. \mathbf{s} \in {\mathbb{R}^n} ~\right|} \right.{(\mathbf{s} - \mathbf{s}_{k}^*)^\top} \cdot {\mathbf{P}}_{k} \cdot (\mathbf{s} - \mathbf{s}_{k}^*) \le  1\} \subseteq  \left\{ {\left. \mathbf{s} \in {\mathbb{R}^n} \right|-\theta \cdot \mathbf{c} \le {\mathbf{C}} \cdot (\mathbf{s} - \mathbf{s}_{k}^*)  \le \theta \cdot \mathbf{c}}  \right\}, \nonumber\\
&\hspace{9.3cm}~\text{if}~\mathbf{I}_{p} - \overline{\mathbf{C}} \cdot \mathbf{P}^{-1}_{k} \cdot \overline{\mathbf{C}}^{\top} \succ 0 \label{rpad4}
\end{align}
where $\overline{\mathbf{C}} \triangleq \mathbf{C} \cdot \text{diag}^{-1}\{\theta \cdot \mathbf{c}\}$. Letting $\widehat{\mathbf{C}} \triangleq \mathbf{C} \cdot \text{diag}^{-1}\{\gamma \cdot \mathbf{c}\}$ and considering 
$\gamma > 0$, we can obtain from \cref{rpad4} that 
\begin{align}
&\hspace{-0.2cm}\left\{ {\left. \mathbf{s} \in {\mathbb{R}^n} ~\right|} \right.{(\mathbf{s} - \mathbf{s}_{k}^*)^\top} \cdot (\frac{\theta^{2}}{\gamma^{2}} \cdot \mathbf{P}_{k}) \cdot (\mathbf{s} - \mathbf{s}_{k}^*) \le  1\} \subseteq  \left\{ {\left. \mathbf{s} \in {\mathbb{R}^n} \right|-\gamma \cdot \mathbf{c} \le {\mathbf{C}} \cdot (\mathbf{s} - \mathbf{s}_{k}^*)  \le \gamma \cdot \mathbf{c}}  \right\}, \nonumber\\
&\hspace{8.2cm}~\text{if}~\mathbf{I}_{p} - \widehat{\mathbf{C}} \cdot (\frac{\gamma^{2}}{\theta^{2}} \cdot \mathbf{P}^{-1}_{k}) \cdot \widehat{\mathbf{C}}^{\top} \succ 0. \label{rpad5}
\end{align}

Moreover, it is straightforward to verify from $\overline{\mathbf{C}} \triangleq \mathbf{C} \cdot \text{diag}^{-1}\{\theta \cdot \mathbf{c}\}$ and $\widehat{\mathbf{C}} \triangleq \mathbf{C} \cdot \text{diag}^{-1}\{\gamma \cdot \mathbf{c}\}$ that $\widehat{\mathbf{C}} \cdot (\frac{\gamma^{2}}{\theta^{2}} \cdot \mathbf{P}^{-1}_{k}) \cdot \widehat{\mathbf{C}}^{\top} = \overline{\mathbf{C}}  \cdot \mathbf{P}^{-1}_{k} \cdot \overline{\mathbf{C}}^{\top}$. So, with the consideration of \cref{rpad0}, we can further conclude from \cref{rpad4,rpad5} that:
\begin{align}
\left\{ {\left. \mathbf{s} \in {\mathbb{R}^n} ~\right|} \right.{(\mathbf{s} - \mathbf{s}_{k}^*)^\top} \cdot \mathbf{P}_{k} \cdot (\mathbf{s} - \mathbf{s}_{k}^*)^\top \le  \frac{\gamma^{2}}{\theta^{2}}\} \subseteq  {\mathbb{G}_{k}}, ~~\text{if}~~\mathbf{I}_{p} - \overline{\mathbf{C}} \cdot \mathbf{P}^{-1}_{k} \cdot \overline{\mathbf{C}}^{\top} \succ 0.  \label{rpad0oyfg1}
\end{align}
Furthermore, recalling the first inequality (i.e., $\mathbf{I}_{p} - \overline{\mathbf{C}} \cdot \mathbf{Q}_{k} \cdot \overline{\mathbf{C}}^{\top} \succ 0$) from \cref{thop2pp}, we arrive at the conclusion $\left\{ {\left. \mathbf{s} \in {\mathbb{R}^n} ~\right|} \right.{(\mathbf{s} - \mathbf{s}_{k}^*)^\top} \cdot \mathbf{P}_{k} \cdot (\mathbf{s} - \mathbf{s}_{k}^*) \le  \frac{\gamma^{2}}{\theta^{2}}\} \subseteq  {\mathbb{G}_{k}}$. This, combined with \cref{rpad3}, leads to the following result:
\begin{align}
\left\{ {\left. \mathbf{s} \in {\mathbb{R}^n}  ~\right|} \right.(\mathbf{s} - \mathbf{s}_{k}^*)^\top \cdot \mathbf{P}_{k} \cdot (\mathbf{s} - \mathbf{s}_{k}^*) \le  \frac{\gamma^{2}}{\theta^{2}}\} \subset \mathbb{L}, \nonumber
\end{align}
substituting $\gamma = (1-\chi) \cdot \eta$ into which yields: 
\begin{align}
\left\{ {\left. \mathbf{s} \in {\mathbb{R}^n}  ~\right|} \right.(\mathbf{s} - \mathbf{s}_{k}^*)^\top \cdot \mathbf{P}_{k} \cdot (\mathbf{s} - \mathbf{s}_{k}^*) \le  \frac{(1-\chi)^{2} \cdot \eta^{2}}{\theta^{2}}\} \subset \mathbb{L}. \label{rpad6}
\end{align}

\vspace{0.1cm}

\subsection*{\underline{Step 7: Conclude: $\mathbf{s}(k+\tau_{k}) \in \mathbb{L}$ for a $\tau_{k} \in \mathbb{N}$}.}
We can derive from \eqref{frstderivpfc1} that 
\begin{align}
{\mathbf{e}^\top}\left( {k + \tau} \right) \cdot {\mathbf{P}}_{k} \cdot \mathbf{e}\left( {k + \tau} \right) 
\le \alpha^{\tau} \cdot {\mathbf{e}^\top}\left( k \right) \cdot  {\mathbf{P}}_{k}  \cdot \mathbf{e}\left( k \right) + \frac{(\phi + 1) \cdot \kappa}{\phi} \cdot \frac{1-\alpha^{\tau-1}}{1-\alpha}, \nonumber
\end{align}
where $0 < \alpha < 1$. This leads to the following result: 
\begin{align}
{\mathbf{e}^\top}\left( \infty \right) \cdot {\mathbf{P}}_{k} \cdot \mathbf{e}\left( \infty \right) 
\le \frac{(\phi + 1) \cdot \kappa}{(1-\alpha) \cdot \phi}, \nonumber
\end{align}
which with the second item in \cref{repf1} leads to 
\begin{align}
{\mathbf{e}^\top}\left( \infty \right) \cdot {\mathbf{P}}_{k} \cdot \mathbf{e}\left( \infty \right) 
\le \frac{(\phi + 1) \cdot \kappa}{(1-\alpha) \cdot \phi} < \frac{(1-\chi)^{2} \cdot \eta^{2}}{\theta^{2}}. \label{rpadfn1}
\end{align}

Given that $\mathbf{e}(t) \triangleq \mathbf{s}(t) - \mathbf{s}_{k}^*$, we can equivalently express the result in \cref{rpadfn1} as:
\begin{align}
(\mathbf{s}\left( \infty \right) - \mathbf{s}_{k}^*)^\top \cdot {\mathbf{P}}_{k} \cdot (\mathbf{s}\left( \infty \right) - \mathbf{s}_{k}^*)
\le \frac{(\phi + 1) \cdot \kappa}{(1-\alpha) \cdot \phi} < \frac{(1-\chi)^{2} \cdot \eta^{2}}{\theta^{2}}. \label{rpadfn2}
\end{align}
Finally, we can conclude from \cref{rpad6,rpadfn2} that $\mathbf{s}\left( \infty \right) \in \mathbb{L}$. 
This indicates that there exists a $\tau_{k} \in \mathbb{N}$ such that $\mathbf{s}\left( k + \tau_{k} \right) \in \mathbb{L}$. Thus, the proof is completed.

\newpage
\section{Pseudocode: Computation of PHY-Teacher's Action Policy} \label{pesudoteacher}
Referring to \cref{hacteacherpolicy}, the computation of the PHY-Teacher's action policy involves calculating $\mathbf{F}_{k}$ based on the inequalities presented in \cref{thop1,thop2pp}. To compute $\mathbf{F}_{k}$, we can utilize the Python CVXPY toolbox \cite{grant2009cvx} or the Matlab LMI toolbox \cite{gahinet1994lmi}, as described in \cref{cvxpes} and \cref{lmipso}, respectively. In both \cref{cvxpes} and \cref{lmipso}, the A, B, $\mathbf{s}$, C, D, and F correspond to $\mathbf{A}(\mathbf{s}(k))$, $\mathbf{B}(\mathbf{s}(k))$, $\mathbf{s}(k)$, $\overline{\mathbf{C}}$,  $\overline{\mathbf{D}}$, and $\mathbf{F}_k$, respectively. 

Finally, we recall that the ECVXCONE toolbox described in \cref{finalnote} can solve the LMIs using C

\begin{algorithm} 
{\caption{Python CVXPY Toolbox for Computing $\mathbf{F}_{k}$ from \cref{thop1,thop2pp}} \label{cvxpes}
  \begin{algorithmic}[1]  
  \State\textbf{Input:} A, B, $\mathbf{s}$, C, D, m, n, p, q, and $0 < \alpha < 1$ and $\phi > 0$ satisfying conditions in \cref{repf1}. 
  \State Q = cp.Variable((n, n), PSD=True);
   \State T = cp.Variable((m, m), PSD=True);
   \State R = cp.Variable((m, n)); 
   \State constraints = [cp.bmat([[$\alpha$ * Q, Q @ A.T + R.T @ B.T], [A @ Q + B @ R, Q / (1 + $\phi$)]]) >> 0;
   \State Q - n * np.diag($\mathbf{s}$) @ np.diag($\mathbf{s}$) >> 0;
   \State    np.identity(p) - C @ T @ C.T>> 0;
   \State np.identity(q) - D @ Q @ D.T >> 0;
   \State  problem = cp.Problem(cp.Minimize(0), constraints);
   \State problem.solve(solver=cp.CVXOPT);
   \State optimal$_-$Q = Q.value;
   \State optimal$_-$R = R.value;
   \State P = np.linalg.inv(optimal$_-$Q);
   \State F = optimal$_-$R @ P.
  \end{algorithmic}}
\end{algorithm}

\begin{algorithm} 
{\caption{Matlab LMI Toolbox for Computing $\mathbf{F}_{k}$ from \cref{thop1,thop2pp}} \label{lmipso}
  \begin{algorithmic}[1]  
  \State\textbf{Input:} A, B, $\mathbf{s}$, C, D, m, n, p, q, and $0 < \alpha < 1$ and $\phi > 0$ satisfying conditions in \cref{repf1}. 
  \State setlmis([]) ;
  \State Q = lmivar(1,[n 1]); 
  \State T = lmivar(1,[m 1]); 
  \State R = lmivar(2,[m n]); 
  \State lmiterm([-1 1 1 Q],1,$\alpha$);
  \State lmiterm([-1 2 1 Q],A,1);
  \State lmiterm([-1 2 1 R],B,1);
  \State lmiterm([-1 2 2 Q],1,1/(1 + $\phi$));
\State lmiterm([-2 1 1 Q],1,1);
\State lmiterm([-2 2 1 R],1,1);
\State lmiterm([-2 2 2 T],1,1);

\State lmiterm([-3 1 1 Q],1,1);
\State lmiterm([-3 1 1 0],-n*np.diag($\mathbf{s}$)*np.diag($\mathbf{s}$));
\State lmiterm([-4 1 1 Q],-D,D');
\State lmiterm([-4 1 1 0],eye(p));
\State lmiterm([-5 1 1 T],-C,C');
\State lmiterm([-5 1 1 0],eye(q));
\State mylmi = getlmis;
\State [tmin, psol] = feasp(mylmi);
\State Q = dec2mat(mylmi, psol, Q);
\State R = dec2mat(mylmi, psol, R); 
\State P = inv(Q)
\State F = R * P.
  \end{algorithmic}}
\end{algorithm}

\begin{remark} \label{limit1}
We observed that the computation time for the methods described in \cref{cvxpes} and \cref{lmipso} is quite short, ranging from 0.01 to 0.04 seconds (see evaluation in \cref{evasummart}); therefore, its impact can be considered negligible. However, we found that the default CVX solver, SCS, demonstrated instability and inconsistent accuracy across different computing platforms. As a result, we opted to use the CVXOPT solver \cite{andersen2013cvxopt} in our experiments to achieve more stable and accurate results. This issue, however, does not arise when using the Matlab LMI toolbox. 
\end{remark}

\newpage
\section{Compensation for Large Model Mismatch} \label{comcomcomcocmpe}
When the actual dynamics described by \cref{realsyserror} experiences significant model mismatch, the physics-model knowledge $({\mathbf{A}}(\mathbf{s}(k)), {\mathbf{B}}(\mathbf{s}(k))$ may be inadequate. In such situations, the PHY-Teacher should implement a strategy to address this considerable model mismatch. This section outlines the technique used to compensate for the discrepancies in the model. 

\subsection{Computation of Compensation Action} 
Given the state sample at the current time step $t$ (denoted as $\mathbf{e}(t)$), we can obtain the current control command $\mathbf{a}_{\text{teacher}}(t)$, calculated according to the method described in \cref{hacteacherpolicy}. In addition to this information, we also have the previous state $\mathbf{e}(t-1)$ and the physics-model knowledge represented by  $({\mathbf{A}}(\mathbf{s}(k)), {\mathbf{B}}(\mathbf{s}(k))$. We can use the dynamics outlined in \cref{realsyserror} to predict the current state as follows:
\begin{align}
\widehat{\mathbf{e}}(t) = \mathbf{A}(\mathbf{s}(k)) \cdot \mathbf{e}(t-1) + {\mathbf{B}}(\mathbf{s}(k)) \cdot \mathbf{a}_{\text{teacher}}(t),   \label{realsyserror90}
\end{align} 
where $\widehat{\mathbf{e}}(t)$ denotes the predicted state at time $t$. 

In the context of compensating for model mismatch, the dynamics at the current time, as described in \eqref{realsyserror}, can be expressed in an equivalent manner:
\begin{align}
\mathbf{e}(t) = \mathbf{A}(\mathbf{s}(k)) \cdot \mathbf{e}(t-1) + {\mathbf{B}}(\mathbf{s}(k)) \cdot (\mathbf{a}_{\text{teacher}}(t) + \mathbf{a}_{\text{compensation}}(t)) + \widetilde{\mathbf{h}}(\mathbf{e}(t)), \label{comdynaqw}
\end{align} 
where the term $\mathbf{a}_{\text{compensation}}(t)$ signifies the real-time actions taken to address the model mismatch $\widetilde{\mathbf{h}}(\mathbf{e}(t))$.

We can further derive from \cref{realsyserror90,comdynaqw} that 
\begin{align}
{\mathbf{B}}(\mathbf{s}(k)) \cdot \mathbf{a}_{\text{compensation}}(t)) + \widetilde{\mathbf{h}}(\mathbf{e}(t) = \mathbf{e}(t) - \widehat{\mathbf{e}}(t), \label{comdynaqwadd}
\end{align} 
where $\mathbf{e}(t)$ and $\widehat{\mathbf{e}}(t)$ are known. From \cref{comdynaqwadd}, we find that if the compensation action meets the condition outlined in \cref{comcondition}, the model mismatch $\widetilde{\mathbf{h}}(\mathbf{e}(t))$ in \cref{comdynaqw} can be effectively compensated by $\mathbf{a}_{\text{compensation}}(t))$ (i.e., $\widetilde{\mathbf{h}}(\mathbf{e}(t) = \mathbf{0}_{n}$). 
\begin{align}
{\mathbf{B}}(\mathbf{s}(k)) \cdot \mathbf{a}_{\text{compensation}}(t))  = \mathbf{e}(t) - \widehat{\mathbf{e}}(t), \label{comcondition}
\end{align} 

We next explain how to calculate the compensation action, denoted as$\mathbf{a}_{\text{compensation}}(t)$, based on the conditions outlined in \cref{comcondition}.

By multiplying the left-hand side of \cref{comcondition} by $\mathbf{B}^{\top}(\mathbf{s}(k))$, we obtain 
\begin{align}
(\mathbf{B}^{\top}(\mathbf{s}(k)) \cdot \mathbf{B}(\mathbf{s}(k))) \cdot \mathbf{a}_{\text{compensation}}(t) = \mathbf{B}^{\top}(\mathbf{s}(k)) \cdot (\mathbf{e}(t) - \widehat{\mathbf{e}}(t)). \label{solccom1}
\end{align}
By incorporating a sufficiently small value of $\lambda > 0$ in \cref{solccom1}, we achieve the following outcome:
\begin{align}
(\mathbf{B}^{\top}(\mathbf{s}(k)) \cdot \mathbf{B}(\mathbf{s}(k)) + \lambda \cdot \mathbf{I}_{m}) \cdot \mathbf{a}_{\text{compensation}}(t) \approx \mathbf{B}^{\top}(\mathbf{s}(k)) \cdot (\mathbf{e}(t) - \widehat{\mathbf{e}}(t)), \nonumber
\end{align}
which, in light of \cref{aplemma1} in \cref{Aux}, the solution for $ \mathbf{a}_{\text{compensation}}(t)$ is yielded as follows:
\begin{align}
 \mathbf{a}_{\text{compensation}}(t) \approx (\mathbf{B}^{\top}(\mathbf{s}(k)) \cdot \mathbf{B}(\mathbf{s}(k)) + \lambda \cdot \mathbf{I}_{m})^{-1} \cdot \mathbf{B}^{\top}(\mathbf{s}(k)) \cdot (\mathbf{e}(t) - \widehat{\mathbf{e}}(t)). \label{comcomcom}
\end{align}

\subsection{Conditional Usage of Compensation Action}
At the current time  $t$, we possess the following information: the current state $\mathbf{e}(t)$, the previous state $\mathbf{e}(t-1)$, the previous action $\mathbf{a}(t-1)$, and the physics model knowledge denoted by $(\mathbf{A}(\mathbf{s}(k)), \mathbf{B}(\mathbf{s}(k)))$. With this data, we can estimate the real-time value of model mismatch at time $t$ using the dynamics described in \cref{realsyserror}. The estimation of model mismatch can be expressed as follows:
\begin{align}
\mathbf{h}(\mathbf{e}(t)) \approx \mathbf{h}(\mathbf{e}(t-1)) = \mathbf{e}(t) - \mathbf{A}(\mathbf{s}(k)) \cdot \mathbf{e}(t-1) + \mathbf{B}(\mathbf{s}(k)) \cdot \mathbf{a}_{\text{teacher}}(t-1), \nonumber
\end{align}
where $\mathbf{h}(\mathbf{e}(t)) \approx \mathbf{h}(\mathbf{e}(t-1))$ holds true due to the sufficiently small sampling period of sensors. Given the estimated model mismatch, we can carry out a real-time evaluation of \cref{assm} to decide whether to implement the compensation action. The process is outlined as follows:
\begin{align}
\mathbf{a}_{\text{teacher}}(t) \leftarrow \begin{cases}
		\mathbf{a}_{\text{teacher}}(t) + \mathbf{a}_{\text{compensation}}(t), &\text{if}~\bf{h}^\top(\mathbf{e}) \cdot {\mathbf{P}}_{{k}} \cdot {\bf{h}}(\mathbf{e}) > \kappa,\\ 
        \mathbf{a}_{\text{teacher}}(t),      &\text{if}~\bf{h}^\top(\mathbf{e}) \cdot {\mathbf{P}}_{{k}} \cdot {\bf{h}}(\mathbf{e}) \le \kappa.
	\end{cases} \nonumber
\end{align}

\newpage
\section{Experimental Evaluation: Python CVXPY v.s. C ECVXCONE}\label{evauaua}

\begin{table*}[http]
\caption{Python CVXPY v.s. C  ECVXCONE}
\centering
\renewcommand{\arraystretch}{2.0}  
    \resizebox{1.0\textwidth}{!} {
    \begin{tabular}{c|ccc|cc|cc}
        \toprule
        \multirow{3}{*}{\textbf{Hardware Platforms}} &
        \multicolumn{3}{c|}{\textbf{CPU Configurations}} &
        \multicolumn{2}{c|}{\textbf{Runtime Memory Usage}} &
        \multicolumn{2}{c}{\textbf{LMIs Solve Time}}\\
        \cmidrule(lr){2-4} \cmidrule(lr){5-6} \cmidrule(lr){7-8} & \textbf{Arch} & \textbf{Core} & \textbf{Frequency} & \textbf{CVXPY} & \textbf{ECVXCONE} & \textbf{CVXPY} & \textbf{ECVXCONE}\\
        \midrule
        Dell XPS 8960 Desktop  & \text{x86/64}  & \text{32} &  \text{5.4 GHz} & \text{485 MB}  & \textcolor{red}{9.87 MB} & \text{49.15 ms} & \textcolor{red}{13.81 ms} \\ 
        \midrule
        Intel GEEKOM XT 13 Pro Mini  & \text{x86/64}  & \text{20} &  \text{4.7 GHz} & \text{443 MB}  & \textcolor{red}{7.32 MB} & \text{61.76 ms} & \textcolor{red}{33.26 ms}\\ 
        \midrule
        NVIDIA Jetson AGX Orin & \text{ARM64}  & \text{12} & \text{2.2 GHz}  & \text{423 MB}  & \textcolor{red}{8.16 MB} & \text{137.54 ms} & \textcolor{red}{35.73 ms}\\
        \midrule
        Raspberry Pi 4 Model B & \text{ARM64}  & \text{4} & \text{1.5 GHz}  & \text{436 MB}  & \textcolor{red}{8.21 MB} & \text{509.41 ms} & \textcolor{red}{149.87 ms}\\
        \bottomrule
    \end{tabular}}
    \label{evasummart}
\end{table*}

We summarize the experimental evaluation comparing Python CVXPY and C ECVXCONE regarding computational resource usage across four platforms with varying CPU frequencies in \cref{evasummart}. Both CVXPY and ECVXCONE solve the same LMIs for the experiment `Quadruped Robot in NVIDIA Isaac Gym: Wild Environment' in \cref{ccrobnt}.

\newpage
\section{Unknown Unknown: Randomized Beta Distribution} \label{unkunkwhy}
In the real world, plants can encounter a multitude of unknown variables, each with unique characteristics. To tackle this challenge, we propose utilizing a variant of the Beta distribution \cite{johnson1995continuous} to effectively model one type of these unknowns. This approach holds promise in mathematically defining and addressing these uncertainties.
\begin{definition}[Randomized Beta Distribution]
The disturbance, noise, or fault, denoted by $\mathbf{d}(k)$, is considered to be a bounded unknown if (i) $\mathbf{d}(k) \sim \textit{Beta}(\alpha(k), \beta(k), c, a)$, and (ii) $\alpha(k)$ and $\beta(k)$ are random parameters. In other words, the disturbance $\mathbf{d}(k)$ is within the range of [a, c], and its probability density function (pdf) is given by
\begin{align}
f\left( {\mathbf{d}(k);~\alpha(k),~\beta(k),~a,~c} \right) = \frac{{{{\left( {\mathbf{d}(k) - a} \right)}^{\alpha  - 1}}{{\left( {c - \mathbf{d}(k)} \right)}^{\alpha(k)  - 1}}\Gamma \left( {\alpha(k)  + \beta(k) } \right)}}{{{{\left( {c - a} \right)}^{\alpha(k)  + \beta(k)  - 1}}\Gamma \left( \alpha(k)  \right)\Gamma \left( \beta(k)  \right)}}, \label{uukd}
\end{align}
where $\Gamma \left( \alpha(k)  \right) = \int_0^\infty {{t^{\alpha(k)  - 1}}{e^{ - t}}dt}, ~{\mathop{\rm Re}\nolimits} \left( \alpha(k) \right) > 0$,  
$\alpha(k)$ and $\beta(k)$ are randomly given at every $k$. 
\label{defa1}
\end{definition}

The randomized Beta distribution, as defined in \cref{defa1}, is essential for describing a particular type of unknown unknown. This is important for two main reasons. 

First, the nature of unknown unknowns is characterized by a lack of historical data, making them highly unpredictable in terms of time and distribution. As a result, existing models for scientific discoveries and understanding are often insufficient.

Second, as the examples shown in \cref{ep5secbbex}, the parameters $\alpha$ and $\beta$ directly influence the probability density function (pdf) of the distribution, and consequently, the mean and variance. Suppose $\alpha$ and $\beta$ are randomized (expressed as $\alpha(k)$ and $\beta(k)$). In that case, the distribution of $\mathbf{d}(k)$ can take the form of a uniform distribution, exponential distribution, truncated Gaussian distribution, or a combination of these. However, the specific distribution is unknown. Therefore, the randomized $\alpha(k)$ and $\beta(k)$, which result in a randomized Beta distribution, can effectively capture the characteristics of "unavailable model" and "unforeseen" traits associated with unknown unknowns in both time and distribution. Furthermore, the randomized Beta distributions are bounded, with the bounds denoted as $a$ and $c$. This is motivated by the fact that, in general, there are no probabilistic solutions for handling unbounded unknowns, such as earthquakes and volcanic eruptions.
\begin{figure}[http]
    \centering
{\includegraphics[width=0.66\textwidth]{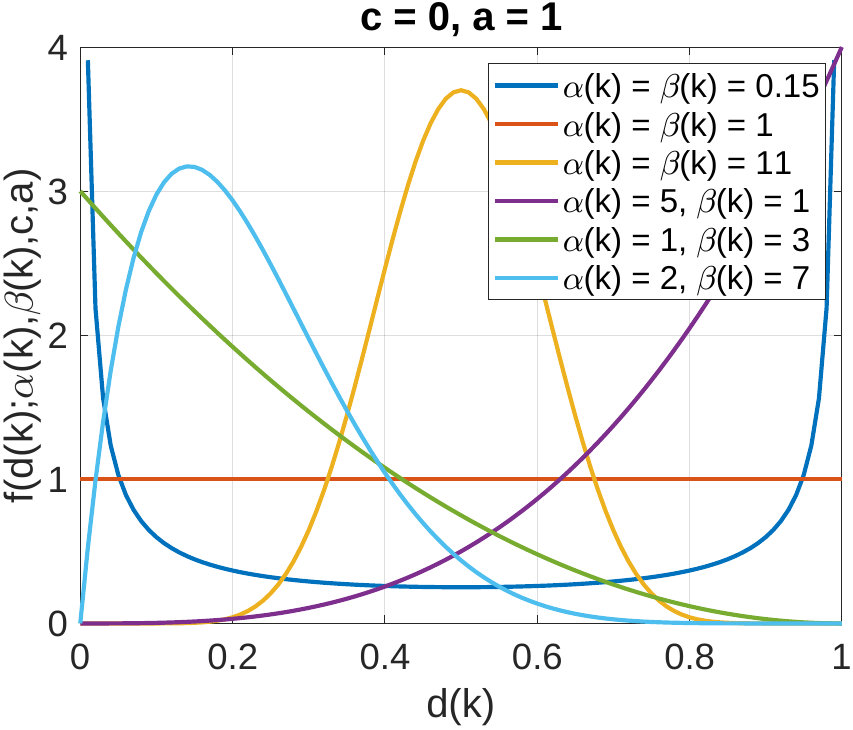}} 
    \centering
    \vspace{0.1cm}
\caption{$\alpha(k)$ and $\beta(k)$ control the probability density function of the distribution.}
\label{ep5secbbex}
\end{figure}

\newpage
\section{Experiment Design: Real Quadruped Robot: Indoor Environment} \label{A1A1}
In our real quadruped robot experiment, we used a Python-based framework developed for the Unitree A1 robot, which was released on GitHub by \cite{gityang}. This framework includes a PyBullet-based simulator, an interface for direct simulation-to-real transfer, and an implementation of the Convex Model Predictive Controller for essential motion control.

\subsection{Learning Configurations and Computation Resources} \label{robotjusnetf}
The runtime learning machine and Phy-DRL are designed to accomplish the safe mission detailed in \cref{ccrobnt}. The policy observation consists of a 10-dimensional tracking error vector that represents the difference between the robot's state vector and the mission vector. Both systems utilize off-policy algorithm DDPG. 

The actor and critic networks are implemented as MLPs with four fully connected layers. The output dimensions for the critic network are 256, 128, 64, and 1, while those for the actor network are 256, 128, 64, and 6. The input to the critic network comprises the tracking error vector and the action vector, while the input to the actor network is solely the tracking error vector. The activation functions for the first three layers of the neural networks are ReLU, while the output layer of the actor network uses the Tanh function and the critic network uses a Linear function. Additionally, the discount factor $\gamma$ is set to 0.9, and the learning rates for both the critic and actor networks are set at 0.0003. Finally, the batch size is configured to 512.

For computation resources, we utilized a desktop running Ubuntu 22.04, equipped with a 12th Gen Intel(R) Core(TM) i9-12900K 16-core processor, 64 GB of RAM, and an NVIDIA GeForce GTX 3090 GPU. The algorithm was implemented in Python, utilizing the TensorFlow framework alongside the Python CVXPY toolbox for solving real-time patches.

\subsection{Safety Conditions} 
The robot's learning process involves controlling its center of mass (CoM) height, CoM x-velocity, and other states in order to track the commands $r_{v_x}$, $r_{h}$, and zeros, respectively, while maintaining system states within a safety set, denoted as $\mathbb{S}$. This safety set is defined as: $\mathbb{S} = \left\{{\left. \mathbf{s} ~\right|~ \left| {\text{CoM x-velocity} - r_{v_x}} \right| \le 0.6~\text{m/s}, ~\left| {\text{CoM z-height} - r_{h}} \right| \le 0.12~\text{m}} \right\}$. Additionally, PHY-Teacher's action space is defined as: ${\mathbb{A}} = \left\{ {\left. {\mathbf{a} \in {\mathbb{R}^{6}}} \right| |\mathbf{a}| \le [10, 10, 10, 20, 20, 20]^{\top} }\right\}$. This ensures that the robot operates within specified limits for safety and performance.

\subsection{PHY-Teacher Design} 
The robot's physics-model knowledge used in the design relies on its dynamics, which focuses on a single rigid body subject to forces at the contact points \cite{di2018dynamic}. Our model of the robot's dynamics includes several key parameters: the CoM height (h), the CoM velocity ($\mathbf{v}$), represented as a 3D vector [CoM x-velocity, CoM y-velocity, CoM z-velocity]$^{\top}$, the Euler angles ($\widetilde{\mathbf{e}}$), described as a 3D vector [roll, pitch, yaw]$^{\top}$, and the angular velocity in world coordinates. According to \cite{di2018dynamic}, this model below effectively characterizes the dynamics of quadruped robots.
\begin{align}
&\frac{\mathbf{d}}{{\mathbf{d}t}}\underbrace{\left[ {\begin{array}{*{20}{c}}
h\\
\widetilde{\mathbf{e}}\\
\mathbf{v}\\
\mathbf{w}
\end{array}} \right]}_{\triangleq ~\widehat{\mathbf{s}}}  = \underbrace{\left[ {\begin{array}{*{20}{c}}
{{\mathbf{O}_{1 \times 1}}}&{{\mathbf{O}_{1 \times 5}}}&{1}&{{\mathbf{O}_{1 \times 3}}}\\
{{\mathbf{O}_{3 \times 3}}}&{{\mathbf{O}_{3 \times 3}}}&{{\mathbf{O}_{3 \times 3}}}&{\mathbf{R}(\phi ,\theta ,\psi)}\\
{{\mathbf{O}_{3 \times 3}}}&{{\mathbf{O}_{3 \times 3}}}&{{\mathbf{O}_{3 \times 3}}}&{{\mathbf{O}_{3 \times 3}}}\\
{{\mathbf{O}_{3 \times 3}}}&{{\mathbf{O}_{3 \times 3}}}&{{\mathbf{O}_{3 \times 3}}}&{{\mathbf{O}_{3 \times 3}}}
\end{array}} \right]}_{\triangleq ~\widehat{\mathbf{A}}(\phi,\theta,\psi)} \cdot \left[ {\begin{array}{*{20}{c}}
h\\
\widetilde{\mathbf{e}}\\
\mathbf{v}\\
\mathbf{w}
\end{array}} \right] + \widehat{\mathbf{B}} \cdot \widehat{a} + \left[ {\begin{array}{*{20}{c}}
{{0}}\\
\mathbf{O}_{3 \times 1}\\
\mathbf{O}_{3 \times 1}\\
{\widetilde{\mathbf{g}}}
\end{array}} \right] \nonumber\\
&\hspace{11.5cm} +~ \mathbf{f}(\widehat{\mathbf{s}}), \label{dogdynamics}
\end{align}
where $\widetilde{\mathbf{g}} = [0, 0, -g]^{\top} \in \mathbb{R}^{3}$, with $g$ being the gravitational acceleration. $\mathbf{f}(\widehat{\mathbf{s}})$ denotes unknown model mismatch, $\widetilde{\mathbf{B}}  = [\mathbf{O}_{6 \times 4}, ~\mathbf{I}_6]^{\top}$, and $\mathbf{R}(\phi,\theta,\psi) = \mathbf{R}_{z}(\psi) \cdot \mathbf{R}_{y}(\theta) \cdot \mathbf{R}_{x}(\phi) \in \mathbb{R}^{3 \times 3}$ is the rotation matrix, with 
\begin{align}
{\mathbf{R}_x}(\phi) \!=\! \left[\!\! {\begin{array}{*{20}{c}}
1&0&0\\
0&{\cos \phi }&{ - \sin \phi }\\
0&{\sin \phi }&{\cos \phi }
\end{array}} \!\!\right]\!,\!~{\mathbf{R}_y}(\theta) \!=\! \left[\!\! {\begin{array}{*{20}{c}}
{\cos \theta }&0&{\sin \theta }\\
0&1&0\\
{ - \sin \theta }&0&{\cos \theta }
\end{array}} \!\!\right]\!,\!~{\mathbf{R}_z}(\psi) \!=\! \left[\!\! {\begin{array}{*{20}{c}}
{\cos \psi }&{ - \sin \psi }&0\\
{\sin \psi }&{\cos \psi }&0\\
0&0&1
\end{array}} \!\!\right]\!. \nonumber
\end{align}

\subsubsection{Physics-Model Knowledge}
Given the equilibrium point (control goal) $\mathbf{s}^*$, we define ${\mathbf{s}} \triangleq \mathbf{\widetilde{\mathbf{s}}} - \mathbf{s}^*$. From this, we can obtain the following state-space model derived from the dynamics model in \cref{dogdynamics}. 
\begin{align}
\frac{d}{{dt}} \underbrace{\left[ {\begin{array}{*{20}{c}}
h\\
\widetilde{\mathbf{e}}\\
\mathbf{v}\\
\mathbf{w}
\end{array}} \right]}_{\mathbf{s}} &= \underbrace{\left[ {\begin{array}{*{20}{c}}
{{\mathbf{O}_{1 \times 1}}}&{{\mathbf{O}_{1 \times 3}}}&{1}&{{\mathbf{O}_{1 \times 5}}}\\
{{\mathbf{O}_{3 \times 1}}}&{{\mathbf{O}_{3 \times 3}}}&{{\mathbf{O}_{3 \times 3}}}&{\mathbf{R}(\phi ,\theta ,\psi)}\\
{{\mathbf{O}_{3 \times 1}}}&{{\mathbf{O}_{3 \times 3}}}&{{\mathbf{O}_{3 \times 3}}}&{{\mathbf{O}_{3 \times 3}}}\\
{{\mathbf{O}_{3 \times 1}}}&{{\mathbf{O}_{3 \times 3}}}&{{\mathbf{O}_{3 \times 3}}}&{{\mathbf{O}_{3 \times 3}}}
\end{array}} \right]}_{\widehat{\mathbf{A}}(\mathbf{s})} \cdot \left[ {\begin{array}{*{20}{c}}
h\\
\widetilde{\mathbf{e}}\\
\mathbf{v}\\
\mathbf{w} 
\end{array}} \right] \nonumber\\
&\hspace{4.3cm}+ \underbrace{\left[ {\begin{array}{*{20}{c}}
{{\mathbf{O}_{1 \times 3}}}&{{\mathbf{O}_{1 \times 3}}}&{{\mathbf{O}_{1 \times 3}}}&{{\mathbf{O}_{1 \times 3}}}\\
{{\mathbf{O}_{3 \times 3}}}&{{\mathbf{O}_{3 \times 3}}}&{{\mathbf{O}_3}}&{{\mathbf{O}_{3 \times 3}}}\\
{{\mathbf{O}_{3 \times 3}}}&{{\mathbf{O}_{3 \times 3}}}&{{\mathbf{I}_3}}&{{\mathbf{O}_{3 \times 3}}}\\
{{\mathbf{O}_{3 \times 3}}}&{{\mathbf{O}_{3 \times 3}}}&{{\mathbf{O}_3}}&{{\mathbf{I}_3}}
\end{array}} \right]}_{\widehat{\mathbf{B}}(\mathbf{s})} \!~\cdot~\! \mathbf{a}  + \mathbf{g}(\mathbf{s}), \label{exp1101cc11} 
\end{align}
where $\mathbf{g}(\mathbf{s})$ represents the model mismatch in the updated model. The sampling technique changes the continuous-time dynamics model in \cref{exp1101cc11} into a discrete-time model:
\begin{align}
\mathbf{s}(k+1) = (\mathbf{I}_{10} + T \cdot \widehat{\mathbf{A}}(\mathbf{s})) \cdot \mathbf{s}(k) + T \cdot \widehat{\mathbf{B}}(\mathbf{s}) \cdot \mathbf{a}(k) + T \cdot \mathbf{g}(\mathbf{s}), \nonumber
\end{align}
from which we obtain the knowledge of $\mathbf{A}(\mathbf{s}(k))$ and $\mathbf{B}(\mathbf{s}(k))$ in \cref{realsyserror} as follows: 
\begin{align}
{\mathbf{A}}(\mathbf{s}(k)) = \mathbf{I}_{10} + T \cdot \widehat{\mathbf{A}}(\mathbf{s}(k)) ~~\text{and}~~{\mathbf{B}}(\mathbf{s}(k)) = T \cdot \widehat{\mathbf{B}}(\mathbf{s}(k)), \label{ppkoooa}
\end{align}
where $T = \frac{1}{{30}}$ second, i.e., the sampling frequency is 30 Hz. 

\subsubsection{Parameters for Real-Time Patch Computing} 
Given that $T = \frac{1}{{30}}$ seconds, to satisfy \cref{assm}, we let $\kappa = 0.01$. For other parameters, we let $\chi = 0.3, \alpha = 0.9, \eta = 0.6, \theta = 0.3$, and $\phi = 0.15$. This ensures that the inequalities presented in \cref{repf1} are satisfied.

\subsection{DRL-Student Design} 
In this experiment, our DRL-Students build upon two state-of-the-art safe DRLs: CLF-DRL, as proposed in \cite{westenbroek2022lyapunov}, and Phy-DRL outlined in \cite{Phydrl1, Phydrl2}. A critical difference is CLF-DRL has purely data-driven action policy, while Phy-DRL has a residual 
action policy: 
\begin{align}
\mathbf{a}_{\text{student}}(k) = \underbrace{\mathbf{a}_{\text{drl}}(k)}_{\text{data-driven}} + \underbrace{\mathbf{a}_{\text{phy}}(k) ~(= \mathbf{F} \cdot \mathbf{s}(k))}_{\text{model-based}}, \nonumber
\end{align}
where
\begin{align}
&\mathbf{F} = \left[\!\! {\begin{array}{*{20}{c}}
0 \!\!&\!\! 0 \!\!&\!\! 0 \!\!&\!\! 0 \!\!&\!\! -23.65 \!\!&\!\! 0 \!\!&\!\! 0 \!\!&\!\! 0 \!\!&\!\! 0 \!\!&\!\! 0\\
0 \!\!&\!\! 0 \!\!&\!\! 0 \!\!&\!\! 0 \!\!&\!\! 0 \!\!&\!\! -20 \!\!&\!\! 0 \!\!&\!\! 0 \!\!&\!\! 0 \!\!&\!\! 0\\
-63.11 \!\!&\!\! 0 \!\!&\!\! 0 \!\!&\!\! 0 \!\!&\!\! 0 \!\!&\!\! 0 \!\!&\!\! -20 \!\!&\!\! 0 \!\!&\!\! 0 \!\!&\!\! 0\\
0 \!\!&\!\! -32.51 \!\!&\!\! 0 \!\!&\!\! 0 \!\!&\!\! 0 \!\!&\!\! 0 \!\!&\!\! 0 \!\!&\!\! -21.88 \!\!&\!\! 0 \!\!&\!\! 0\\
0 \!\!&\!\! 0 \!\!&\!\! -32.51 \!\!&\!\! 0 \!\!&\!\! 0 \!\!&\!\! 0 \!\!&\!\! 0 \!\!&\!\! 0 \!\!&\!\! -21.88 \!\!&\!\! 0\\
0 \!\!&\!\! 0 \!\!&\!\! 0 \!\!&\!\! -30.95 \!\!&\!\! 0 \!\!&\!\! 0 \!\!&\!\! 0 \!\!&\!\! 0 \!\!&\!\! 0 \!\!&\!\! -22.28\\
\end{array}} \!\!\right]. \label{fmatrix}
\end{align}

A comparison of these models will be conducted, with all other configurations remaining the same. In all models, the parameter for defining DRL-Student's self-learning space in \cref{aset3} is the same as $\eta = 0.3$. Both CLF-DRL and Phy-DRL adopt the safety-embedded reward: 
\begin{align}
\mathcal{R}\left(\mathbf{s}(t), \mathbf{a}_{\text{student}}(t) \right) = \mathbf{s}^{\top}(t) \cdot \overline{\mathbf{P}} \cdot \mathbf{s}(t) - \mathbf{s}^{\top}(t+1) \cdot \overline{\mathbf{P}} \cdot \mathbf{s}(t+1), \label{stability}
\end{align}
where 
\begin{align}
&\overline{\mathbf{P}} = \left[ {\begin{array}{*{20}{c}}
122.1647861 & 0 & 0 & 0 & 2.487166 & 0 & 0  \\
0 & 1.5\text{e}-5 & 0 & 0 & 0 & 0 & 0  \\
0 & 0 & 1.5\text{e}-5 & 0 & 0 & 0 & 0  \\
0 & 0 & 0 & 480.6210753 & 0 & 0 & 0 \\
2.487166 & 0 & 0 & 0 & 3.2176033 & 0 & 0 \\
0 & 0 & 0 & 0 & 0 & 1.3\text{e}-6 & 0 \\
0 & 0 & 0 & 0 & 0 & 0 & 1.2\text{e}-6 \\
0 & 9\text{e}-5 & 0 & 0 & 0 & 0 & 0 \\
0 & 0 & 9\text{e}-5 & 0 & -0 &0 & 0 \\
0 & 0 & 0 & 155.2954559 & 0 & 0 
\end{array}} \right. \nonumber\\
& \hspace{7.15cm} \left.   {\begin{array}{*{20}{c}}
0 & 0 & 0 \\
9\text{e}-5 & 0 & 0 \\
0 & 9\text{e}-5 & 0 \\
0 & 0 & 155.2954559 \\
0 & 0 & 0 \\
0 & 0 & 0 \\
0 & 0 & 0 \\
7\text{e}-5 & 0 & 0 \\
0 & 7\text{e}-5 & 0 \\
0 & 0 & -0,156.3068079
\end{array}}     \right]. \label{pmatrix}
\end{align}

\newpage
\section{Experiment Design: Quadruped Robot: Wild Environment} \label{go2issac}
\begin{wrapfigure}{r}{0.55\textwidth}
\vspace{-0.55cm}
\begin{center}
\includegraphics[width=0.55\textwidth]{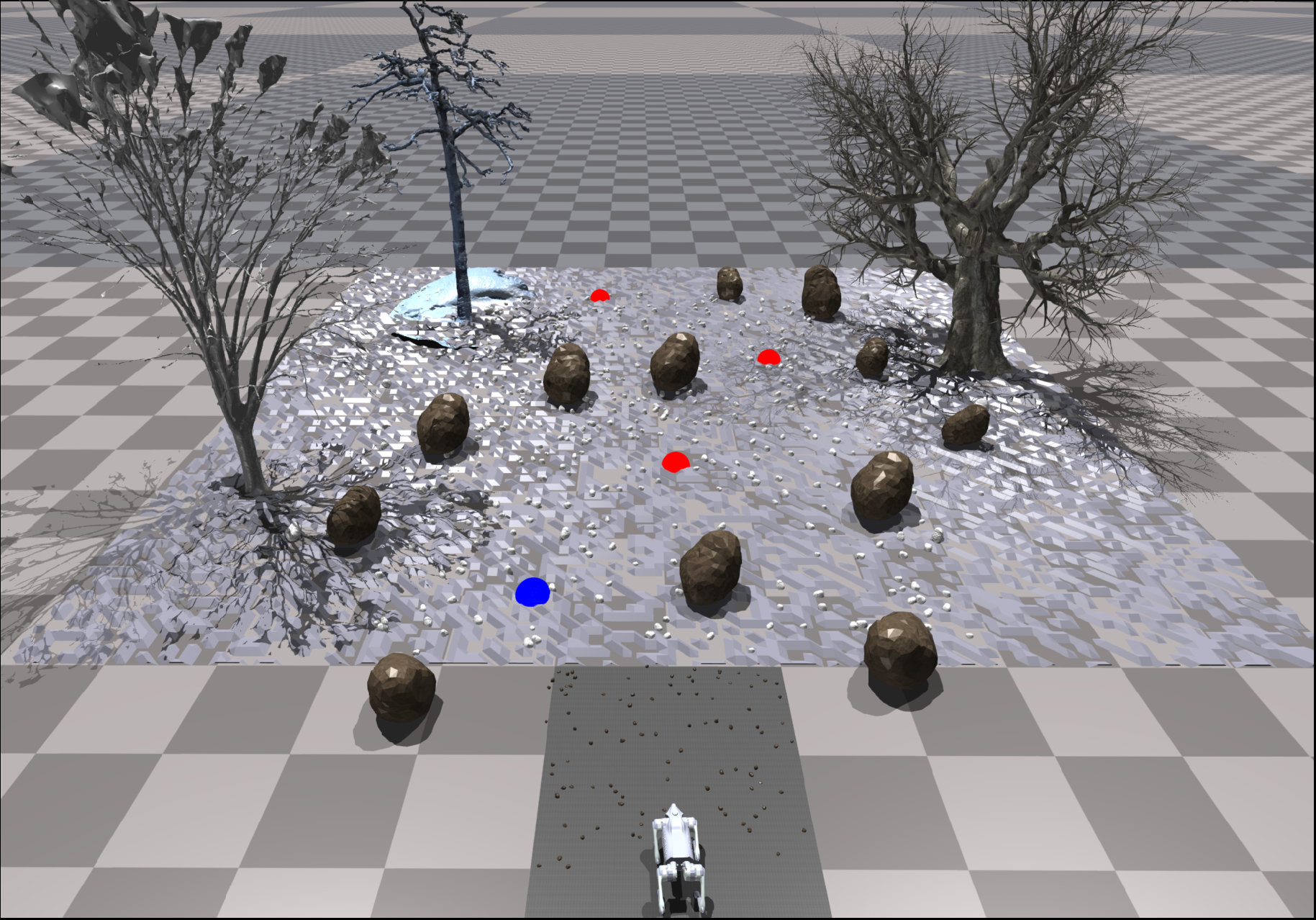}
  \end{center}
  \vspace{-0.10cm}
\caption{An overview of wild environments. The quadruped robot navigates to its destination by following waypoints with minimal costs through runtime learning. The next waypoint is highlighted in \textcolor{blue}{blue}, while the remaining waypoints are marked in \textcolor{red}{red}.}
  \label{wildenvironment}
  \vspace{-0.5cm}
\end{wrapfigure}
We utilize NVIDIA Isaac Gym \cite{isaacgym} and the Unitree Go2 robot to evaluate our Real-DRL approach. In Isaac Gym, we create dynamic wild environments that include various natural elements, such as unstructured terrain, movable stones, and obstacles like trees and large rocks. These environments are further complicated by unexpected patches of ice that result in low friction. We can conclude that the Go2 robot operates in non-stationary and unpredictable conditions. 

Additionally, we arrange multiple waypoints at reasonable intervals across the terrain to simulate real-world tasks for the robot, such as outdoor exploration and search-and-rescue operations. The goal is to guide the robot to its destination by sequentially following these waypoints while minimizing traversal costs and adhering to safety constraints. The established wild environments are detailed in \cref{wildenvironment}.

Our Real-DRL must integrate perception and planning to complete tasks safely in real-time wild environments. The integrated Real-DRL framework is illustrated in \cref{learningframework}, where the perception and planning components form the navigation module.

\begin{figure}[http]
\centerline{\includegraphics[width=0.98\textwidth]{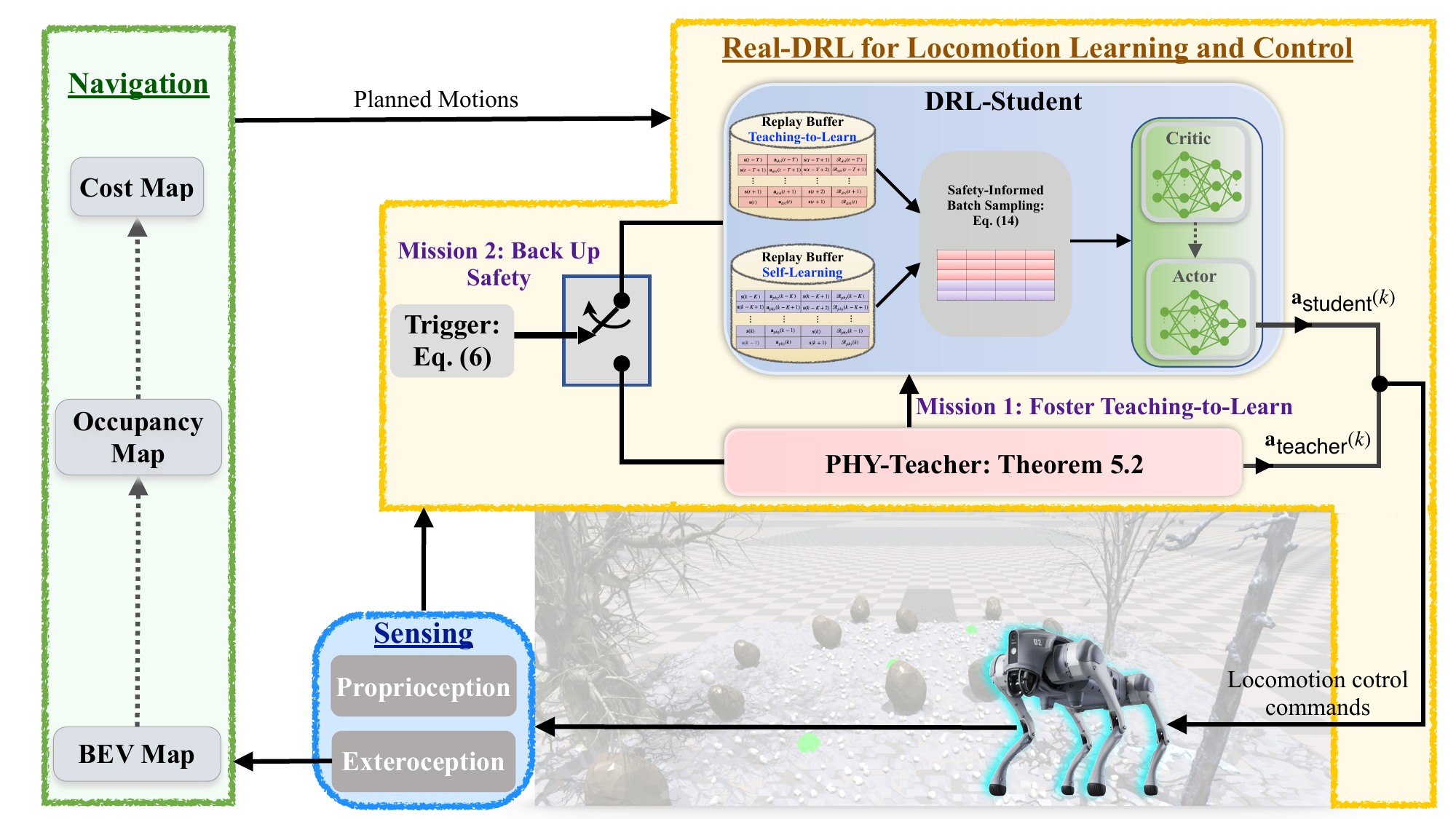}}
\caption{\textbf{Integration of Real-DRL with Navigation}: A seamless integration of perception, planning, learning, and control is essential for robotic operation in complex environments. The  \textbf{sensing} module collects both proprioceptive and exteroceptive data from the robot as it navigates challenging surroundings. The \textbf{navigation} module constructs a Bird's Eye View (BEV) map by filtering Regions of Interest, generates an occupancy map using height thresholding, and computes a cost map with the Fast Marching method. Using this cost map, a planner generates planned motions for the locomotion controller. In the \textbf{locomotion learning and control} module, the trigger facilitates interaction between the PHY-Teacher and the DRL-Student to ensure the robot's safety. This setup allows the DRL-Student to learn from the PHY-Teacher, while also engaging in self-learning to develop a safe and high-performance locomotion policy.} \label{learningframework}
\end{figure}

\subsection{Learning Configurations and Computation Resources} \label{ccghjloyh}
The observation space is defined as $\mathbf{o} = [\mathbf{z}, \mathbf{s}]^\top$, where $\mathbf{z}$ represents the exteroceptive observation and $\mathbf{s}$ indicates the proprioceptive tracking-error vector. A waypoint is considered successfully reached when the Euclidean distance between the robot's center of mass and the target waypoint is less than 0.3 meters. 

The actor and critic networks are implemented as Multi-Layer Perceptrons with four fully connected layers. The output dimensions are structured as follows: the critic network has output sizes of 256, 128, 64, and 1, while the actor network has output sizes of 256, 128, 64, and 6. The input for the critic network consists of both the observation vector and the action vector, while the actor network takes only the observation vector as input. The activation functions used in the first three layers of both networks are ReLU, while the output layer of the actor network employs the Tanh function, and the critic network uses a linear activation function. Additionally, the discount factor $\gamma$ is set to 0.9, and both the critic and actor networks have learning rates of 0.0003. Finally, the batch size is configured to be 512.

For the computational resources, we utilized a desktop running Ubuntu 22.04, equipped with a 13th Gen Intel® Core™ i9-13900K 16-core processor, 64 GB of RAM, and an NVIDIA GeForce GTX 4090 GPU. The algorithm was implemented in Python using the PyTorch framework, and the MATLAB LMI toolbox was employed for solving real-time patches.

\subsection{Safety Conditions} 
The Real-DRL is designed to track the planned movements generated by the navigation module while ensuring that the system's states remain within a predefined safety set. This is crucial for avoiding falls and collisions with obstacles. The safety set is defined as follows: $\mathbb{S} = \{\mathbf{s} \in \mathbb{R}^{10} \mid  \left| {\text{CoM x-velocity} - r_{v_x}} \right| \le 1.5~\text{m/s}, \left| \text{CoM z-height} - r_{z} \right| \le 0.6~\text{m}, \left| \text{raw} - r_{\text{raw}} \right| \le 0.8~\text{rad}, \left| \text{pitch} - r_{\text{pitch}} \right| \le 1~\text{rad}\}$. The action space is defined as: $\mathbb{A} = \{\mathbf{a} \in \mathbb{R}^{6} \mid  \left| \mathbf{a} \right| \le [25,25,25,50,50,50]^{\top} \}$.  These formulations ensure that the system operates within safe parameters while effectively following the intended navigation paths.

\subsection{PHY-Teacher Design} 
The physics model knowledge $({\mathbf{A}}(\mathbf{s}(k)), {\mathbf{B}}(\mathbf{s}(k))$ used for the design of PHY-Teacher is described in \cref{ppkoooa}. Given sampling period $T = \frac{1}{30}$ second, we set $\kappa = 0.0008$ to satisfy the conditions outlined in \cref{assm}. Additionally, the following parameters are defined: $\chi = 0.15$, $\alpha = 0.9$, $\eta = 0.6$, $\theta = 0.3$, and $\phi=0.2$, ensuring that the inequalities in \cref{repf1} are upheld.

\subsection{DRL-Student Design} 
In this experiment, our DRL-Students build upon two state-of-the-art safe DRLs: CLF-DRL, as proposed in \cite{westenbroek2022lyapunov}, and Phy-DRL, outlined in \cite{Phydrl1, Phydrl2}. A critical difference is CLF-DRL has purely data-driven action policy, while Phy-DRL has a residual 
action policy: 
\begin{align}
\mathbf{a}_{\text{student}}(k) = \underbrace{\mathbf{a}_{\text{drl}}(k)}_{\text{data-driven}} + \underbrace{\mathbf{a}_{\text{phy}}(k) ~(= \mathbf{F} \cdot \mathbf{s}(k))}_{\text{model-based}}, \nonumber
\end{align}
where $\mathbf{F}$ is given in \cref{fmatrix}. A comparison of these models will be conducted, with all other configurations remaining the same, as detailed below.

\subsubsection{Reward} 
The reward for the DRL-Student primarily focuses on three key aspects: robot safety, travel time, and energy efficiency.

$\blacksquare$ \textbf{Safety}. We consider a Control-Lyapunov-like reward in \cite{westenbroek2022lyapunov}, which incorporates both safety and stability considerations:
\begin{align}
\mathcal{R}_{\text{safety}} = \mathbf{e}^{\top}(t) \cdot \overline{\mathbf{P}} \cdot \mathbf{e}(t) - \mathbf{e}^{\top}(t+1) \cdot \overline{\mathbf{P}} \cdot \mathbf{e}(t+1), \label{stability}
\end{align}
where $\overline{\mathbf{P}}$ is given in \cref{pmatrix}.

$\blacksquare$ \textbf{Travel Cost}. The Euclidean distance between the robot and the waypoint is defined as:
\begin{align}
d(t) = \|\hat{\mathbf{x}}_{b}(t) -\hat{\mathbf{p}}_{wp}\|_2  \label{distance_wp}
\end{align}
where $\hat{\mathbf{p}}_{wp}$ is the location of next waypoint, and $\hat{\mathbf{x}}_{b}$ is the position of the robot base in the world frame.
Inspired by \cite{geng2020deepreinforcementlearningbased} and \cite{minimumtime}, the navigation reward is defined as:
\begin{align}
\mathcal{R}_{\text{navigation}}  = c_1 \cdot \hat{r}_{dis}(t) +~c_2 \cdot \hat{r}_{wp} + \hat{r}_{obs} \label{travel}
\end{align}
where $\hat{r}_{dis}(t) = d(t) - d(t-1)$ is the reward for forwarding the waypoint. $\hat{r}_{wp} = e^{-\lambda \cdot T_{reach}}$ rewards the robot as it reaches the waypoint. $\lambda \in (0,1)$ is a time decay factor and $T_{reach}$ denotes the time step when the waypoint is reached. The $\hat{r}_{obs}$ serves as a penalty when the quadruped collides with the obstacles. $c_1$ and $c_2$ are used hyperparamters. This reward structure incentivizes the robot to achieve an efficient and collision-free navigation through the waypoints.

$\blacksquare$ \textbf{Energy Consumption}. Power efficiency remains a major challenge for robots in outdoor settings. We model the motor as a non-regenerative braking system \cite{yang2022fast}:
\begin{align}
p_{motor} = \max \{\underbrace{\tau_{m} \cdot \omega_{m}}_{output\ power} + \underbrace{L_{copper} \cdot \tau_{m}^2}_{heat\  dissipation},~0\} \label{energy}
\end{align}
where $\tau_m$ and $\omega_m$ are motor's torque and angular velocity respectively. $L_{copper}$ is copper loss coefficient.
Taking $c_m$ as a hyperparameter, we define the energy consumption reward:
\begin{align}
\mathcal{R}_{\text{energy}} = -c_m \cdot p_{motor} \label{energyrewardadd}
\end{align}

The ultimate reward function, designed to guide the DRL-Student in learning a safe and high-performance policy as defined in \cref{bellman}, is given by integrating the components from above \cref{stability,travel,energyrewardadd}, i.e., $\mathcal{R}\left(\mathbf{s}(t), \mathbf{a}_{\text{student}}(t) \right) = \mathcal{R}_{\text{safety}} +  \mathcal{R}_{\text{navigation}} + \mathcal{R}_{\text{energy}} + \widehat{c}  \cdot \mathcal{R}_{\text{auxiliary}}$, where $\mathcal{R}_{\text{auxiliary}}$ denotes the auxiliary reward for tracking velocity and orientation, with a small coefficient $\widehat{c}$ to promote smooth locomotion. In experiment, we choose $c_1=5, c_2=10, c_m=0.02$ and $\widehat{c} = 0.5$.

\subsubsection{Self-Learning Space} 
The parameter used to define the self-learning space for DRL-Student, as mentioned in \cref{aset3}, is set at $\eta = 0.6$.

\subsubsection{Safety-Informed Batch Sampling}
For the safety-informed batch sampling in \cref{bsma}, we let $\rho_1 = 0.01$ and $\rho_2 = 0.05$

\subsection{Ablation Experiment} \label{aabbexpfinal}
We recall that the model knowledge of PHY-Teacher, along with its resulting action policy and control goal, is dynamic and responsive to complex and dynamic operating environments in real-time. In contrast, the physics or dynamics models used in fault-tolerant DRLs, such as neural Simplex \cite{phan2020neural}, runtime assurance \cite{brat2023runtime,sifakis2023trustworthy,chen2022runtime}, and model predictive shielding \cite{bastani2021safe, banerjee2024dynamic}, are static. This difference suggests that current fault-tolerant DRLs struggle to guarantee safety in complex and highly dynamic environments. To illustrate this point, we compare the PHY-Teachers with dynamic (real-time) and static models in an unstructured terrain filled with random stones. The comparison video can be found at
\url{https://www.youtube.com/shorts/Ct_xXlohSiM}. This video demonstrates that when using the static model, the PHY-Teacher cannot guarantee safety in the environment. In contrast, with the real-time dynamic model, our PHY-Teacher can not only guarantee safety but also withstand unexpected sudden kicks.

\newpage
\section{Experiment Design: Cart-Pole System} \label{cartpoledesign}
We use the cart-pole simulator available in OpenAI Gym \cite{OpenAI-gym} to emulate the real system, allowing the deployed Real-DRL to learn from scratch. The unknown unknowns introduced in this scenario consist of disturbances that affect the DRL-Student's action commands and the friction force. These disturbances are generated using a randomized Beta distribution. In \cref{unkunkwhy}, we explain why the randomized Beta distribution serves as a form of unknown unknown.

\subsection{Learning Configurations and Computation Resources} \label{mppgcong}
The actor and critic networks are designed as Multi-Layer Perceptrons 
consisting of four fully connected layers. The output dimensions for the critic and actor networks are 256, 128, 64, and 1, respectively. The activation functions for the first three layers are ReLU, while the final layer of the actor network uses the Tanh function and the critic network employs a linear activation function. The input to the critic network is $[\mathbf{s}; \mathbf{a}]$, which combines the state and action, while the actor network takes only the state $\mathbf{s}$ as its input. We have set the discount factor $\gamma$ to 0.9, and both the critic and actor networks have the same learning rate of 0.0003. The batch size is $L = 512$, and each episode consists of 1,000 steps, with a sampling frequency of 50 Hz.

For the computational resources, we utilized a desktop equipped with Ubuntu 22.04, a 12th Gen Intel® Core™ i9-12900K 16-core processor, 64 GB of RAM, and an NVIDIA GeForce GTX 3090 GPU. The algorithm was implemented in Python using the TensorFlow framework, and we employed the Matlab LMI toolbox for solving real-time patches.

\subsection{Safety Conditions} 
The objective of Real-DRL is to stabilize the system at the equilibrium point $\mathbf{s}^* = [0,0,0,0]^\top$ while ensuring that the system states remain within the safety set $\mathbb{S} = \left\{{\left. \mathbf{s} \in \mathbb{R}^{4} ~\right|~ |x| \le 1, ~|\theta|  < 1.0} \right\}$. The action space is defined as ${\mathbb{A}} = \left\{ {\left. {\mathbf{a} \in {\mathbb{R}}} \right| |\mathbf{a}| \le 50  }\right\}$.

\subsection{PHY-Teacher Design} 
The physics-model knowledge about the dynamics of cart-pole systems for the design of PHY-Teacher is from the following dynamics model presented in \cite{florian2007correct}: 
\begin{subequations}\label{dymcarpole}
\begin{align}
    \ddot{\theta} &= \frac{g\sin\theta+\cos \theta\left(\frac{-F-m_{p} l \dot{\theta}^{2} \sin \theta}{m_{c}+m_{p}}\right)}{l\left(\frac{4}{3}-\frac{m_{p} \cos ^{2} \theta}{m_{c}+m_{p}}\right)}, \\
    \ddot{x} &=\frac{F+m_{p} l\left(\dot{\theta}^{2} \sin \theta-\ddot{\theta} \cos \theta\right)}{m_{c}+m_{p}},
\end{align} 
\end{subequations}
whose physical representations of the model parameters can be found in following \cref{notation}.
\begin{table}[ht]
    \caption{Notation table}
    \vspace{0.15cm}
    \centering
    \begin{tabular}{|c|c|c|c|}
        \toprule
        \multicolumn{2}{|c|}{Notation} & Value & Unit \\ \cline{1-4}
        $m_c$  & {mass of cart} & {0.94} & {$kg$}  \\    
        $m_p$  & {mass of pole} & {0.23} & {$kg$} \\
        $g$ & {gravitational acceleration}  & {9.8} & {$m \cdot s^{-2}$} \\
        $l$ & {half length of pole}  & {0.32} & {$m$} \\ 
        $F$ & {actuator input} & {[-50, 50]} & {$N$} \\
        $x$  & {position of cart} & {[-1, 1]} & {$m$}  \\
        $\dot{x}$ & {velocity of cart} & {[-3, 3]} & {$m \cdot s^{-1}$} \\
        $\theta$  & {angle of pole} & {[-1, 1]} & {$rad$}  \\
        $\dot{\theta}$ & {angular velocity of pole} & {[-4.5, 4.5]} & {$rad \cdot s^{-1}$} \\ 
        \bottomrule
    \end{tabular}
    \label{notation}
\end{table}

\subsubsection{Physics-Model Knowledge}
The PHY-Teacher converts the dynamics model \eqref{dymcarpole} into a state-space model as follows:
\begin{align}
\frac{d}{{dt}} \underbrace{\left[ {\begin{array}{*{20}{c}}
x\\
{\dot x}\\
\theta \\
{\dot \theta }
\end{array}} \right]}_{\mathbf{s}} &= \underbrace{\left[ {\begin{array}{*{20}{c}}
0&1&0&0\\
0&0&{\frac{{ - {m_p}g\sin \theta \cos \theta }}{{\theta [\frac{4}{3}({m_c} + {m_p}) - {m_p}{{\cos }^2}\theta ]}}}&{\frac{{\frac{4}{3}{m_p}l\sin \theta \dot \theta }}{{\frac{4}{3}({m_c} + {m_p}) - {m_p}{{\cos }^2}\theta }}}\\
0&0&0&1\\
0&0&{\frac{{g\sin \theta ({m_c} + {m_p})}}{{l\theta [\frac{4}{3}({m_c} + {m_p}) - {m_p}{{\cos }^2}\theta ]}}}&{\frac{{ - {m_p}\sin \theta \cos \theta \dot \theta }}{{\frac{4}{3}(m{}_c + {m_p}) - {m_p}{{\cos }^2}\theta }}}
\end{array}} \right]}_{\widehat{\mathbf{A}}(\mathbf{s})} \cdot \left[ {\begin{array}{*{20}{c}}
x\\
{\dot x}\\
\theta \\
{\dot \theta }
\end{array}} \right] \nonumber\\
&\hspace{6.3cm}+ \underbrace{\left[ {\begin{array}{*{20}{c}}
0\\
{\frac{{\frac{4}{3}}}{{\frac{4}{3}({m_c} + {m_p}) - {m_p}{{\cos }^2}\theta }}}\\
0\\
{\frac{{ - \cos \theta }}{{l[\frac{4}{3}({m_c} + {m_p}) - {m_p}{{\cos }^2}\theta ]}}}
\end{array}} \right]}_{\widehat{\mathbf{B}}(\mathbf{s})} \cdot \underbrace{F}_{\mathbf{a}}, \label{exp1101cc} 
\end{align}
where $\widehat{\mathbf{A}}(\mathbf{s})$ and $\widehat{\mathbf{B}}(\mathbf{s})$ are known to the PHY-Teacher. The sampling technique transforms the continuous-time dynamics model \eqref{exp1101cc} to the discrete-time one:
\begin{align}
\mathbf{s}(k+1) = (\mathbf{I}_4 + T \cdot \widehat{\mathbf{A}}(\mathbf{s})) \cdot \mathbf{s}(k) + T \cdot \widehat{\mathbf{B}}(\mathbf{s}) \cdot \mathbf{a}(k), \nonumber
\end{align}
from which we obtain the model knowledge $\mathbf{A}(\mathbf{s}(k))$ and $\mathbf{B}(\mathbf{s}(k))$ in \cref{realsyserror} as 
\begin{align}
\mathbf{A}(\mathbf{s}(k)) = \mathbf{I}_4 + T \cdot \widehat{\mathbf{A}}(\mathbf{s}(k)), ~~~~~~~{\mathbf{B}}(\mathbf{s}(k)) = T \cdot \widehat{\mathbf{B}}(\mathbf{s}(k)), \label{ppko}
\end{align}
where $T = \frac{1}{{50}}$ second, i.e., the sampling frequency is 50 Hz.

\subsubsection{Parameters for Real-Time Patch Computing} 
Given that $T = \frac{1}{{50}}$ second, to satisfy \cref{assm}, we let $\kappa = 0.0008$. Additionally, the parameters are defined as follows: $\chi = 0.3, \alpha = 0.9, \eta = 0.7, \theta = 0.5$, and $\phi = 0.01$. These values are chosen to ensure that the inequalities stated in \cref{repf1} hold true. 

\subsection{DRL-Student Design} 
In the ablation studies, the DRL-Student builds on the CLF-DRL proposed in \cite{westenbroek2022lyapunov}. Its CLF reward in this experiment is 
\begin{align}
\mathcal{R}( {\mathbf{s}(k),\mathbf{a}_{\text{student}}}(k)) = (\mathbf{s}^\top(k) \cdot  \overline{\mathbf{P}}  \cdot \mathbf{s}(k)  - {\mathbf{s}^\top(k+1)} \cdot \overline{\mathbf{P}}  \cdot \mathbf{s}(k+1)) \cdot \gamma_1 - \mathbf{a}^{2}_{\text{student}}(k) \cdot \gamma_2, \nonumber
\end{align}
where we let $\gamma_1 = 1$, $\gamma_2 = 0.015$, and 
\begin{align} 
\overline{\mathbf{P}}  = \left[{\begin{array}{*{20}{c}}
54.1134178606985 & 26.2600592637275 & 61.7975412804215 & 12.9959418258126 \\
26.2600592637275 & 14.3613985149923 & 34.6710819094179 & 7.27321583818861 \\
61.7975412804215 & 34.6710819094179 & 88.7394386456256 & 18.0856894519164 \\
12.9959418258126 & 7.27321583818861 & 18.0856894519164 & 3.83961074325448 
\end{array}} \right]. \nonumber
\end{align}
The parameter for defining DRL-Student's self-learning space in \cref{aset3} is $\eta = 0.7$.

The ablation studies involve experimental comparisons of our Real-DRL models with and without the teaching-to-learn mechanism. When the teaching-to-learn mechanism is used, the DRL-Student utilizes both a self-learning replay buffer and a teaching-to-learn replay buffer, with each buffer configured to hold a maximum of 100,000 instances. Conversely, when the teaching-to-learn mechanism is not employed, the DRL-Student relies on a single replay buffer to store self-learning experience data. To ensure a fair comparison, the size of this single buffer is set to match the combined capacity of the two buffers, totaling 200,000.

\subsection{Auxiliary Experiment}
This section provides additional experimental results that complement the main ablation study findings.

\subsubsection{Safety-Informed Batch Sampling}
\begin{figure}
    \centering
{\includegraphics[width=0.99\textwidth]{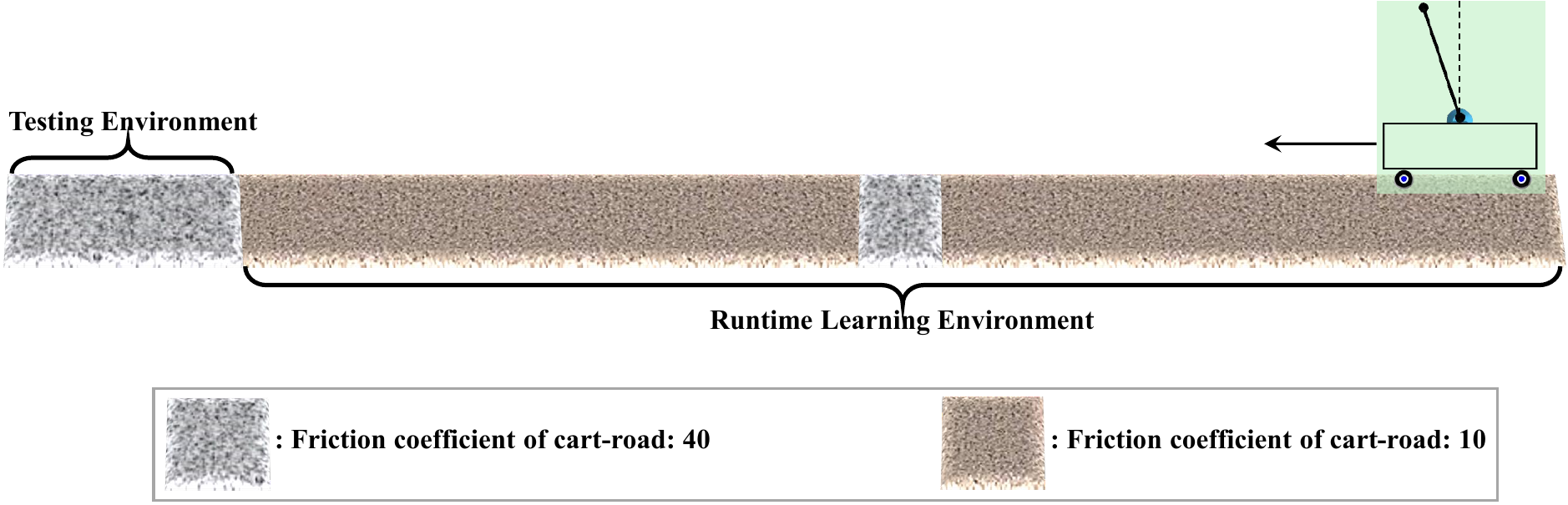}} 
    \centering
    \vspace{0.1cm}
\caption{Learning and testing environments. The learning environment has a corner case (pink area), which is also the testing environments.}
\label{imcorner}
\end{figure}
For the safety-informed batch sampling in \cref{bsma}, we let $\rho_1 = 10$ and $\rho_2 = 0.1$. To demonstrate the proposed safety-informed batch sampling method for addressing experience imbalance, we create a learning environment characterized by this imbalance, as shown in \cref{imcorner}. Specifically, the brown area closely resembles the pre-training environment of the DRL-Student, representing the majority of the continual learning environment for Real-DRL, accounting for 90\% of the cases. In contrast, the gray area indicates a corner case that significantly differs from the DRL-Student's pre-training conditions, comprising only 10\% of the total runtime learning cases. This corner-case environment also serves as the testing environment for the DRL-Student after it has completed continual learning.

\begin{figure}[http]
    \centering
{\includegraphics[width=0.99\textwidth]{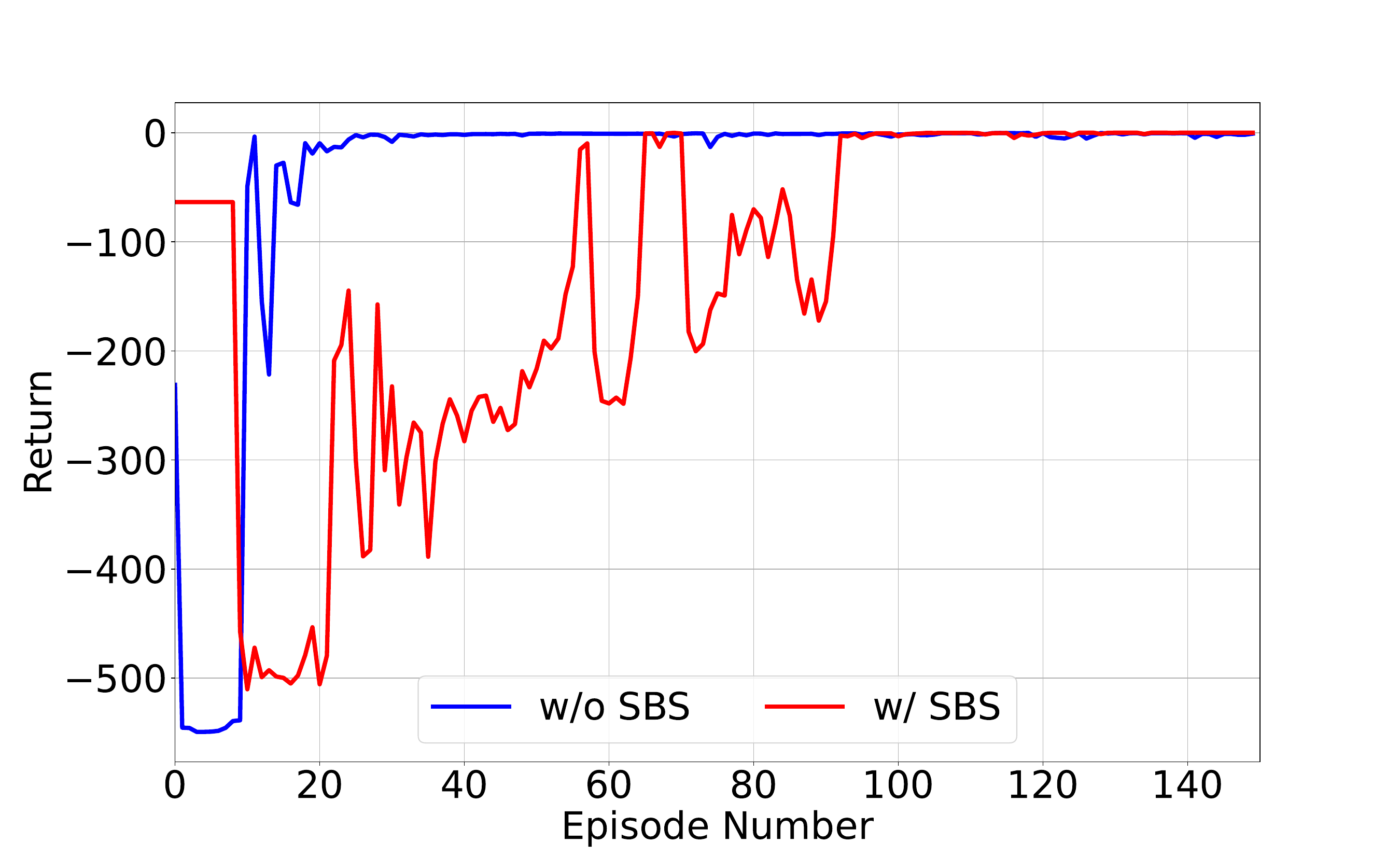}} 
    \centering
    \vspace{0.1cm}
\caption{Return trajectories of DRL-Student.}
\label{returntra}
\end{figure}

\cref{returntra} shows the return trajectories of the DRL-Student, which were learned using the Real-DRL framework. It compares the results obtained with our proposed safety-informed batch sampling method (referred to as 'w/ SBS') to those obtained without this method (denoted as 'w/o SBS'), which utilizes only a single united reply buffer. \cref{returntra} clearly shows that the tested DRL-Students, which were obtained by disabling PHY-Teacher, are selected from both Real-DRL models once their learning processes have reached convergence. This can ensure fair comparisons.

\subsubsection{Teaching-to-Learn Paradigm }
To evaluate the effect of the teaching-to-learn paradigm on policy learning, we introduce the metric of DRL-Student's episode-average reward, defined below.
 \begin{align}
\hspace{-0.1cm}\text{Episode-Average Reward} = \frac{\text{Return (i.e., cumulative reward) in one episode}}{\text{DRL-Student's total activation time in one episode}}. \label{episode-average-reward}
\end{align}

\subsubsection{Automatic Hierarchy Learning}
To further demonstrate PHY-Teacher's contributions to the claimed automatic hierarchy learning mechanism, we define the metric of PHY-Teacher's activation ratio as follows. 
\begin{align}
\hspace{-0.1cm}\text{PHY-Teacher's activation ratio} = \frac{\text{PHY-Teacher's total activation times in one episode}}{\text{one episode length}}, \label{activation-ratio}
\end{align}
which has a value range of $[0, 1]$. A ratio of 0 means PHY-Teacher is never activated throughout the entire episode of learning, while a ratio of 1 means PHY-Teacher completely dominates DRL-Student for the entire episode.

Given five random initial conditions, the phase plots, each running for 1000 steps, are displayed in Figure \cref{ratioha}. Observing \cref{ratioha}, we conclude that the DRL-Student has successfully learned from the DRL-Teacher how to ensure safety in episode 5: its action policy effectively confines the system states to the safety set (indicated by the red area). In episode 50, the PHY-Teacher is rarely activated by the condition in \cref{trigger}, as illustrated in Figure \cref{ablationexp} (b). Meanwhile, the action policy of the DRL-Student demonstrates improved mission performance, achieving faster clustering and a much closer proximity to the mission goal, which is located at the center of the safety set.

\begin{figure}[http]
    \centering
{\includegraphics[width=0.99\textwidth]{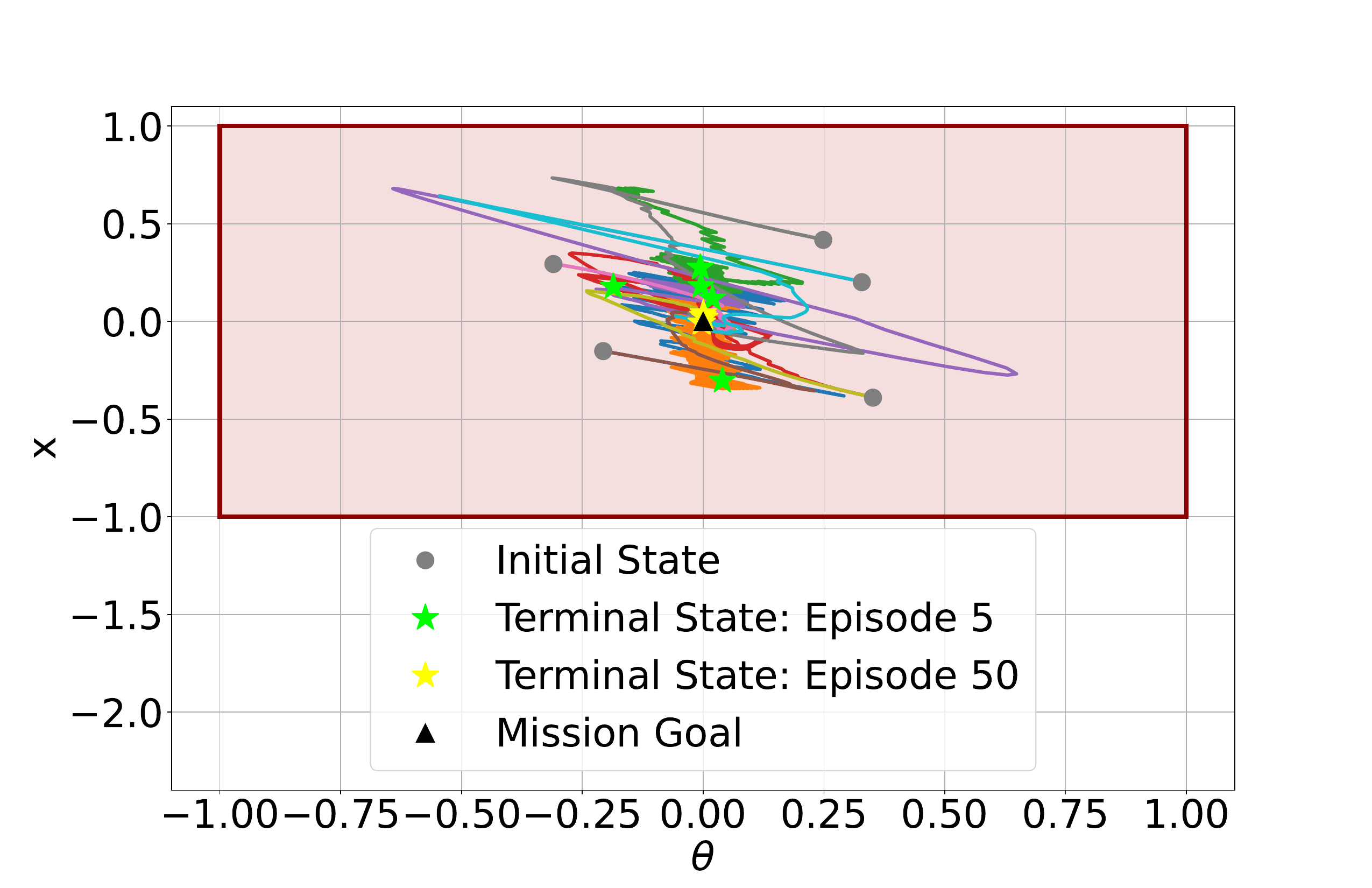} 
    \centering
    \vspace{0.1cm}
\caption{Demonstrating automatic hierarchy learning by phase plots, running over five random initial states within the safety set.}
\label{ratioha}}
\end{figure}

\end{document}